\newcommand{\cmark}{\checkmark}
\newcommand{\xmark}{\ding{53}}
\newcommand{\Omit}[1]{}
\newcommand{\Bigwedge}{\bigwedge\!\!\!\!\bigwedge}
\renewcommand{\phi}{\varphi}
\newcommand{\Set}[1]{\{ {#1} \}}
\newcommand{\dans}{\longrightarrow}
\newcommand{\model}[1]{\mbox{$[\hspace{-.38mm}[$}{#1}\mbox{$]\hspace{-.40mm}]$}}
\newcommand{\Form}{\mathcal{L_P}}
\newcommand{\Val}{\mathcal{W_P}}
\newcommand{\ee}{epistemic state}
\newcommand{\ees}{epistemic states}
\newcommand{\sinti}{{\bf(ESF1)}}
\newcommand{\sintii}{{\bf(ESF2)}}
\newcommand{\sintiii}{{\bf(ESF3)}}
\newcommand{\sintiv}{{\bf(ESF4)}}
\newcommand{\sintv}{{\bf(ESF5)}}
\newcommand{\sintvi}{{\bf(ESF6)}}
\newcommand{\sintvii}{{\bf(ESF7)}}
\newcommand{\sintviii}{{\bf(ESF8)}}
\newcommand{\sintviiiw}{{\bf(ESF8W)}}
\newcommand{\sintSD}{{\bf(ESF-SD)}}
\newcommand{\sintU}{{\bf(ESF-U)}}
\newcommand{\sintP}{{\bf(ESF-P)}}
\newcommand{\sintI}{{\bf(ESF-I)}}
\newcommand{\sintD}{{\bf(ESF-D)}}
\newcommand{\assgi}{{\bf 1}}
\newcommand{\assgii}{{\bf 2}}
\newcommand{\assgiii}{{\bf 3}}
\newcommand{\assgiv}{{\bf 4}}
\newcommand{\assgivp}{{\bf 4'}}
\newcommand{\miff}{{\em iff} }
\newcommand{\ie}{{\em i.e.}}
\newcommand{\reals}{\mathds{R}}
\newcommand{\nat}{\mathds{N}}
\newtheorem{observation}{Observation}
\newcommand{\ordre}[1]{$\begin{array}[c]{c} #1 \end{array}$}
\newcommand{\ordred}[2]{${#1}= \begin{array}[c]{c}
#2 \end{array}$}
\begin{document}

\title{\bf Impossibility in Belief Merging}%
\author{ Amílcar {Mata~Díaz}\inst{1} \and Ram\'{o}n {Pino~P\'{e}rez}\inst{2}}
\institute{ \em Departamento de Medición y Evaluación\\
 Facultad de Humanidades y Educación\\
  Universidad de Los Andes\\
   M\'{e}rida, Venezuela\\
\email{amilcarmata@ula.ve}\\[1.5mm]
\and
Departamento de Matemáticas\\
   Facultad de Ciencias\\
   Universidad de Los Andes\\
    M\'{e}rida, Venezuela\\
\email{pino@ula.ve}
}

\maketitle

\begin{center}
\bf June 1, 2016
\end{center}

\begin{abstract}
 With the aim of studying social properties of belief merging and having a better understanding of impossibility, we extend in three ways the framework of logic-based merging introduced by Konieczny and Pino Pérez.
   First, at the level of representation of the information, we pass from  belief bases   to complex epistemic states. Second, the profiles are represented as functions of finite societies to the set of epistemic states (a sort of vectors) and not as multisets of epistemic states. Third, we extend the set of rational postulates in order to consider the epistemic versions of the  classical  postulates of Social Choice Theory: Standard Domain, Pareto Property, Independence of Irrelevant Alternatives and Absence of Dictator. These epistemic versions of social postulates are given, essentially, in terms of the finite propositional logic.   We state some representation theorems for these operators. These extensions and representation theorems allow us to  establish an epistemic and very general version of Arrow's Impossibility Theorem.
  One of the interesting features of our result,  is that it holds for different representations of epistemic states; for instance conditionals, Ordinal Conditional functions and, of course, total preorders.
\end{abstract}

\hspace*{0.45cm}{\sf Keywords:}
Belief merging, epistemic states, Arrow's Impossibility Theorem

\section{Introduction}
 Belief merging studies the methods leading to extract a coherent piece of  information from many sources, which may be mutually contradictory. The applications in this domain go from decision making,
passing by medical diagnosis, policy planning, to automatic integration of data.
Understanding the theoretical model, its properties and limits  is important in order to know in which domains these methods are valid and to develop future applications.

From the early works in belief merging \cite{KP98,KP99,KP02,KLM04,KP05} there has been a feeling of closeness between this framework and
the Social Choice Theory \cite{Ar63,Suz02}. In particular, the way in which concrete merging operators are constructed -via the representation theorem-
evoques the methods for constructing social choice functions.
Some works \cite{M00,CGM06,EKM07} have been done in which
certain aspects of social choice are explored in belief merging, mainly impossibility \cite{Ar63,CK02} and strategy-proofness \cite{Gib73,Sat75}.

In some of the previous works, for instance in Chopra et al. \cite{CGM06}, the results of impossibility do not hold and the operators are strategy-proofness. In others works, like in Everaere et al. \cite{EKM07}, the behavior of some classes of merging operators is studied with respect to the manipulation. Therein is proved that some classes of merging operators are not strategy-proofness. In that work the impossibility problems are not addressed.

Facing these opposed behaviors of some classes of merging operators,
our goal is to better understand  what is happening in terms of general rational properties expressed in the logical language of
belief merging. In particular, we are interested in knowing if there is a general result of impossibility in this setting, such as the
Arrow's Impossibility Theorem in Social Choice Theory.

In order to accomplish our study,  we extend in this work the framework
  of logic-based merging   presented by Konieczny and Pino Pérez (KPP) \cite{KP98,KP99,KP02,KP05,KP11} in three ways. First, the representation of information is more general; actually, we continue with the abstract view of epistemic states present in \cite{MP11}.
  Second, the representation of profiles will be also more general; we adopt a functional view. Third,  we enrich the set of rational postulates by  introducing social postulates inspired on the classical Arrow's postulates in Social Choice Theory \cite{Ar63}. All these postulates are formulated in a logical setting.

  Remember that in the works of Konieczny and Pino Pérez  the information (the beliefs)  is defined by sets of  propositional sentences.  It is important to notice that the KPP
  framework generalizes the seminal belief revision operators presented by  Alchourrón, Gardenfors and Makinson \cite{AGM85,G88,KM92} in which
  the beliefs (alias epistemic states) have also the same kind of representation, that is, a set of propositional sentences.

However, the necessity of considering more complex representations of epistemic states has been stated in the work of Darwiche and Pearl \cite{DP97}, in particular in order to have a good behavior with respect to the iteration of the process. We adopt here, essentially,  the definition proposed by
Benferhat et ál. \cite{BKPP00}.

 We have to note that Meyer \cite{M00}
 gave a merging model of more complex epistemic states for belief merging. In Meyer's work the epistemic states
 were defined by  ranking functions over valuations. However, in this context, Meyer did not develop any study of the logical properties of these operators, and, therefore, does not give representation theorems for his operators. In this work we use an abstract definition of epistemic states that can be instantiated in different concrete representations. For instance, Ordinal Conditional Functions (ranking functions) proposed by Spohn \cite{Spo88}; some kind of conditionals like rational consequence relations proposed by Lehmann and Magidor \cite{LM92}; the total preorders  can be also concrete representations of epistemic states.

The notion of profile, first given as a multiset of belief bases, is extended in the current work. Actually, the multiset representation has implicitly
the anonymity  of the sources of information. Thus, this is not compatible with the existence of a dictator even not with the existence of a manipulator.
In order to have a representation of the profiles which is more flexible and eventually compatible with the existence of dictators and more general than multisets, we adopt the functional view: a profile will be a function of a finite set of agents in the set of epistemic states. Our view generalizes the notion of profiles as  lists or vectors
explicit in \cite{CGM06,EKM07}.

In the current work, we present some postulates given in terms of the finite propositional logic (the ``visible'' part of  epistemic states) which define some classes of merging operators of abstract epistemic states.
We state some key representation theorems for these operators.
In particular, we give the logical versions of classical postulates involved in the Arrow's Impossibility Theorem: Standard Domain, Pareto Property, Independence of Irrelevant Alternatives and Absence of Dictator. As a matter of fact, the postulate equivalent to Transitive Explanations will
be always satisfied by our basic operators. We prove some relationships between these new postulates and the classical postulates of merging. Then we establish a general result of impossibility: The epistemic state merging operators satisfying standard domain and independence are, in fact, dictatorial operators.

It is worth to note that, due to the fact that our notion of epistemic states has rich instantiations, our general Impossibility Theorem has instantiations in rich representations of epistemic states as those mentioned before: Ordinal Conditional Functions, rational consequence relations and total pre-orders.

It is interesting to remark that our approach makes a strong use of the Standard Domain Postulate. However, this postulate is incompatible with the representation of epistemic states as propositional formulas. Actually, we will prove that  such kind of representation is not rich enough in order to satisfy Standard Domain. This result strengthens the utility and the necessity of considering complex epistemic states.

Our Dictator notion is actually a weak notion of dictator: at the level of beliefs, the result of the merging entails the beliefs of the Dictator. It is for this notion that our Impossibility Theorem holds. But this theorem does not hold for a strong version of dictator, namely: that the result of the merging, at the level of beliefs,  be {\em equivalent} to the beliefs of the Dictator. This will be seen with some examples of operators at work.

The Syntax Independence Postulate, (IC3), is not present in its complete form. Actually this postulate implies a very strong form of  anonymity and therefore it is incompatible with a notion of dictator. In our setting, this postulate is slightly weakened in order to render it
compatible with the notion of dictator.

Following the early works of Katsuno and Mendelzon in Belief Revision \cite{KM92}, in which all the pieces of information have the same nature\footnote{For them the old beliefs and the new beliefs are propositional formulas.}, in this work all the pieces of information are (complex) epistemic states.
In particular, the information playing the role of integrity constraints is represented as an epistemic state too. In this way we generalize
the epistemic states revision operators proposed by Benferhat et al. \cite{BKPP00} (see also \cite{MP11}).

We have to say that the Fairness Postulate, (IC4), considered in the KPP framework is not present in this work. The main reason for putting aside this postulate  is its specificity. The sole known merging operators satisfying this postulate are the operators defined by distances (see \cite{KP02,KP11}).
Actually, in \cite{KP02} is proved that some very natural operators satisfy (IC4) iff they are defined using a distance. There is, also, another important reason for withdrawing (IC4):  many important classes of well behaved operators that we believe are worth being considered, do not satisfy (IC4). The following example illustrates this situation.

\begin{example}
Anne and Bob have to travel from point $A$ to point $B$. There are four paths to perform the travel: $w_1$, $w_2$, $w_3$ and $w_4$.
Anne thinks that the best way to carry out the travel is $w_1$. Actually she thinks that $w_i$ is better than $w_{i+1}$ for $i=1,2,3$.
Bob thinks that the better way  to do the travel is $w_4$ and the others possibilities are equally good. A common consensual result in this situation
is to take the  path $w_1$, the path that Anne prefers and which is not so bad for Bob. Whereas the path $w_4$ is very far from Anne's preferred
path.
\end{example}

As a matter of fact, in order to have a better road map of the postulates here considered and their relationships, we  consider concrete  spaces, namely those where the epistemic states are total pre-orders over valuations, and we define some examples of operators for which the satisfaction of postulates is analyzed. One important thing that our examples reveal,  is that neither the Standard Domain Postulate, nor Independence Postulate (nor even Unanimity)
are necessary conditions in order to have dictatorial operators.

This work is organized as follows: Section \ref{preliminaries} is devoted to defining  the concepts used throughout the paper. Section \ref{operators-def} is devoted to giving
the syntactical postulates and their semantical counterparts and then to establish the basic representation theorems. In Section \ref{Impossibility} we introduce the new postulates coming from Social Choice and state  a general Impossibility Theorem, the main result of the paper.  In Section \ref{examples}, we construct operators in a concrete class of epistemic states.
Finally we make some concluding remarks in Section \ref{conclu}.
The proofs are in   \ref{apendice}.

\section{Preliminaries}\label{preliminaries}

Let $A$ be a set. A binary relation $\succeq$ over $A$ is a total preorder if it is total (therefore reflexive) and transitive.
Let $\succeq$ be a total preorder over $A$. We define the strict relation  $\succ$ and the indifference $\simeq$ associated to $\succeq$ as follows: $
a\succ b\mbox{ \miff }a\succeq b\mbox{ and }b\not\succeq a
$;
$
a\simeq b\mbox{ \miff }a\succeq b\mbox{ and }b\succeq a
$.
It is clear that any linear order is a total preorder.

Let $\succeq$ be a total preorder over $A$ and $C$ be  a subset of $A$. We say that $c$ is a maximal element of $C$ with respect to $\succeq$ if $c\in C$ and for all $x\in C,\;x\succeq c$. The set of maximal elements of $C$ will be denoted $\max(C,\succeq)$. The maximal elements of the whole set $A$ will be denoted $\max(\succeq)$. We will denote by $\succeq\upharpoonright_C$ the restriction of $\succeq$
to $C$. The set of total preorders over a set $A$ will be denoted by $\mathds{P}(A)$.

Let us introduce an important example of total preorder:  the lexicographical combination of two total preorders.
 Let  $A$ be a nonempty set and let $\succeq_1$ and $\succeq_2$  be two total preorders   over $A$. We define $\succeq^{\mathrm{lex}(\succeq_1,\succeq_2)}$ over $A$ by putting:

$$
a\succeq^{\mathrm{lex}(\succeq_1,\succeq_2)} b \Leftrightarrow\left\{
\begin{array}{lcl}
a\succ_1 b, & \mbox{ or } &\\
a\simeq_1 b & \& & a\succeq_2 b
\end{array}
\right.
$$
It is not hard to see that $\succeq^{\mathrm{lex}(\succeq_1,\succeq_2)}$  is a total preorder over $A$ and that the following equality holds
\begin{equation}
\max(\succeq^{\mathrm{lex}(\succeq_1,\succeq_2)})=\max(\max(\succeq_1),\succeq_2)
\label{eq:0}
\end{equation}
Moreover, it is also easy to show that if either $\succeq_1$ or $\succeq_2$, is a linear order then $\succeq^{\mathrm{lex}(\succeq_1,\succeq_2)}$ is also a linear order.

Sometimes it is useful to associate some ranking functions to the total preorders over a set $A$. These are functions $f:A\dans \reals$ determining the total preorder $\succeq$ over $A$ by $a\succeq b$ if, and only if $f(a)\geq f(b)$. Actually, we define the {\em canonical ranking function}, $r_\succeq$, associated to a total preorder $\succeq$ over $A$ by putting
$$r_\succeq(a)= \max\{n\in\nat:\exists\; a_0,a_1,\dots,a_n\in A;\; a_{i+1}\succ a_i \mbox{  and  }a_n=a\}$$

 Let $\Form$ be the set of propositional formulas built over a finite set $\mathcal{P}$ of atomic propositions.
 $\Form^\ast$ will denote the set of non contradictory formulas. $\Form^\ast/\!\equiv$ will denote the set of non contradictory formulas modulo logical equivalence.
 Let $\Val$ be the set of valuations of formulas in $\mathcal{L_P}$. If
 $\phi$ is a formula in $\Form$, we denote by $\model\phi$ the set of its models, \ie\ $\model \phi=\{w\in\mathcal{W_P}: w\models\phi\}$. If $\phi_i$ is a formula in $\mathcal{L_P}$, for each $i\in I$ (where $I$ is a finite set of indexes), we denote by $\Bigwedge_{i\in I}{\phi_i}$ the conjunction of all the formulas $\phi_i$ with $i$ in $I$. When $I$ is the empty set, $\Bigwedge_{i\in I}{\phi_i}\equiv \top$. If  $M$ is a nonempty set of valuations, we denote by $\phi_M$ a formula such that $\model{\phi_M}=M$.

In this work a {\em belief base} will be represented by a formula in $\Form$. It encodes the set of propositions believed by an agent. The intuitive meaning of a (complex) {\em epistemic state} is to have, in addition to a belief base, other information, eventually null. Many concrete representations of epistemic states have been proposed. The first one is, of course,
that of the AGM framework  \cite{AGM85,G88}, where an \ee\ is a logical theory. Darwiche and Pearl \cite{DP94,DP97} represent \ees\  by total preorders over $\Val$.
Spohn \cite{Spo88} uses Ordinal conditional functions. An abstract model to  represent \ees\ was presented by Benferhat et al. \cite{BKPP00}. We will adopt this abstract representation throughout this paper except in Section \ref{examples}, where we consider  the concrete representation of total preorders over valuations as a realization of the abstract model.

\begin{definition}[Epistemic space]
 A triple $(\mathcal{E},B,\mathcal{L_P})$ is called an epistemic space if $\mathcal{E}$ is a nonempty set, $\mathcal{L_P}$ is the set of formulas over a set of propositional variables $\mathcal{P}$ and $B$ is a function from $\mathcal{E}$ into $\mathcal{L_P}$, such that the image of $B$ modulo logical equivalence is  all the set $\mathcal{L_P^*}/\!\equiv$.
\end{definition}

The elements of $\mathcal{E}$ are called  {\em \ees}; $B$ is called a {\em belief function}; for any $E$ in $\mathcal{E}$, $B(E)$ is called the {\em entrenchment beliefs} (or belief base) of $E$.
 Notice that if $I$ is a nonempty set of valuations, there is $E$ such that $B(E)\equiv\phi_I$, since the image of $B$ (modulo logical equivalence) is all the set $\mathcal{L_P^*}/\!\equiv$.

As an example of an epistemic space we can consider a finite propositional language $\mathcal{L_P}$, and then let $\mathcal{E}$ be the set of all total preorders over the interpretations, that is $\mathcal{E}=\mathds{P}(\mathcal{W_P})$, and let $B$ be the function that maps an epistemic state $\succeq$ in a formula $\varphi_\succeq$ having as models the maximal elements of $\succeq$.

 In order to introduce the notion of {\em profile}, we consider an epistemic space $(\mathcal{E},B,\mathcal{L_P})$ and a well ordered set $(\mathcal{S}, <)$
 the elements\footnote{The set $(\mathcal{S}, <)$ can be identified with the set of natural numbers $\mathds{N}$ with the usual order.} of which are called {\em agents}. Let $\mathcal{F}^\ast(\mathcal{S})$ be the set of all nonempty finite subsets of agents.
Each set $N$ in  $\mathcal{F}^\ast(\mathcal{S})$ can be seen as a {\em finite society of agents}. We can suppose that if $N=\{i_1,i_2,\dots,i_n\}$ the elements are listed in increasing way, \ie\ $i_k<i_m$ whenever $k<m$.

 Given $N$ in $\mathcal{F}^\ast(\mathcal{S})$, a partition of $N$ is a finite family  $\Set{N_1,N_2,\dots,N_k}$ contained in $\mathcal{F}^\ast(\mathcal{S})$  of  pairwise disjoint sets such that their union is $N$.

\begin{definition}[Profile]
      Given $N$ in $\mathcal{F}^\ast(\mathcal{S})$, an $N$-profile  is a function $\Phi:N\dans \mathcal{E}$. We think of $\Phi(i)$ as the epistemic state of the agent $i$,  for each agent $i$ in $N$.
\end{definition}

 Given a $N$-profile, $\Phi:N\rightarrow\mathcal{E}$, for each agent  $i$ in $N$, $E_i$ will denote $\Phi(i)$. Thus, if $N=\{i_1,i_2,\dots,i_n\}$ is a finite society of agents, it can be seen as an ordered tuple:
$\Phi=(E_{i_1},E_{i_2},\dots,E_{i_n})$. When $N$ is a singleton, suppose $N=\Set i$, by abuse, $E_i$ will denote the $N$-profile $\Phi=(E_i)$.
  In that case we write $i$-profile instead of $\Set i$-profile.
  Given $N$ in $\mathcal{F}^\ast(\mathcal{S})$, the set of $N$-profiles will be denoted $\mathcal{E}^N$. The set of all profiles of epistemic states will be denoted $\mathcal{P}(\mathcal{S,E})$, that is,
$\mathcal{P}(\mathcal{S,E})=\bigcup_{N\in\mathcal{F}^\ast(\mathcal{S})}\mathcal{E}^N$.

Let $N=\{i_1,i_2,\dots,i_n\}$ and $M=\{j_1,j_2,\dots,j_m\}$ be two finite societies of agents in $\mathcal{S}$.  Consider an $N$-profile \linebreak$\Phi=(E_{i_1},E_{i_2},\dots,E_{i_n})$ and an $M$-profile $\Psi=(E_{j_1},E_{j_2},\dots,E_{j_m})$.
We say that $\Phi$ and $\Psi$ are equivalent, denoted $\Phi\equiv \Psi$, if $n=m$ and $E_{i_k}=E_{j_k}$ for $k=1,\dots, n$. If $\Phi$ and $\Psi$ are not equivalent, we write $\Phi\not\equiv \Psi$. From this, for any pair of agents $i$, $j$ in $\mathcal{S}$, if we consider an $i$-profile $E_i$ and $j$-profile $E_j$, then $E_i\equiv E_j$ iff, seen as  epistemic states, we have $E_i=E_j$. Thus, by abuse and being clear from the  context, we write respectively $E_i= E_j$ and $E_i\neq E_j$ instead of $E_i\equiv E_j$ and $E_i\not\equiv E_j$. If $N$ and $M$ are disjoint we define a new
$(N\cup M)$-profile, the joint of
$\Phi$ and $\Psi$, denoted  $\Phi\sqcup\Psi$, in the following way:
$$\big(\Phi\sqcup\Psi\big)(i)=\left\{\begin{array}{lr}
\Phi(i), & \mbox{if } i\in N\\
\Psi(i), & \mbox{if } i\in M\\
\end{array}\right.$$
If $M\subseteq N$, and $M$ is nonempty, and $\Phi$ is an $N$-profile, then $\Phi\upharpoonright_{_M}$ will denote the  $M$-profile obtained by the restriction of $\Phi$ to $M$, that is, $\Phi\upharpoonright_{_M}:M\rightarrow\mathcal{E}$, where $\Phi\upharpoonright_{_M}(j)=\Phi(j)$, for each $j$ in $M$.

 Thus, if $\Set{N_1,N_2,\dots,N_k}$ is a partition of a finite society $N$ of agents in $\mathcal{S}$, for each $N$-profile $\Phi$, we have
$$\Phi=\Phi\upharpoonright_{_{N_1}}\sqcup\cdots\sqcup\Phi\upharpoonright_{_{N_k}}$$

From now on, we will suppose that $N$ is the finite society $\{i_1,i_2,\dots,i_n\}$, while  the $N$-profiles $\Phi$ and $\Phi'$ will be denoted by $(E_{i_1},E_{i_2},\dots,E_{i_n})$
 and $(E_{i_1}',E_{i_2}',\dots,E_{i_n}')$ respectively.

\section{Epistemic states fusion operators}\label{operators-def}

\subsection{Postulates}
We fix an epistemic space $(\mathcal{E},B,\mathcal{L}_{\mathcal{P}})$ and a set of agents $\mathcal{S}$.
A function of the form $\nabla:\mathcal{P}(\mathcal{S,E})\times\mathcal{E}\longrightarrow\mathcal{E}$  will be called an {\em epistemic state  combination operator}, for short
an ES combination operator. $\nabla(\Phi,E)$ represents
 the result of combining the epistemic states in  $\Phi$ under the {\em integrity constraint} $E$.

Now we establish the rationality postulates of fusion in the setting of \ees. Most of them are adapted from IC merging postulates proposed by Konieczny and Pino Pérez \cite{KP99,KP02,KP05} (a first version of these postulates appeared in \cite{MP11}). Since the beliefs of epistemic states
constitute their only aspect with well known logical structure, we express the rationality postulates in logical terms at the level of beliefs.
Some of these postulates are new mainly due to the new presentation of profiles as functions of finite societies into \ees.

In order to introduce such postulates, let $N$ and $M$ be any pair of finite societies of agents in $\mathcal{S}$, $\Phi$ be any $N$-profile, $\Phi'$ be any $M$-profile,  $E$, $E'$, $E''$ be any triple of \ees.

\begin{description}

\item[(ESF1)] {\em $B\big(\nabla(\Phi,E)\big)\vdash B(E)$.}

\item[(ESF2)]  {\em If $\Phi\equiv\Phi'$ and
$B(E)\equiv B(E')$ then $B\big(\nabla(\Phi,E)\big)\equiv B\big(\nabla(\Phi',E')\big)$}

\item[(ESF3)] {\em If $B(E)\equiv B(E')\wedge B(E'')$ then $B\big(\nabla(\Phi,E')\big)\wedge B(E'')\vdash B\big(\nabla(\Phi,E)\big)$.}

\item[(ESF4)] {\em If \mbox{$B(E)\equiv B(E')\wedge B(E'')$} and \mbox{$B\big(\nabla(\Phi,E')\big)\wedge B(E'')\not\vdash\bot$}, then $B\big(\nabla(\Phi,E)\big)\vdash B\big(\nabla(\Phi,E')\big)\wedge B(E'')$.}

\end{description}


 \textbf{(ESF1)} tells us that the belief  of the result has to be logically stronger than the belief of the restriction. This Postulate corresponds to Postulate (IC1) of IC  merging operators \cite{KP02,KP11}.

 \textbf{(ESF2)} is a weak form of the {\em anonymity} at the profile level  and a {\em syntax irrelevance property} at the level of beliefs for the integrity restrictions. This Postulate corresponds\footnote{Actually, the version of equivalence with the profiles as multisets would correspond  in our setting to allowing permutations in the profiles, but indeed we do not authorize it.} to Postulate (IC3) of IC merging operators \cite{KP02,KP11}.

 \textbf{(ESF3)} and \textbf{(ESF4)} together  determine one important property in which the beliefs are chosen. They correspond to postulates (IC7) and (IC8) of IC merging operators respectively \cite{KP02,KP11}.

These four postulates, \sinti-\sintiv, are called  {\em basic epistemic state fusion postulates}. They are considered the minimal requirements of rationality that
the combination operators have to satisfy.

\begin{definition}[Basic fusion operators]
 Let $(\mathcal{E},B,\mathcal{L}_{\mathcal P})$ be  an epistemic space and let $\mathcal{S}$ be a set of agents. A  combination operator of epistemic states $\nabla$ is said to be an  epistemic state  basic fusion operator  (ES basic fusion operator for short) if it  satisfies \sinti-\sintiv.
\end{definition}

There are other important postulates describing mainly the relationships between the results of merging a whole society and the results of merging its subsocieties. In order to establish such properties, let $j$, $k$ be a pair of agents in $\mathcal{S}$,  $N$ be any finite society of agents in $\mathcal{S}$, $\Set{N_1, N_2}$ be any partition of $N$, $\Phi$ be any $N$-profile,  $E_j$ be any $j$-profile, $E_k$ be any $k$-profile, and $E$ be any \ee\ in $\mathcal{E}$.

\begin{description}
\item[(ESF5)] If $E_j\neq E_k$, there exits  $E'$ in $\mathcal{E}$ such that $B\big(\nabla(E_j,E')\big)\not\equiv B\big(\nabla(E_k,E')\big)$
\item[(ESF6)]If $\Bigwedge_{i\in N}B(E_{i})\wedge B(E)\not\vdash\bot$, then $B\big(\nabla(\Phi,E)\big)\equiv\Bigwedge_{i\in N}B(E_{i})\wedge B(E)$
\item[(ESF7)] $B\big(\nabla(\Phi\upharpoonright_{_{N_1}},E)\big)\wedge B\big(\nabla(\Phi\upharpoonright_{_{N_2}},E)\big)\vdash B\big(\nabla(\Phi,E)\big)$
\item[(ESF8)] If $B\big(\nabla(\Phi\upharpoonright_{_{N_1}},E)\big)\wedge B\big(\nabla(\Phi\upharpoonright_{_{N_2}},E)\big)\not\vdash\bot$, then $B\big(\nabla(\Phi,E)\big)\vdash B\big(\nabla(\Phi\upharpoonright_{_{N_1}},E)\big)\wedge B\big(\nabla(\Phi\upharpoonright_{_{N_2}},E)\big)$
\end{description}

\textbf{(ESF5)} is a new postulate. It says  that given two different \ees, there is a   restriction $E$ that leads to different results at the level of beliefs, \ie\ the beliefs of the result of the operator applied to each \ee\ with the restriction  $E$, will not be equivalent.

\textbf{(ESF6)} expresses that if all the agents of the profile agree  at the level of beliefs with the restriction, this agreement will coincide with the belief resulting after application of the operator. This Postulate corresponds to Postulate (IC2) of IC merging operators \cite{KP02,KP11}.

\textbf{(ESF7)} tells us that for any partition of a group into two subgroups, the conjunction of belief of the result of applying the operator to each subgroup will be logically stronger than the beliefs resulting of applying the operator to the whole group. This Postulate corresponds to Postulate (IC5) of IC merging operators \cite{KP02,KP11}.

\textbf{(ESF8)} expresses that if we can divide a group into two subgroups such that the application of the operator to each subgroup leads to beliefs which  are mutually consistent, then the conjunction of these beliefs will be the beliefs resulting of applying the operator to the whole group.
This Postulate corresponds to Postulate (IC6) of IC merging operators \cite{KP02,KP11}.

\begin{definition}[Epistemic state fusion operators]
Let  $\nabla$ be an ES combination operator.  $\nabla$ is said to be an epistemic state  fusion operator (ES fusion operator for short)
if it satisfies the postulates \sinti-\sintviii.
\end{definition}

There are some important variants of these operators coming from the satisfaction of some special postulates. These postulates are
stated in what follows:

\begin{description}
\item[(ESF8W)] If $B\big(\nabla(\Phi\upharpoonright_{_{N_1}},E)\big)\wedge B\big(\nabla(\Phi\upharpoonright_{_{N_2}},E)\big)\not\vdash\bot$, then  $B\big(\nabla(\Phi,E)\big)\vdash B\big(\nabla(\Phi\upharpoonright_{_{N_1}},E)\big)\vee B\big(\nabla(\Phi\upharpoonright_{_{N_2}},E)\big)$
\end{description}

This property tells us that if a group is divided into two subgroups and, after application of the operator, the beliefs of the subgroups are consistent, then the beliefs of the whole group after application of the operator have to entail the disjunction of the beliefs of each subgroup.
This Postulate corresponds to Postulate (IC6') of IC merging operators \cite{KP02,KP11}.

Replacing the postulate \sintviii\ by the weaker postulate \sintviiiw, gives us the class of epistemic state  quasifusion operators.

\begin{definition}[Epistemic state quasifusion operators]
Let  $\nabla$ be an ES combination operator.  $\nabla$ is said to be an epistemic state  quasifusion operator (ES quasifusion operator for short)
if it satisfies the postulates \sinti--\sintvii\ and \sintviiiw.
\end{definition}

Note that if an operator satisfies \sintviii, necessarily it satisfies \sintviiiw. Thus, every ES fusion operator  is an ES  quasifusion operator. However, the converse is not true (see Section \ref{examples}).

\subsection{Faithful assignments}

An {\em assignment} is a function mapping epistemic state profiles into total preorders over interpretations. The intended meaning of these
mappings is encoding semantically, in some sense, the group preference.

\begin{definition}[Assignment and Basic Assignment]
      Let $(\mathcal{E},B,\mathcal{L}_{\mathcal P})$ be an epistemic space and $\mathcal{S}$ be a set of agents.  An {\em assignment} is a function \mbox{$f:\mathcal{P}(\mathcal{S,E})\rightarrow \mathds{P}(\mathcal{W_P})$} mapping each epistemic profile $\Phi$ into $f(\Phi)$, a total preorder over   $\mathcal{W_P}$.
      An assignment is called  {\em basic assignment} when, for any profiles $\Phi$ and $\Psi$, if $\Phi\equiv\Psi$ then $f(\Phi)= f(\Psi)$.

\end{definition}

Given an assignment $f$, we will denote by $\succeq_{\Phi}$, the total preorder $f(\Phi)$; and $\Phi\mapsto\succeq_\Phi$ will denote the mapping $f$. The  total preorder $\succeq_\Phi$ can be seen as the group plausibility preference over worlds:
  \begin{itemize}
    \item If $w\succeq_\Phi w'$, we will say that $w$ is {\em at least as plausible as} $w'$, for the agents group  in $\Phi$
    \item If $w\succ_\Phi w'$, we will say that $w$ is {\em more plausible than} $w'$, for the agents group  in $\Phi$
  \end{itemize}

In order to have  socially well behaved assignments, it is necessary to impose some rational properties. With the purpose of doing this, let $j$, $k$ be a pair of agents in $\mathcal{S}$,  $N$ be any finite society of agents in $\mathcal{S}$, $\Set{N_1, N_2}$ be any partition of $N$, $\Phi$ be any $N$-profile,  $E_j$ be any $j$-profile, $E_k$ be any $k$-profile, and $w$, $w'$ be any pair of interpretations in $\Val$.

\begin{enumerate}
\item[\assgi] If $E_j\neq E_k$, then $\succeq_{E_j}\neq\succeq_{E_k}$
\item[\assgii] If $\Bigwedge_{i\in N}B(E_{i})\not\vdash\bot$, then $\model{\Bigwedge_{i\in N} B(E_i)}=\max(\succeq_\Phi)$
\item[\assgiii] If $w\succeq_{{\Phi\upharpoonright_{N_1}}}w'$ and $w\succeq_{\Phi\upharpoonright_{N_2}}w'$ then $w\succeq_\Phi w'$
\item[\assgiv] If $w\succeq_{{\Phi\upharpoonright_{N_1}}}w'$ and $w\succ_{\Phi\upharpoonright_{N_2}}w'$, then $w\succ_\Phi w'$
\end{enumerate}

 Property \textbf{1} imposes  that different \ees\ lead to different total preorders (injectivity of the assignment restricted to profile of size one).

 Property \textbf{2}  tells us that, if there are models of the conjunction of the beliefs of the \ees\ of the profile, they are exactly the maximal models of the total preorder associated to the profile.

 Property \textbf{3} expresses that if one model $w_1$ is at least as plausible as   $w_2$ for one group, and the same occurs for a second group, then for the group resulting of the union of these groups, $w_1$ will be at least as plausible as $w_2$.

 Property \textbf{4} is similar to the previous one, except that if there is one preference strict for one of the subgroups, this will be the case for the whole group.

We think that the most entrenched preferences  should represent the beliefs. This is expressed more precisely by the following property:

\begin{definition}[Maximality Condition]
   Let $(\mathcal{E},B,\mathcal{L}_{\mathcal P})$ be an epistemic space and  $\mathcal{S}$ be a set of agents. The assignment
   $\Phi\mapsto\succeq_\Phi$ satisfies the maximality condition with respect to  $B$, if for all $i$ in $\mathcal S$ and all $i$-profile $E_i$, the following equation holds
\begin{equation*}\label{cond-min}
\model{B(E_i)}=\max(\succeq_{E_i})
\end{equation*}
\end{definition}

Note that any assignment $\Phi\mapsto\succeq_\Phi$ satisfying Property \assgii, with respect to $B$, satisfies the Maximality Condition.
Moreover, in presence of Properties  \assgiii\ and \assgiv, we have that Property \assgii\ is equivalent to the Maximality Condition.

\begin{proposition}\label{prop3}
   Let $(\mathcal{E},B,\mathcal{L}_{\mathcal P})$ be an epistemic space and  $\mathcal{S}$ be a set of agents. Suppose that the assignment $\Phi\mapsto\succeq_\Phi$ satisfies Properties \assgiii\ and \assgiv. Then $\Phi\mapsto\succeq_\Phi$ satisfies Property \assgii\ if, and only if, it satisfies the Maximality Condition.
\end{proposition}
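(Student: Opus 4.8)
The plan is to prove the two implications separately; the remark preceding the statement already settles one of them. For the direction \emph{Property \assgii\ $\Rightarrow$ Maximality Condition}, I would simply specialize \assgii\ to a singleton society. Fix $i$ in $\mathcal S$ and an $i$-profile $E_i$. Since $B$ takes values in the non-contradictory formulas, $B(E_i)\not\vdash\bot$, so the hypothesis of \assgii\ holds for $N=\Set{i}$; as $\succeq_\Phi=\succeq_{E_i}$ in this case, \assgii\ yields $\model{B(E_i)}=\max(\succeq_{E_i})$, which is exactly the Maximality Condition.

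The substantive direction is \emph{Maximality Condition (together with \assgiii\ and \assgiv) $\Rightarrow$ Property \assgii}. The heart of the argument is a combination lemma: for any finite society $N$ with partition $\Set{N_1,N_2}$, writing $\Phi_1=\Phi\upharpoonright_{N_1}$ and $\Phi_2=\Phi\upharpoonright_{N_2}$, if $\max(\succeq_{\Phi_1})\cap\max(\succeq_{\Phi_2})\neq\emptyset$ then $\max(\succeq_\Phi)=\max(\succeq_{\Phi_1})\cap\max(\succeq_{\Phi_2})$. The inclusion $\supseteq$ is immediate from \assgiii: a world $w$ maximal for both $\succeq_{\Phi_1}$ and $\succeq_{\Phi_2}$ satisfies $w\succeq_{\Phi_1}x$ and $w\succeq_{\Phi_2}x$ for every $x$, hence $w\succeq_\Phi x$ for every $x$, so $w\in\max(\succeq_\Phi)$.

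The inclusion $\subseteq$ is where I expect the main difficulty, and it is where \assgiv\ is used. Given $w\in\max(\succeq_\Phi)$, choose any $v$ in the (nonempty) intersection; by the $\supseteq$ part already proved $v\in\max(\succeq_\Phi)$, so $w\simeq_\Phi v$. Suppose, for contradiction, that $w\notin\max(\succeq_{\Phi_1})$; then some world lies strictly above $w$ in $\succeq_{\Phi_1}$, and since $v$ is $\succeq_{\Phi_1}$-maximal, transitivity gives $v\succ_{\Phi_1}w$. Because $v$ is also $\succeq_{\Phi_2}$-maximal we have $v\succeq_{\Phi_2}w$, and then \assgiv\ (applied to the unordered partition, so with the strict component on the $N_1$ side) forces $v\succ_\Phi w$, contradicting $w\simeq_\Phi v$. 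Hence $w\in\max(\succeq_{\Phi_1})$, and the symmetric argument gives $w\in\max(\succeq_{\Phi_2})$.

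With the lemma in hand I would conclude by induction on $|N|$. The base case $|N|=1$ is precisely the Maximality Condition. For the inductive step, split $N$ into a singleton $N_1$ and the remaining society $N_2$; assuming $\Bigwedge_{i\in N}B(E_i)\not\vdash\bot$, every subconjunction is also consistent, so the induction hypothesis gives $\model{\Bigwedge_{i\in N_1}B(E_i)}=\max(\succeq_{\Phi_1})$ and $\model{\Bigwedge_{i\in N_2}B(E_i)}=\max(\succeq_{\Phi_2})$. Since $\model{\Bigwedge_{i\in N}B(E_i)}=\max(\succeq_{\Phi_1})\cap\max(\succeq_{\Phi_2})$ is nonempty, the lemma yields $\max(\succeq_\Phi)=\model{\Bigwedge_{i\in N}B(E_i)}$, which is \assgii\ for $N$. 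The only genuinely delicate point is the $\subseteq$ inclusion of the lemma, where one must introduce the auxiliary maximal element $v$ and exploit the strict Pareto condition \assgiv; the remainder is bookkeeping with conjunctions and the induction.
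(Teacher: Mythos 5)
Your proof is correct and follows essentially the same route as the paper's: both argue by induction on the size of the society, using Property \assgiii\ for the inclusion of $\model{\Bigwedge_{i\in N}B(E_i)}$ into $\max(\succeq_\Phi)$ and Property \assgiv, by contradiction, for the reverse inclusion. The only difference is one of packaging: you factor the inductive step through a purely order-theoretic lemma identifying $\max(\succeq_\Phi)$ with $\max(\succeq_{\Phi_1})\cap\max(\succeq_{\Phi_2})$, whereas the paper removes a single agent $j$ and obtains the strict preference needed for Property \assgiv\ directly from the Maximality Condition applied to $E_j$; the two arguments are interchangeable.
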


The previous result shows some hidden connections between Properties \assgii, \assgiii\ and \assgiv\ under the Maximality Condition. Actually, under Maximality Condition,
Properties \assgiii\ and \assgiv\ entail Property \assgii. However the converse does not hold as we will observe in Section \ref{examples}.

The properties previously stated, together determine when an assignment has a good social behavior.

\begin{definition}[Faithful assignment]
 Let $(\mathcal{E},B,\mathcal{L}_{\mathcal P})$ be an epistemic space and $\mathcal{S}$ be a set of agents.  An assignment  $\Phi\mapsto\succeq_\Phi$ is called a faithful assignment, with respect to $B$, if it is  a basic assignment which satisfies Properties \assgi-\assgiv.
\end{definition}

The following property is weaker than Property \assgiv:

\begin{enumerate}
\item[\assgivp] If $w\succ_{{\Phi\upharpoonright_{N_1}}}w'$ and $w\succ_{\Phi\upharpoonright_{N_2}}w'$ then $w\succ_\Phi w'$
\end{enumerate}

This expresses that given two  alternatives, $w,\;w'$, if for one of the groups,  $w$ is more plausible than $w'$, and the same relation occurs for other group, then, putting the groups together, $w$ is still more plausible than  $w'$.

Note that Property \assgiv\ entails Property \assgivp. However, the converse is not true as we will see in Section \ref{examples}.

Replacing Property \assgiv\ by Property \assgivp\ in the definition of faithful assignments gives us another important class of assignments:

\begin{definition}[Quasifaithful assignment]
Let $(\mathcal{E},B,\mathcal{L}_{\mathcal P})$ be an epistemic space and $\mathcal{S}$ be a set of agents.  An assignment  $\Phi\mapsto\succeq_\Phi$ is called a quasifaithful assignment, with respect to $B$, if it is a basic assignment in which  Properties \assgi-\assgiii\ and \assgivp\ hold.
\end{definition}

\subsection{Representation theorems}
We present some results which help to understand the behavior of the operators defined previously. This allows us to describe the operators, at least partially, in a semantical way.
Thus, we call these results weak representation
 by  opposition to some  results of representation in concrete structures like in \cite{MP11}, where we can effectively build the operator starting from an assignment.

\begin{theorem}[Weak representation for ES basic fusion operators]\label{teo1}

An ES combination operator $\nabla$ is an ES basic fusion operator \miff there exists
a unique basic assignment  $\Phi\mapsto\succeq_\Phi$ such that:
\def\theequation{B-Rep}
\begin{equation}\label{B-Rep}
\model{B\big(\nabla(\Phi,E)\big)}=max\big(\model{B(E)},\succeq_\Phi\big)
\end{equation}
\end{theorem}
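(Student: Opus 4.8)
The plan is to prove the two implications of the \miff separately; the direction from the assignment to the postulates is routine bookkeeping, while the converse contains the real work.

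For the $(\Leftarrow)$ direction I would assume a basic assignment $\Phi\mapsto\succeq_\Phi$ satisfying (B-Rep) and translate each of \sinti--\sintiv\ into an elementary statement about $\max(\cdot,\succeq_\Phi)$ on finite sets of valuations. Writing $M=\model{B(E)}$, postulate \sinti\ is immediate from $\max(M,\succeq_\Phi)\subseteq M$; \sintii\ holds because $\Phi\equiv\Phi'$ forces $\succeq_\Phi=\succeq_{\Phi'}$ (the basic-assignment clause) while $B(E)\equiv B(E')$ forces $\model{B(E)}=\model{B(E')}$. Postulates \sintiii\ and \sintiv\ reduce, via $S=\model{B(E')}$, $T=\model{B(E'')}$ and $S\cap T=\model{B(E)}$, to two standard facts about a total preorder on a finite set: that $\max(S,\succeq)\cap T\subseteq\max(S\cap T,\succeq)$ always, and that $\max(S\cap T,\succeq)\subseteq\max(S,\succeq)\cap T$ whenever $\max(S,\succeq)\cap T\neq\emptyset$. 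These I would record as a preliminary lemma.

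For the $(\Rightarrow)$ direction the heart is to manufacture the preorder from $\nabla$. Given a profile $\Phi$, I define $w\succeq_\Phi w'$ to hold \miff $w=w'$ or $w\in\model{B(\nabla(\Phi,E_{ww'}))}$, where $E_{ww'}$ is any epistemic state with $B(E_{ww'})\equiv\phi_{\{w,w'\}}$ (such a state exists because the belief function is onto $\Form^\ast/\!\equiv$). Postulate \sintii\ makes this independent of the representative $E_{ww'}$ and simultaneously yields the basic-assignment property $\Phi\equiv\Psi\imp\succeq_\Phi=\succeq_\Psi$. The workhorse is a \emph{Restriction Lemma}: if $B(E')\equiv\phi_S$, $B(E'')\equiv\phi_T$ and $\model{B(\nabla(\Phi,E'))}\cap T\neq\emptyset$, then $\model{B(\nabla(\Phi,E))}=\model{B(\nabla(\Phi,E'))}\cap T$ for any $E$ with $B(E)\equiv\phi_{S\cap T}$ (this $S\cap T$ being nonempty). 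This is exactly what \sintiii\ and \sintiv\ give when combined, one inclusion each. Reflexivity of $\succeq_\Phi$ is built into the definition, and totality follows because \sinti\ confines $\model{B(\nabla(\Phi,E_{ww'}))}$ to $\{w,w'\}$ while the epistemic-space axiom forces $B(\nabla(\Phi,E_{ww'}))$ to be consistent, so that set is nonempty.

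The step I expect to be most delicate is transitivity. To prove it I would take distinct $w_1,w_2,w_3$ with $w_1\succeq_\Phi w_2$ and $w_2\succeq_\Phi w_3$, set $R=\model{B(\nabla(\Phi,E_S))}$ for $B(E_S)\equiv\phi_{\{w_1,w_2,w_3\}}$, and run a case analysis on $R$ through the Restriction Lemma applied to the three two-element subsets: assuming $w_1\notin R$, one first gets $w_2\notin R$ (otherwise restricting to $\{w_1,w_2\}$ would give $\model{B(\nabla(\Phi,E_{w_1w_2}))}=\{w_2\}$, contradicting $w_1\succeq_\Phi w_2$), whence $R=\{w_3\}$, and then restricting to $\{w_2,w_3\}$ contradicts $w_2\succeq_\Phi w_3$; so $w_1\in R$, and restricting to $\{w_1,w_3\}$ yields $w_1\succeq_\Phi w_3$. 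Finally (B-Rep) itself falls out of the same lemma: for $M=\model{B(E)}$ and $R=\model{B(\nabla(\Phi,E))}$, the inclusion $R\subseteq\max(M,\succeq_\Phi)$ comes from restricting $R$ to each pair $\{w,x\}$ with $w\in R$ and $x\in M$, and the reverse inclusion by contradiction, picking $u\in R$ and restricting to $\{w,u\}$ for a putative maximal $w\notin R$. Uniqueness is then immediate, since any basic assignment satisfying (B-Rep) must return $\model{B(\nabla(\Phi,E_{ww'}))}$ as its maximal set on every pair $\{w,w'\}$, and a total preorder on a finite set is determined by its restrictions to pairs.
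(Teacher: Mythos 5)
Your proposal is correct and follows essentially the same route as the paper's proof: the preorder is defined from the merging of two-model constraints $E_{w,w'}$, well-definedness comes from \sintii, totality from \sinti\ plus consistency, and transitivity and \eqref{B-Rep} both rest on the combined use of \sintiii\ and \sintiv\ (which you package once as a ``Restriction Lemma'' where the paper re-derives the same equivalence at each use). The converse direction and the uniqueness argument also match the paper's.
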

\setcounter{equation}{2}

 It is worth to note that we can obtain a variant of this result with a weak version of \sintii\ where the equivalence between profiles is not necessary, just like the version presented in \cite{MP11}. More precisely, the operators satisfying this weak version of \sintii\ plus \sinti, \sintiii and \sintiv, are exactly the operators  weakly represented  by a simple assignment.

The Weak representation theorem allows us to obtain  very tight relations between the (syntactical) postulates  for fusion and the properties of the assignments.
More precisely, we have the following result:

\begin{proposition}\label{prop4}
Let  $\nabla$ be an ES basic fusion operator and $\Phi\mapsto\succeq_\Phi$ be the basic assignment associated to $\nabla$ by Theorem \ref{teo1}. Then, the following conditions hold:
\begin{enumerate}
\item[(i)] $\nabla$ satisfies \sintv\  iff $\Phi\mapsto\succeq_\Phi$ satisfies \assgi.
\item[(ii)] $\nabla$ satisfies \sintvi\ iff $\Phi\mapsto\succeq_\Phi$ satisfies  Properties \assgii.
\item[(iii)] $\nabla$ satisfies \sintvii\ iff $\Phi\mapsto\succeq_\Phi$ satisfies \assgiii.
\item[(iv)] $\nabla$ satisfies \sintviii\ iff $\Phi\mapsto\succeq_\Phi$ satisfies \assgiv.
\item[(v)] $\nabla$ satisfies \sintviiiw\ iff $\Phi\mapsto\succeq_\Phi$ satisfies \assgivp.
\end{enumerate}
\end{proposition}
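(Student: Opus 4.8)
The plan is to prove all five equivalences by the same device: the representation equation \eqref{B-Rep} of Theorem \ref{teo1}, which rewrites the models of every merged belief base as a set of maximal elements, $\model{B(\nabla(\Phi,E))}=\max(\model{B(E)},\succeq_\Phi)$. Since $(\mathcal{E},B,\mathcal{L_P})$ is an epistemic space, the image of $B$ modulo equivalence is all of $\mathcal{L_P^*}/\!\equiv$, so for every nonempty $M\subseteq\mathcal{W_P}$ there is an \ee\ $E$ with $\model{B(E)}=M$. This lets me translate each syntactic postulate (a family of entailments indexed by the constraint $E$) into the purely order-theoretic statement obtained by letting $\model{B(E)}$ range over all nonempty $M$, and conversely to probe a given property by choosing $E$ cleverly. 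Two gadgets recur. First, for any pair $w,w'$, taking $M=\{w,w'\}$ makes $\max(M,\succeq)$ record exactly the $\succeq$-comparison of $w$ and $w'$ (the cases $w\succ w'$, $w'\succ w$, $w\simeq w'$ give distinct maximal sets); this yields the ``postulate $\imp$ property'' directions. Second, letting $M$ be arbitrary (or $M=\mathcal{W_P}$, i.e.\ $B(E)\equiv\top$) yields the ``property $\imp$ postulate'' directions.

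For (i), if \assgi\ fails there are $E_j\neq E_k$ with $\succeq_{E_j}=\succeq_{E_k}$, whence by \eqref{B-Rep} the two merged bases agree for every constraint and \sintv\ fails; conversely, if $\succeq_{E_j}\neq\succeq_{E_k}$ they disagree on some pair $\{w,w'\}$, and the constraint $E'$ with $\model{B(E')}=\{w,w'\}$ separates the results. For (ii), choosing $E$ with $B(E)\equiv\top$ turns \sintvi\ into $\model{\Bigwedge_{i\in N}B(E_i)}=\max(\mathcal{W_P},\succeq_\Phi)=\max(\succeq_\Phi)$, which is \assgii; for the converse one uses the elementary fact that, for a total preorder, if a subset $M$ meets $\max(\succeq_\Phi)$ then $\max(M,\succeq_\Phi)=M\cap\max(\succeq_\Phi)$, so that \assgii\ makes $\max(\model{B(E)},\succeq_\Phi)$ equal to $\model{\Bigwedge_{i\in N}B(E_i)\wedge B(E)}$ exactly when the latter is consistent. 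Part (iii) is a direct pointwise translation: via \eqref{B-Rep}, \sintvii\ says $\max(M,\succeq_{\Phi\upharpoonright_{N_1}})\cap\max(M,\succeq_{\Phi\upharpoonright_{N_2}})\subseteq\max(M,\succeq_\Phi)$ for all nonempty $M$, and reading this at arbitrary $M$ (resp.\ at doubletons) gives exactly \assgiii.

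The delicate parts are (iv) and (v), because \sintviii\ and \sintviiiw\ carry a consistency precondition and an entailment conclusion that must be matched to the strict properties \assgiv\ and \assgivp. Here is the shape of the argument for (iv). By \eqref{B-Rep}, \sintviii\ is equivalent to: for every nonempty $M$, if $\max(M,\succeq_{\Phi\upharpoonright_{N_1}})\cap\max(M,\succeq_{\Phi\upharpoonright_{N_2}})$ is nonempty, then $\max(M,\succeq_\Phi)$ is contained in it. For \assgiv\ $\imp$ \sintviii, fix such an $M$, a witness $u$ in the intersection, and any $w\in\max(M,\succeq_\Phi)$; then $w\succeq_{\Phi\upharpoonright_{N_1}}u$ and $w\succeq_{\Phi\upharpoonright_{N_2}}u$, while $u\succeq_\Phi w$. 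If either weak relation were strict, \assgiv\ (using that $\{N_1,N_2\}$ is an unordered partition, so it applies in both orders) would give $w\succ_\Phi u$, contradicting $u\succeq_\Phi w$; hence $w\simeq_{\Phi\upharpoonright_{N_1}}u$ and $w\simeq_{\Phi\upharpoonright_{N_2}}u$, so $w$ lies in both maximal sets. The converse \sintviii\ $\imp$ \assgiv\ is the doubleton instance $M=\{w,w'\}$: from $w\succeq_{\Phi\upharpoonright_{N_1}}w'$ and $w\succ_{\Phi\upharpoonright_{N_2}}w'$ one computes that both maximal sets contain $w'$ and the second equals $\{w'\}$, so the precondition holds and \sintviii\ forces $\max(M,\succeq_\Phi)=\{w'\}$, i.e.\ $w\succ_\Phi w'$. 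Part (v) is identical in structure, replacing the intersection/conjunction in the conclusion by the union/disjunction and \assgiv\ by the strict--strict property \assgivp.

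The main obstacle I anticipate is exactly this pair: getting the directions of the preorder inequalities right, and using the total-preorder fact that a weak comparison which cannot be made strict must be an indifference, together with the symmetry of the partition $\{N_1,N_2\}$. Everything else is a mechanical consequence of \eqref{B-Rep} and the surjectivity of $B$.
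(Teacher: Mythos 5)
Your overall strategy is exactly the paper's: everything is funneled through \eqref{B-Rep} together with the surjectivity of $B$ onto $\mathcal{L_P^*}/\!\equiv$, with doubleton constraints $\model{B(E)}=\{w,w'\}$ extracting pairwise comparisons in one direction and arbitrary constraints giving the converse. Parts (i)--(iii), and the skeleton of (iv)--(v) (a witness in the intersection, comparisons coming from maximality, the symmetric application of \assgiv\ to the unordered partition $\Set{N_1,N_2}$ to force indifference, and transitivity to conclude membership in both maximal sets), all match the paper's proof; your use of $B(E)\equiv\top$ in (ii) is a harmless shortcut over the paper's doubleton argument there.

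There is, however, a systematic reversal of the preorder inequalities in part (iv) that makes the key step false as literally written. Under the reading of maximality that \eqref{B-Rep} and the rest of the paper actually use ($c\in\max(C,\succeq)$ iff $c\succeq x$ for all $x\in C$; the displayed definition in Section \ref{preliminaries} is itself mis-stated, which may be the source of your convention), a witness $u$ in $\max(M,\succeq_{\Phi\upharpoonright_{N_1}})\cap\max(M,\succeq_{\Phi\upharpoonright_{N_2}})$ and a point $w\in\max(M,\succeq_\Phi)$ give $u\succeq_{\Phi\upharpoonright_{N_1}}w$, $u\succeq_{\Phi\upharpoonright_{N_2}}w$ and $w\succeq_\Phi u$ --- not, as you assert, $w\succeq_{\Phi\upharpoonright_{N_i}}u$ and $u\succeq_\Phi w$. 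The same swap occurs in your converse direction: from $w\succeq_{\Phi\upharpoonright_{N_1}}w'$ and $w\succ_{\Phi\upharpoonright_{N_2}}w'$ both maximal sets over $\{w,w'\}$ contain $w$ (and the second equals $\{w\}$), and your own conclusion ``$\max(M,\succeq_\Phi)=\{w'\}$, i.e.\ $w\succ_\Phi w'$'' is internally inconsistent. The error is purely one of orientation: once every inequality is flipped, the argument is verbatim the paper's (if $u\succ_{\Phi\upharpoonright_{N_1}}w$ or $u\succ_{\Phi\upharpoonright_{N_2}}w$, then \assgiv\ yields $u\succ_\Phi w$, contradicting $w\succeq_\Phi u$; hence both comparisons are indifferences and $w$ inherits maximality from $u$ by transitivity), and the analogous correction carries your sketch of (v) through as well. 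So: right ideas, same route as the paper, but the inequalities in (iv)--(v) must be reoriented before the proof is actually correct.
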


The next two theorems are important representation results. They are straightforward consequences  of Proposition \ref{prop4}.

\begin{theorem}[Weak representation for ES  fusion operators]\label{teo2}
An ES combination operator $\nabla$ is an ES  fusion operator \miff there exists
a unique faithful assignment  $\Phi\mapsto\succeq_\Phi$ satisfying \eqref{B-Rep}.
\end{theorem}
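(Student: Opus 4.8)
The plan is to derive Theorem~\ref{teo2} as a combination of the weak representation theorem for basic fusion operators (Theorem~\ref{teo1}) with the postulate-by-postulate correspondence of Proposition~\ref{prop4}. The whole argument factors through two definitional observations: a faithful assignment is, by definition, a basic assignment additionally satisfying Properties \assgi--\assgiv; and an ES fusion operator is, by definition, an ES basic fusion operator additionally satisfying \sintv--\sintviii. So the task is to transport these two ``extra'' layers across the bijection already supplied by Theorem~\ref{teo1}.

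For the left-to-right direction I would start with an ES fusion operator $\nabla$. Since it satisfies \sinti--\sintiv it is in particular an ES basic fusion operator, so Theorem~\ref{teo1} yields a unique basic assignment $\Phi\mapsto\succeq_\Phi$ satisfying \eqref{B-Rep}. It then remains to upgrade this basic assignment to a faithful one, and this is exactly where Proposition~\ref{prop4} does the work: because $\nabla$ also satisfies \sintv, \sintvi, \sintvii and \sintviii, parts (i)--(iv) of the proposition give that the associated assignment satisfies \assgi, \assgii, \assgiii and \assgiv respectively. Hence $\Phi\mapsto\succeq_\Phi$ is a basic assignment satisfying \assgi--\assgiv, that is, a faithful assignment.

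Conversely, suppose there exists a faithful assignment $\Phi\mapsto\succeq_\Phi$ satisfying \eqref{B-Rep}. A faithful assignment is in particular a basic assignment, so the backward direction of Theorem~\ref{teo1} immediately gives that $\nabla$ satisfies \sinti--\sintiv. Applying Proposition~\ref{prop4} once more, now reading each biconditional in the direction ``property of the assignment $\Rightarrow$ postulate on $\nabla$'', the fact that the assignment satisfies \assgi--\assgiv yields that $\nabla$ satisfies \sintv--\sintviii. Combined, $\nabla$ satisfies \sinti--\sintviii and is thus an ES fusion operator. For uniqueness I would note that any faithful assignment satisfying \eqref{B-Rep} is a fortiori a basic assignment satisfying \eqref{B-Rep}; by the uniqueness clause of Theorem~\ref{teo1} there is at most one such basic assignment, whence the faithful assignment is unique.

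The argument involves no genuine computation, so the only point requiring care---the closest thing to an obstacle---is to ensure that Proposition~\ref{prop4} is applied to \emph{the same} assignment produced by Theorem~\ref{teo1}, and not to some a priori different one. Since Proposition~\ref{prop4} is stated precisely for ``the basic assignment associated to $\nabla$ by Theorem~\ref{teo1}'', this matching is automatic in the forward direction; in the backward direction one uses uniqueness in Theorem~\ref{teo1} to identify the given faithful assignment with that canonical one, so that the equivalences of Proposition~\ref{prop4} indeed apply to it.
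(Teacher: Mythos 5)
Your proposal is correct and matches the paper's own argument: the paper explicitly states that Theorems \ref{teo2} and \ref{teo3} are straightforward consequences of Proposition \ref{prop4} (together with Theorem \ref{teo1}), which is precisely the decomposition you carry out, including the reduction of uniqueness to the uniqueness clause of Theorem \ref{teo1}.
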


\begin{theorem}[Weak representation for ES  quasifusion operators]\label{teo3}
An ES combination operator $\nabla$ is an ES  quasifusion operator \miff there exists
a unique quasifaithful assignment  $\Phi\mapsto\succeq_\Phi$ satisfying \eqref{B-Rep}.
\end{theorem}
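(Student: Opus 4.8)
The plan is to obtain this theorem as an immediate corollary of the weak representation theorem for ES basic fusion operators (Theorem \ref{teo1}) together with the postulate/property correspondence of Proposition \ref{prop4}, exactly as the remark preceding the statement announces. The key observation is that the class of ES quasifusion operators is obtained from the class of ES basic fusion operators by adding precisely the postulates \sintv, \sintvi, \sintvii\ and \sintviiiw, while the class of quasifaithful assignments is obtained from the class of basic assignments by adding precisely Properties \assgi, \assgii, \assgiii\ and \assgivp. Since Proposition \ref{prop4} matches these two lists item by item, once Theorem \ref{teo1} supplies the underlying basic assignment satisfying \eqref{B-Rep}, nothing beyond a careful bookkeeping of the definitions is required.

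For the forward direction I would argue as follows. Suppose $\nabla$ is an ES quasifusion operator. Then in particular it satisfies \sinti--\sintiv, hence it is an ES basic fusion operator, and Theorem \ref{teo1} yields a unique basic assignment $\Phi\mapsto\succeq_\Phi$ satisfying \eqref{B-Rep}. Since $\nabla$ also satisfies \sintv, \sintvi, \sintvii\ and \sintviiiw, parts (i), (ii), (iii) and (v) of Proposition \ref{prop4} give that $\Phi\mapsto\succeq_\Phi$ satisfies Properties \assgi, \assgii, \assgiii\ and \assgivp. Being a basic assignment enjoying these four properties, it is by definition a quasifaithful assignment satisfying \eqref{B-Rep}, which establishes existence.

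For the converse, assume that $\Phi\mapsto\succeq_\Phi$ is a quasifaithful assignment satisfying \eqref{B-Rep}. As a quasifaithful assignment it is in particular a basic assignment, so Theorem \ref{teo1} tells us that $\nabla$ is an ES basic fusion operator and that $\Phi\mapsto\succeq_\Phi$ is the unique basic assignment associated with it. Applying the same four equivalences of Proposition \ref{prop4} in the other direction, Properties \assgi, \assgii, \assgiii\ and \assgivp\ force $\nabla$ to satisfy \sintv, \sintvi, \sintvii\ and \sintviiiw; together with \sinti--\sintiv\ this means $\nabla$ is an ES quasifusion operator. Uniqueness is inherited directly from Theorem \ref{teo1}: any quasifaithful assignment verifying \eqref{B-Rep} is a basic assignment verifying \eqref{B-Rep}, and there is exactly one such. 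The only point demanding attention—rather than a genuine obstacle—is to keep the two lists aligned correctly under Proposition \ref{prop4}, and in particular to pair \sintviiiw\ with \assgivp\ via part (v), not with \assgiv; the genuine mathematical content of the result lives entirely in Theorem \ref{teo1} and Proposition \ref{prop4}, which are here taken as given.
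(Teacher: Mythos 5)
Your proposal is correct and follows exactly the route the paper intends: the paper gives no separate proof of Theorem \ref{teo3} beyond remarking that it is a straightforward consequence of Theorem \ref{teo1} and Proposition \ref{prop4}, and your bookkeeping (pairing \sintv, \sintvi, \sintvii, \sintviiiw\ with \assgi, \assgii, \assgiii, \assgivp\ via parts (i), (ii), (iii), (v)) is precisely the intended argument, including the correct pairing of \sintviiiw\ with \assgivp\ rather than \assgiv.
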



{
 Unlike the representation theorem for IC merging operators \cite{KP02,KP11}, the previous theorems do not allow the construction of $\nabla$ from the basic assignment. However, they allow the representation of
 the  beliefs of the result via the total preorders of the assignment.
}

\section{Impossibility in Epistemic Fusion}\label{Impossibility}

We would like the fusion process to have  properties guaranteeing more global satisfaction. These good properties are based in the following principles:

 \begin{enumerate}
   \item[a.] Any result is possible in the framework of the restrictions of the system.
   \item[b.] If all the agents have the same epistemic state it is this state which determines the result of the fusion.
   \item[c.] The fusion process depends only on how  the restrictions are related in the individual epistemic states.
   \item[d.] The group belief base, obtained as the result of the fusion process, does not depend on one unique agent.
 \end{enumerate}

 Unfortunately, as we will see in this section, all these principles cannot coexist. More precisely, we will prove that for a fusion process  satisfying the three first principles, necessarily there exists an agent that will impose his beliefs.

 Next, we formulate the above principles in  logical terms based on our operators over epistemic states.

\subsection{Other social properties for fusion operators}

In order to give a formulation of the  principles stated above, we will suppose from now on that there are at least two propositional variables, \ie\ the set $\mathcal{P}$ of atomic formulas has cardinality greater or equal to two. In particular, there will be at least four interpretations in $\mathcal{W_P}$.

The first property we present will be called {\em Standard Domain}.
 It states some ``richness'' in the set of results of the fusion process. This is an approximation of the first principle which says that any  result
 is possible in the framework of the restrictions of the system.

\begin{description}
\item[{\bf(ESF-SD)}] For any $i$ in $\mathcal S$, for any triple of interpretations $w$, $w'$ and $w''$ in $\mathcal{W_P}$, and any couple of epistemic states  $E_{w,w'}$ and $E_{w',w''}$, such that
    \mbox{$\model{B(E_{w,w'})}=\{w,w'\}$} and $\model{B(E_{w',w''})}=\{w',w''\}$, the following conditions hold:
        \begin{enumerate}
          \item[(i)]  There exists an $i$-profile  $E_i$  such that $B\big(\nabla(E_i,E_{w,w'})\big)\equiv\varphi_{w,w'}$ and  $B\big(\nabla(E_i,E_{w',w''})\big)\equiv \varphi_{w',w''}$
          \item[(ii)] There exists an $i$-profile  $E_i$  such that $B\big(\nabla(E_i,E_{w,w'})\big)\equiv\varphi_{w,w'}$ and $B\big(\nabla(E_i,E_{w',w''})\big)\equiv \varphi_{w'}$
          \item[(iii)] There exists an $i$-profile  $E_i$  such that $B\big(\nabla(E_i,E_{w,w'})\big)\equiv\varphi_{w}$ and $B\big(\nabla(E_i,E_{w',w''})\big)\equiv\varphi_{w',w''}$
          \item[(iv)] There exists an $i$-profile  $E_i$  such that $B\big(\nabla(E_i,E_{w,w'})\big)\equiv\varphi_{w}$ and $B\big(\nabla(E_i,E_{w',w''})\big)\equiv\varphi_{w'}$
        \end{enumerate}
\end{description}


\sintSD\ establishes that any result is possible  when the constraints have beliefs with at most two models. Actually, this postulate is more related to the non-imposition postulate in Social Choice Theory.

\begin{observation}\label{remark-shapes}
It is worth noting that for basic fusion operators (those satisfying Theorem~\ref{teo1}), the satisfaction of this postulate is equivalent to the following fact:
 for any agent $i$ in $\mathcal{S}$, any triple of interpretations $w$, $w'$ and $w''$ in $\mathcal{W_P}$, and any total preorder $\succeq$ between these interpretations, there is an $i$-profile $E_i$ such that $\succeq=\succeq_{E_i}\upharpoonright_{\Set{w,w',w''}}$.
\end{observation}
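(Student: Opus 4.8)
The plan is to use the weak representation of Theorem~\ref{teo1}. Since $\nabla$ is a basic fusion operator, it is weakly represented by its unique basic assignment $\Phi\mapsto\succeq_\Phi$, so for every $i$-profile $E_i$ and every epistemic state $E$ we have $\model{B(\nabla(E_i,E))}=\max(\model{B(E)},\succeq_{E_i})$. First I would read off what each clause of \sintSD\ says at the level of the preorder $\succeq_{E_i}$. Applying \eqref{B-Rep} to $E_{w,w'}$ (with $\model{B(E_{w,w'})}=\Set{w,w'}$) gives $\model{B(\nabla(E_i,E_{w,w'}))}=\max(\Set{w,w'},\succeq_{E_i})$, which equals $\Set{w,w'}$ exactly when $w\simeq_{E_i} w'$ and equals $\Set{w}$ exactly when $w\succ_{E_i} w'$; symmetrically for the pair $\Set{w',w''}$. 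Hence clauses (i)--(iv) of \sintSD\ are precisely the four combinations ``$w\simeq w'\ \&\ w'\simeq w''$'', ``$w\simeq w'\ \&\ w'\succ w''$'', ``$w\succ w'\ \&\ w'\simeq w''$'' and ``$w\succ w'\ \&\ w'\succ w''$'' for $\succeq_{E_i}$.

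The key observation is that, because $\succeq_{E_i}$ is a total preorder, the two comparisons $w$ versus $w'$ and $w'$ versus $w''$ determine the comparison $w$ versus $w''$ by transitivity: in each of the four cases above one obtains exactly the preorder on $\Set{w,w',w''}$ in which $w\succeq w'\succeq w''$ with the prescribed indifferences and strict drops. Thus, for a fixed ordered triple $(w,w',w'')$, the four clauses of \sintSD\ correspond bijectively to the four total preorders of $\Set{w,w',w''}$ that are weakly decreasing along $w,w',w''$. Running over all orderings of the three interpretations, these decreasing patterns exhaust all the total preorders on a three-element set: any such preorder, after relabelling the elements so that they are listed in non-increasing order of plausibility, becomes one of the four patterns for the corresponding ordering. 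This sorting step is the crux of the argument, and it is the place where the universal quantification over triples in \sintSD\ is essential.

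With this dictionary the equivalence follows. For the direction \sintSD\ $\Rightarrow$ shape: given three distinct interpretations and any total preorder $\succeq$ on them, relabel them as $(w,w',w'')$ with $w\succeq w'\succeq w''$, choose $E_{w,w'}$, $E_{w',w''}$ with the required models (they exist since the image of $B$ is all of $\mathcal{L_P^*}/\!\equiv$), and apply the clause of \sintSD\ matching the indifference/strictness pattern of $\succeq$; the resulting $i$-profile $E_i$ satisfies $\succeq_{E_i}\upharpoonright_{\Set{w,w',w''}}=\succeq$ by the reading above. For the converse, fix an ordered triple and the constraints $E_{w,w'}$, $E_{w',w''}$; to obtain clause (i) (resp.\ (ii), (iii), (iv)) invoke the shape property for the preorder $w\simeq w'\simeq w''$ (resp.\ $w\simeq w'\succ w''$, $w\succ w'\simeq w''$, $w\succ w'\succ w''$), and \eqref{B-Rep} then yields the prescribed beliefs. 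I expect the only real work to be the bookkeeping in the preceding paragraph, namely verifying that the four monotone patterns, together with all relabellings, realize every total preorder on a three-element set; the passage between beliefs and preorders is a direct application of \eqref{B-Rep}.
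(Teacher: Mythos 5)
Your argument is correct: the paper states this as an Observation without proof, and your route --- translating each clause of \sintSD\ through \eqref{B-Rep} into the two pairwise comparisons $w$ vs.\ $w'$ and $w'$ vs.\ $w''$, letting transitivity of the total preorder fix the third comparison, and then noting that the four monotone patterns, taken over all orderings of a triple of distinct interpretations, exhaust the total preorders on a three-element set --- is evidently the intended argument, as confirmed by how the Observation is used in the proofs of Theorem~\ref{new-theorem-formulas} and Proposition~\ref{prop5}. The only implicit point worth making explicit is that the three interpretations must be pairwise distinct (otherwise clause (iv) of \sintSD\ is contradictory), which is the reading the paper itself adopts elsewhere.
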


The previous Observation and some natural combinatorial arguments tell us that  if the epistemic space is reduced to the consistent formulas modulo logical equivalence and the function $B$ is the identity then the basic fusion operators satisfying the Maximality Condition can not satisfy \sintSD.
That is precisely the next result.

\begin{theorem}\label{new-theorem-formulas}

Consider the epistemic space $(\mathcal{E},B,\mathcal{L_P})$ where $\mathcal{E}=\mathcal{L_P}^\ast/\!\equiv$,
with $\mathcal{L_P}$  a set of formulas built  over two propositional variables and
$B(\varphi)=\varphi$ for every element\footnote{By abuse we write $\varphi$ for the equivalence class of the propositional formula $\varphi$.} $\varphi$ in $\mathcal{E}$.
Let $\nabla:\mathcal{P}(\mathcal{S,E})\times \mathcal{E}\dans \mathcal{E}$ be an ES basic fusion operator satisfying the Maximality Condition.
Then $\nabla$ does not satisfy \sintSD.
\end{theorem}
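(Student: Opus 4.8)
The plan is to argue by a pigeonhole (counting) obstruction, first using Observation~\ref{remark-shapes} to convert \sintSD\ into a statement about which restricted preorders are realizable by $i$-profiles. Fix an arbitrary agent $i$. Since $\nabla$ is a basic fusion operator, Theorem~\ref{teo1} supplies a basic assignment $\Phi\mapsto\succeq_\Phi$, and Observation~\ref{remark-shapes} says that \sintSD\ holds iff for every triple $\{w,w',w''\}$ of interpretations and every total preorder $\succeq$ on that triple there is an $i$-profile $E_i$ with $\succeq_{E_i}\upharpoonright_{\{w,w',w''\}}=\succeq$. Hence it suffices to exhibit one triple and one total preorder on it that no $i$-profile can realize.

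The structural input comes from the hypotheses on the space. Because $\mathcal{E}=\mathcal{L_P}^\ast/\!\equiv$ and $B$ is the identity, an $i$-profile is just a nonempty set of models: the map $E_i\mapsto\model{B(E_i)}=\model{E_i}$ is a bijection between $i$-profiles and the $2^4-1=15$ nonempty subsets of $\mathcal{W_P}=\{w_1,w_2,w_3,w_4\}$ (there are four interpretations since $\mathcal{P}$ has two variables). Moreover, the Maximality Condition forces $\max(\succeq_{E_i})=\model{E_i}$, so the top layer of each preorder $\succeq_{E_i}$ is pinned down by the profile.

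First I would fix the triple $T=\{w_1,w_2,w_3\}$ and consider only the total preorders on $T$ having a unique maximal element; there are exactly $3\times 3=9$ of them (three choices of top element, times the three preorders on the remaining pair). Next I would show that any $i$-profile $E_i$ whose restriction $\succeq_{E_i}\upharpoonright_T$ has $w_j$ as strict top must satisfy $\model{E_i}\subseteq\{w_j,w_4\}$: if some $w_k\in\{w_1,w_2,w_3\}\setminus\{w_j\}$ lay in $\model{E_i}=\max(\succeq_{E_i})$, then $w_k$ would be $\succeq_{E_i}$-maximal, hence $w_k\succeq_{E_i}w_j$, contradicting $w_j\succ w_k$ in the restriction. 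Counting the admissible model sets $\bigcup_{j=1}^{3}\{M:\emptyset\neq M\subseteq\{w_j,w_4\}\}$ gives only $7$ of them, the shortfall being caused precisely by the fourth interpretation $w_4$, which is common to all three families. Since each profile contributes exactly one restricted preorder on $T$, at most $7$ of the $9$ unique-max preorders can be realized, so at least one is not; by Observation~\ref{remark-shapes} this means $\nabla$ violates \sintSD.

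The routine parts are the two enumerations (nine unique-max preorders, seven admissible model sets). The only real content, and the step I expect to be most delicate to phrase cleanly, is the coupling between the Maximality Condition and the restriction: translating ``$w_j$ is the strict top of $\succeq_{E_i}\upharpoonright_T$'' into the hard constraint $\model{E_i}\subseteq\{w_j,w_4\}$, which is what makes the pool of candidate profiles too small. Everything else is bookkeeping once Observation~\ref{remark-shapes} has reduced \sintSD\ to the realizability of restricted preorders.
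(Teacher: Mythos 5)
Your proof is correct, and it takes a genuinely different --- and more economical --- route than the paper's. The paper also starts from Observation~\ref{remark-shapes} and the identification of $i$-profiles with the fifteen nonempty subsets of $\mathcal{W_P}$, but it then runs a \emph{global} counting argument: it classifies the fifteen preorders $\succeq_\varphi$ by the size of $\model{\varphi}$ (which the Maximality Condition pins down as the top layer), observes that only the formulas with one or two models can produce restrictions with a unique maximum (the shapes it calls $S_3$ and $S_4$), counts that there are $12$ patterns of type $S_3$ and $24$ of type $S_4$ over all four triples, and finally checks, by enumerating every distribution of the ten small-model formulas among the admissible preorder types, that no distribution covering the $24$ patterns of type $S_4$ also covers the $12$ of type $S_3$. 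Your argument instead localizes to a single triple $T$: the Maximality Condition forces any profile whose restriction to $T$ has strict top $w_j$ to satisfy $\model{E_i}\subseteq\{w_j,w_4\}$, leaving only seven eligible profiles for the nine unique-max preorders on $T$ that Observation~\ref{remark-shapes} requires to be realized. The key lemma (strict top on the restriction forces $\model{E_i}\subseteq\{w_j,w_4\}$) is correctly derived from the Maximality Condition exactly as you state it, and the two enumerations check out, so the pigeonhole closes the proof. Both arguments lean on Observation~\ref{remark-shapes} in the same way, so the logical dependencies are identical; what yours buys is the elimination of the paper's case analysis over distributions in favor of a single local counting step.
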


The next result shows the richness and versatility of the fusion operators which satisfy this property:

\begin{proposition}\label{prop5}
 Let $(\mathcal{E},B,\mathcal{L_P})$ be an epistemic space, $\mathcal{S}$ be a set of agents and  $\nabla$  an ES basic fusion operator.  If  $\nabla$ satisfies \sintSD,  then  for
   any agent $i$ in $\mathcal{S}$, any $i$-profile $E_i$, any different triple of interpretations $w$, $w'$, $w''$ in $\mathcal{W_P}$ and any epistemic states $E_{w,w'}$, $E_{w,w''}$, $E_{w',w''}$   in $\mathcal{E}$, such that $\model{B(E_{w,w'})}=\{w,w'\}$, $\model{B(E_{w,w''})}=\{w,w''\}$ and $\model{B(E_{w',w''})}=\{w',w''\}$, the following conditions hold:
\begin{enumerate}
  \item[(i)] For all $j$ in $\mathcal{S}$, there exists a $j$-profile $E_j$ such that
  \begin{itemize}
    \item $B\big(\nabla(E_j,E_{w,w'})\big)\equiv B\big(\nabla(E_j,E_{w,w''})\big)\equiv\varphi_{w}$, and
    \item $B\big(\nabla(E_j,E_{w',w''})\big)\equiv B\big(\nabla(E_i,E_{w',w''})\big)$.
  \end{itemize}

  \item[(ii)] For all $k$ in $\mathcal{S}$, there exists a $k$-profile $E_k$ such that
  \begin{itemize}
    \item $B\big(\nabla(E_k,E_{w,w'})\big)\equiv\varphi_{w}$,
    \item $B\big(\nabla(E_k,E_{w',w''})\big)\equiv\varphi_{w''}$, and
    \item $B\big(\nabla(E_k,E_{w,w''})\big)\equiv B\big(\nabla(E_i,E_{w,w''})\big)$.
  \end{itemize}
\end{enumerate}

\end{proposition}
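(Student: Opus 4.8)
The plan is to move to the semantic side via the weak representation theorem and do all the work with total preorders on the triple $\{w,w',w''\}$. Since $\nabla$ is an ES basic fusion operator, Theorem~\ref{teo1} furnishes the unique basic assignment $\Phi\mapsto\succeq_\Phi$ satisfying \eqref{B-Rep}. For a $j$-profile $E_j$ and a constraint with two models, \eqref{B-Rep} specializes to $\model{B(\nabla(E_j,E_{w,w'}))}=\max(\{w,w'\},\succeq_{E_j})$, and likewise for $E_{w,w''}$ and $E_{w',w''}$. Reading off the maximal element of a two-point set, I would first record the dictionary: $B(\nabla(E_j,E_{w,w'}))\equiv\varphi_w$ iff $w\succ_{E_j}w'$; $B(\nabla(E_k,E_{w',w''}))\equiv\varphi_{w''}$ iff $w''\succ_{E_k}w'$; and $B(\nabla(E_j,E_{w',w''}))\equiv B(\nabla(E_i,E_{w',w''}))$ iff $\max(\{w',w''\},\succeq_{E_j})=\max(\{w',w''\},\succeq_{E_i})$, the last equality being determined solely by how $w'$ and $w''$ compare under each preorder.

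The engine of the proof is Observation~\ref{remark-shapes}: since $\nabla$ satisfies \sintSD, for every agent and every total preorder $\succeq$ on $\{w,w',w''\}$ there is a profile for that agent whose assigned preorder restricts to $\succeq$ on $\{w,w',w''\}$. Hence for each part it suffices to construct a single total preorder on $\{w,w',w''\}$ meeting the required strict comparisons and reproducing the comparison that $\succeq_{E_i}$ makes on the prescribed pair; the Observation then yields the needed $j$- (resp. $k$-) profile, and \eqref{B-Rep} translates the conclusion back to beliefs.

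For part (i) I want $\succeq$ with $w$ strictly above both $w'$ and $w''$ and with $w'$ versus $w''$ ordered as in $\succeq_{E_i}$. A case split on the $\succeq_{E_i}$-comparison of $w'$ and $w''$ gives the answer: take $w\succ w'\succ w''$, or $w\succ w''\succ w'$, or $w\succ w'\simeq w''$, according as $w'\succ_{E_i}w''$, $w''\succ_{E_i}w'$, or $w'\simeq_{E_i}w''$; in every case $\max(\{w',w''\},\succeq)=\max(\{w',w''\},\succeq_{E_i})$. Part (ii) is the mirror image: I want $\succeq$ with $w'$ strictly below both $w$ and $w''$ and with $w$ versus $w''$ ordered as in $\succeq_{E_i}$, obtained by taking $w\succ w''\succ w'$, $w''\succ w\succ w'$, or $w\simeq w''\succ w'$, according to the $\succeq_{E_i}$-comparison of $w$ and $w''$.

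No step is genuinely hard; the argument is bookkeeping once the translation is fixed. The two points that need care are ensuring that restricting $\succeq_{E_j}$ from $\{w,w',w''\}$ down to a two-element set preserves the relevant comparison (so that the maximal sets in the conclusion are indeed computed from the preorder delivered by the Observation), and organizing each case split around exactly the pair that must be matched to $\succeq_{E_i}$, so that the strict demands and the matching condition can be satisfied simultaneously. Everything else is a direct application of \eqref{B-Rep}.
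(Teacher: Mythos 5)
Your proof is correct and follows essentially the same route as the paper's: both use Standard Domain to produce a single-agent profile whose assigned preorder has the required shape on $\{w,w',w''\}$ and then read the three conclusions off \eqref{B-Rep}. The only cosmetic difference is that you package the Standard Domain step through Observation~\ref{remark-shapes} and prescribe the whole restricted preorder at once, whereas the paper invokes clauses (iii) and (iv) of \sintSD\ directly for the pairs $\{w,w'\}$ and $\{w',w''\}$ and recovers the behaviour on $E_{w,w''}$ by transitivity of $\succeq_{E_j}$.
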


The following two properties try to capture the meaning of the second of the principles above mentioned. The first of them is given in terms of similarity between the epistemic states of the agents. Actually, it is natural to think that if the agents totally coincide \ie\ all have the same epistemic state, then the result of the group can be determined for any individual agent. More precisely, we have the following postulate called {\em Unanimity condition}:

\begin{description}
\item[(ESF-U)]\em For all $N$ in $\mathcal{F}^\ast(\mathcal{S})$, for each  $N$-profile $\Phi$ and for every epistemic state $E$  in $\mathcal{E}$, if $E_i=Ej$, for any pair  $i$, $j$ in $N$, then, for all $i$ in $N$, $B\big(\nabla(\Phi,E)\big)\equiv B\big(\nabla(E_i,E)\big)$.
\end{description}

\sintU\ establishes that  if all the agents in the society have the same epistemic state,  then the result of the fusion is exactly the fusion (or the revision\footnote{When the society is formed by one unique agent, it is well known that the process of fusion corresponds to a revision of the epistemic state of the agent by the integrity constraints (see for instant \cite{KP11}).}) of the  society conformed by any of the agents given the integrity constraints.

The following result is important. It establishes that the  ES fusion operators satisfy the Unanimity condition.

\begin{proposition}\label{prop6}
       Let $\nabla$ be an ES combination operator. If $\nabla$ satisfies \sintii, \sintvii\ and \sintviii, it satisfies \sintU.
\end{proposition}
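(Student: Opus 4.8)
The plan is to argue by induction on the cardinality $n$ of the society $N$. Fix an $N$-profile $\Phi$ all of whose components equal a common \ee\ $E^\ast$, and fix an integrity constraint $E$. It suffices to show $B\big(\nabla(\Phi,E)\big)\equiv B\big(\nabla(E^\ast,E)\big)$: since $E_i=E^\ast$ for every $i\in N$, the singleton profiles $(E_i)$ and $(E^\ast)$ are equivalent, so by \sintii\ (with the integrity constraint unchanged) all the single-agent merges satisfy $B\big(\nabla(E_i,E)\big)\equiv B\big(\nabla(E^\ast,E)\big)$, which upgrades the displayed equivalence to the full conclusion of \sintU.

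The base case $n=1$ is immediate: there $\Phi$ is, by the paper's convention, the $i$-profile $E_i=E^\ast$ itself, so the two sides literally coincide. For the inductive step I would partition $N$ into $N_1=\{i_1\}$ and $N_2=N\setminus\{i_1\}$. The restricted profile $\Phi\upharpoonright_{_{N_2}}$ again has all components equal to $E^\ast$ and has size $n-1$, so the induction hypothesis gives $B\big(\nabla(\Phi\upharpoonright_{_{N_2}},E)\big)\equiv B\big(\nabla(E^\ast,E)\big)$, while trivially $B\big(\nabla(\Phi\upharpoonright_{_{N_1}},E)\big)=B\big(\nabla(E^\ast,E)\big)$. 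Hence the two subgroup beliefs are both equivalent to the single formula $B\big(\nabla(E^\ast,E)\big)$, and so is their conjunction.

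The crucial point, and the one place where care is needed, is verifying the consistency side condition of \sintviii. Because the epistemic space definition forces the image of $B$ modulo equivalence to be exactly $\Form^\ast/\!\equiv$, every belief base is non-contradictory; in particular $B\big(\nabla(E^\ast,E)\big)\not\vdash\bot$. Consequently $B\big(\nabla(\Phi\upharpoonright_{_{N_1}},E)\big)\wedge B\big(\nabla(\Phi\upharpoonright_{_{N_2}},E)\big)\not\vdash\bot$, so \sintviii\ applies and yields $B\big(\nabla(\Phi,E)\big)\vdash B\big(\nabla(\Phi\upharpoonright_{_{N_1}},E)\big)\wedge B\big(\nabla(\Phi\upharpoonright_{_{N_2}},E)\big)\equiv B\big(\nabla(E^\ast,E)\big)$. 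In the reverse direction, \sintvii\ gives $B\big(\nabla(E^\ast,E)\big)\equiv B\big(\nabla(\Phi\upharpoonright_{_{N_1}},E)\big)\wedge B\big(\nabla(\Phi\upharpoonright_{_{N_2}},E)\big)\vdash B\big(\nabla(\Phi,E)\big)$. The two entailments sandwich the desired equivalence $B\big(\nabla(\Phi,E)\big)\equiv B\big(\nabla(E^\ast,E)\big)$, closing the induction.

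I expect the main obstacle to be precisely the activation of \sintviii: its conclusion is available only when the two recombined subgroup beliefs are jointly consistent, and this is not automatic from the recursion alone. What makes the argument go through is the non-contradictoriness built into the belief function by the epistemic space axiom, which guarantees the side condition is met; once that is in place, the proof is a routine sandwich using \sintvii\ and \sintviii, with \sintii\ reducing the ``all agents equal'' hypothesis to a single reference agent.
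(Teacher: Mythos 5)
Your proof is correct and follows essentially the same route as the paper's: induction on the size of $N$, splitting off a single agent, using \sintii\ to identify the singleton merges, and then sandwiching with \sintvii\ and \sintviii. The one point you elaborate that the paper leaves implicit is the consistency side condition for \sintviii, which indeed holds because the epistemic space axiom forces every belief base, in particular $B\big(\nabla(E^\ast,E)\big)$, to be non-contradictory.
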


The converse of this result does not hold, as can be seen in Section \ref{examples}.

In the following result we give a semantic characterization of the Unanimity condition.

\begin{proposition}\label{prop7}
  Let $\nabla$ be an ES basic fusion operator. Then the following statements are equivalent:
\begin{enumerate}
  \item[(i)] $\nabla$ satisfies \sintU.
  \item[(ii)] For all $N$ in $\mathcal{F}^\ast(\mathcal{S})$, for each $N$-profile $\Phi$, for every epistemic state $E$ in $\mathcal{E}$ such that $\Big|\model{B(E)}\Big|\leq 2$, if $E_i=E_j$ for each pair $i$, $j$ in $N$, then $B\big(\nabla(\Phi,E)\big)\equiv B\big(\nabla(E_i,E)\big)$, for all $i$ in $N$.
  \item[(iii)] The assignment $\Phi\mapsto\succeq_\Phi$, representing  $\nabla$, satisfies the following property:\smallskip
  \begin{description}
    \item[({\em u})] For each society $N$, for each $N$-profile $\Phi$,  if $E_i=E_j$ for each couple $i$, $j$ in $N$, then  $\succeq_\Phi=\succeq_{E_i}$ for all $i$ in $N$.
  \end{description}
\end{enumerate}
\end{proposition}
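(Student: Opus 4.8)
The plan is to establish the equivalence through the cycle $(i)\Rightarrow(ii)\Rightarrow(iii)\Rightarrow(i)$, using throughout the representation of Theorem~\ref{teo1}, which supplies the unique basic assignment $\Phi\mapsto\succeq_\Phi$ with $\model{B(\nabla(\Phi,E))}=\max(\model{B(E)},\succeq_\Phi)$. The implication $(i)\Rightarrow(ii)$ is immediate, since $(ii)$ is exactly the restriction of $(i)$ to those integrity constraints $E$ with $\big|\model{B(E)}\big|\leq 2$.

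For $(ii)\Rightarrow(iii)$, the guiding idea is that a total preorder over $\mathcal{W_P}$ is determined by the way it ranks each pair of interpretations, and that two-model constraints let one read off that ranking through the operator. So fix a society $N$, an $N$-profile $\Phi$ with $E_i=E_j$ for all $i,j\in N$, and an agent $i\in N$; I must show $\succeq_\Phi=\succeq_{E_i}$. Given two distinct interpretations $w,w'$, the richness clause in the definition of epistemic space provides an $E$ with $\model{B(E)}=\{w,w'\}$. Applying \eqref{B-Rep} to $\Phi$ and to the $i$-profile $E_i$ gives $\model{B(\nabla(\Phi,E))}=\max(\{w,w'\},\succeq_\Phi)$ and $\model{B(\nabla(E_i,E))}=\max(\{w,w'\},\succeq_{E_i})$. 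Since $\big|\model{B(E)}\big|=2$, hypothesis $(ii)$ forces these two sets to be equal. But for a two-element set the value of $\max(\{w,w'\},\succeq)$ determines whether $w\succ w'$, $w'\succ w$, or $w\simeq w'$; hence $\succeq_\Phi$ and $\succeq_{E_i}$ rank $w$ and $w'$ identically. As the pair was arbitrary (the case $w=w'$ being trivial by reflexivity), the two preorders coincide, which is precisely property $(u)$.

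For $(iii)\Rightarrow(i)$, assume property $(u)$ and take any society $N$, any $N$-profile $\Phi$ with $E_i=E_j$ throughout, any $E$ in $\mathcal{E}$, and any $i\in N$. By $(u)$ we have $\succeq_\Phi=\succeq_{E_i}$, so \eqref{B-Rep} applied to $\Phi$ and to $E_i$ yields $\model{B(\nabla(\Phi,E))}=\max(\model{B(E)},\succeq_\Phi)=\max(\model{B(E)},\succeq_{E_i})=\model{B(\nabla(E_i,E))}$, that is $B(\nabla(\Phi,E))\equiv B(\nabla(E_i,E))$. This is exactly \sintU, closing the cycle.

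The only delicate step is $(ii)\Rightarrow(iii)$, and its whole force comes from two facts: that constraints with exactly two models exist (guaranteed by the image condition on $B$), and that by \eqref{B-Rep} such constraints probe the assignment one pair at a time. The other two implications are purely formal consequences of the representation theorem.
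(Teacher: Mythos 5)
Your proof is correct and follows essentially the same route as the paper: $(i)\Rightarrow(ii)$ is the trivial restriction, $(ii)\Rightarrow(iii)$ probes the assignment pair by pair using two-model constraints and \eqref{B-Rep}, and $(iii)\Rightarrow(i)$ is an immediate application of \eqref{B-Rep}. The only difference is that you spell out a few details the paper leaves implicit (existence of the two-model constraint via the image condition on $B$, and the case analysis on $\max(\{w,w'\},\succeq)$), which does not change the argument.
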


Note that this last condition is  closer to the formulations of Unanimity in Social Choice Theory.

Now we give a second syntactical formulation of the second principle: if all the agents reject a given alternative, this alternative will be rejected
in the result of fusion. This form of excluding unanimity is known in Social Choice Theory like {\em Pareto condition}. We keep this name for the following postulate:

\begin{description}
\item[(ESF-P)]\em For all $N$ in $\mathcal{F}^\ast(\mathcal{S})$, for each $N$-profile $\Phi$, for all epistemic states $E$ and $E'$ in $\mathcal{E}$,
if $\Bigwedge_{i\in N} B\big(\nabla(E_i,E)\big)\not\vdash\bot$ and
 $B\big(\nabla(E_i,E)\big)\wedge B(E')\vdash\bot$  for all $i$ in $N$, then
    $B\big(\nabla(\Phi,E)\big)\wedge B(E')\vdash\bot$.
\end{description}

The following result entails that ES fusion operators satisfy the Pareto condition:

\begin{proposition}\label{prop16}
      If $\nabla$ is an ES combination operator that satisfies \sintvii\ and \sintviiiw, then $\nabla$ also satisfies \sintP.
      \end{proposition}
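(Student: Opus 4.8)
The plan is to reduce \sintP\ to a single disjunctive entailment and then establish that entailment by induction on the size of the society. The hypotheses of \sintP\ say that each individual result is inconsistent with $B(E')$, that is $B\big(\nabla(E_i,E)\big)\vdash\neg B(E')$ for every $i\in N$. Hence if I can show
\[
B\big(\nabla(\Phi,E)\big)\vdash\bigvee_{i\in N}B\big(\nabla(E_i,E)\big),
\]
then the right-hand side entails $\neg B(E')$, so $B\big(\nabla(\Phi,E)\big)\wedge B(E')\vdash\bot$, which is exactly the conclusion of \sintP. Thus the whole problem reduces to proving this disjunctive entailment under the standing consistency hypothesis $\Bigwedge_{i\in N}B\big(\nabla(E_i,E)\big)\not\vdash\bot$.

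To prove the disjunctive entailment I intend to apply \sintviiiw\ iteratively, peeling off one agent at a time. The obstacle is that \sintviiiw\ only fires when the two sub-results are jointly consistent, so at each step I must guarantee that $B\big(\nabla(\Phi\upharpoonright_{_{N_1}},E)\big)\wedge B\big(\nabla(\Phi\upharpoonright_{_{N_2}},E)\big)\not\vdash\bot$. This is where \sintvii\ enters. First I would prove, by induction on $|N|$ using \sintvii, the auxiliary entailment $\Bigwedge_{i\in N}B\big(\nabla(E_i,E)\big)\vdash B\big(\nabla(\Phi,E)\big)$: the inductive step splits $N$ into a singleton and the rest, applies \sintvii\ to that partition, and combines with the induction hypothesis on the smaller society. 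Applying this auxiliary fact to each block of an arbitrary partition $\Set{N_1,N_2}$ shows that $\Bigwedge_{i\in N}B\big(\nabla(E_i,E)\big)$ entails $B\big(\nabla(\Phi\upharpoonright_{_{N_1}},E)\big)\wedge B\big(\nabla(\Phi\upharpoonright_{_{N_2}},E)\big)$; since the left side is consistent by hypothesis, so is the right side, which is precisely the premise needed to invoke \sintviiiw.

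With consistency secured, I would prove the disjunctive entailment by induction on $|N|$. For $|N|=1$ it is trivial. For $|N|=n\geq 2$, partition $N=\Set{i_1}\cup N'$ with $N'=N\setminus\Set{i_1}$; the consistency argument above licenses \sintviiiw\ to give $B\big(\nabla(\Phi,E)\big)\vdash B\big(\nabla(E_{i_1},E)\big)\vee B\big(\nabla(\Phi\upharpoonright_{_{N'}},E)\big)$. Since $\Bigwedge_{i\in N'}B\big(\nabla(E_i,E)\big)$ is a sub-conjunction of a consistent conjunction it is itself consistent, so the induction hypothesis applies to $N'$ and yields $B\big(\nabla(\Phi\upharpoonright_{_{N'}},E)\big)\vdash\bigvee_{i\in N'}B\big(\nabla(E_i,E)\big)$; substituting produces the desired $B\big(\nabla(\Phi,E)\big)\vdash\bigvee_{i\in N}B\big(\nabla(E_i,E)\big)$, which together with the reduction above proves \sintP.

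I expect the main difficulty to be the bookkeeping of the consistency premise of \sintviiiw\ rather than any deep idea: the genuinely load-bearing observation is that \sintvii\ forces the conjunction of the individual results to entail every partial-group result, and this is exactly what keeps the sub-results jointly consistent at each stage of the induction.
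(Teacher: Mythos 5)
Your proof is correct and follows essentially the same route as the paper's: an induction on $|N|$ that peels off one agent, uses \sintvii\ to show the conjunction of individual results entails each sub-group result (thereby securing the consistency premise of \sintviiiw), and then applies \sintviiiw\ to conclude. The only difference is cosmetic --- you isolate the stronger intermediate claim $B\big(\nabla(\Phi,E)\big)\vdash\bigvee_{i\in N}B\big(\nabla(E_i,E)\big)$ as a standalone lemma, whereas the paper carries the inconsistency with $B(E')$ directly through the induction.
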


The converse is not true as  will be seen in Section \ref{examples}.

The Pareto condition  has also a semantical characterization that we give in the next result:

\begin{proposition}\label{prop8}
Let $\nabla$ be an ES basic fusion operator. Then the following statements are equivalent:
\begin{enumerate}
  \item[(i)] $\nabla$ satisfies \sintP
  \item[(ii)] For all $N$ in $\mathcal{F}^\ast(\mathcal{S})$, for every $N$-profile $\Phi$, for all epistemic states $E$ and $E'$ in $\mathcal{E}$ such that $\Big|\model{B(E)}\Big|\leq 2$, if $B\big(\nabla(E_i,E)\big)\wedge B(E')\vdash\bot$ for all $i$ in $N$, then $B\big(\nabla(\Phi,E)\big)\wedge       B(E')\vdash\bot$
  \item[(iii)] The assignment $\Phi\mapsto\succeq_\Phi$, representing  $\nabla$, satisfies the following property:\smallskip
  \begin{description}
    \item[({\em p})] For all $N$ in $\mathcal{F}^\ast(\mathcal{S})$, for each $N$-profile $\Phi$,  for all interpretations $w$ y $w'$,
    if  $w\succ_{E_i} w'$ for all $i$ in $N$, then $w\succ_\Phi w'$
  \end{description}
\end{enumerate}
\end{proposition}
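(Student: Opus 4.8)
The plan is to prove the equivalences by establishing the cycle (i)$\Rightarrow$(ii)$\Rightarrow$(iii)$\Rightarrow$(i). Throughout I would lean on the semantic description of $\nabla$ provided by Theorem~\ref{teo1}, namely \eqref{B-Rep}, which reads $\model{B(\nabla(\Phi,E))}=\max(\model{B(E)},\succeq_\Phi)$, together with \sinti\ (giving $\model{B(\nabla(\Phi,E))}\subseteq\model{B(E)}$) and the richness of the epistemic space (for every nonempty $M\subseteq\Val$ there is an $E$ with $\model{B(E)}=M$). Since $\nabla$ is an ES basic fusion operator, the associated basic assignment $\Phi\mapsto\succeq_\Phi$ is well defined, so it makes sense to translate all three statements into assertions about the maxima of the $\succeq_\Phi$.

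For (i)$\Rightarrow$(ii), I would assume $\big|\model{B(E)}\big|\leq 2$ and $B(\nabla(E_i,E))\wedge B(E')\vdash\bot$ for every $i\in N$. If moreover $\Bigwedge_{i\in N}B(\nabla(E_i,E))\not\vdash\bot$, the conclusion is immediate from \sintP. Otherwise the sets $\model{B(\nabla(E_i,E))}$ are nonempty subsets, with empty intersection, of a set of size at most two; a short combinatorial argument forces $\model{B(E)}=\{w,w'\}$ and the existence of indices $i,j$ with $\model{B(\nabla(E_i,E))}=\{w'\}$ and $\model{B(\nabla(E_j,E))}=\{w\}$. Applying the hypothesis to these two indices gives $w,w'\notin\model{B(E')}$, hence $\model{B(E)}\cap\model{B(E')}=\emptyset$, and since $\model{B(\nabla(\Phi,E))}\subseteq\model{B(E)}$ we get $B(\nabla(\Phi,E))\wedge B(E')\vdash\bot$. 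For (ii)$\Rightarrow$(iii), I would fix $N$, an $N$-profile $\Phi$ and worlds $w,w'$ with $w\succ_{E_i}w'$ for all $i\in N$ (so $w\neq w'$), and choose $E,E'$ with $\model{B(E)}=\{w,w'\}$ and $\model{B(E')}=\{w'\}$. Then $\model{B(\nabla(E_i,E))}=\max(\{w,w'\},\succeq_{E_i})=\{w\}$, so $B(\nabla(E_i,E))\wedge B(E')\vdash\bot$ for each $i$; since $\big|\model{B(E)}\big|=2$, condition (ii) yields $w'\notin\max(\{w,w'\},\succeq_\Phi)$, and as this maximum is a nonempty subset of $\{w,w'\}$ it equals $\{w\}$, which is exactly $w\succ_\Phi w'$, i.e.\ property (p).

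For (iii)$\Rightarrow$(i), I would assume property (p) together with the two premises of \sintP\ and argue by contradiction. Suppose some $w^\ast$ lies in $\max(\model{B(E)},\succeq_\Phi)\cap\model{B(E')}$. The premise $B(\nabla(E_i,E))\wedge B(E')\vdash\bot$ says $\max(\model{B(E)},\succeq_{E_i})\cap\model{B(E')}=\emptyset$ for each $i$, so $w^\ast$ is maximal in $\model{B(E)}$ for none of the $\succeq_{E_i}$. The consistency premise $\Bigwedge_{i\in N}B(\nabla(E_i,E))\not\vdash\bot$ means $\bigcap_{i\in N}\max(\model{B(E)},\succeq_{E_i})\neq\emptyset$, so I may pick $u$ in this common maximum. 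For each $i$, $u$ is maximal and $w^\ast$ is not, which forces $u\succ_{E_i}w^\ast$; property (p) then gives $u\succ_\Phi w^\ast$, contradicting the maximality of $w^\ast$ under $\succeq_\Phi$. Hence no such $w^\ast$ exists and $B(\nabla(\Phi,E))\wedge B(E')\vdash\bot$.

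The main obstacle is this last implication, specifically the step that produces a single world $u$ strictly preferred to $w^\ast$ by every agent: this is where the consistency premise $\Bigwedge_{i\in N}B(\nabla(E_i,E))\not\vdash\bot$ of \sintP\ is indispensable, as it is precisely what guarantees a common maximal element of $\model{B(E)}$ across all the $\succeq_{E_i}$ to which (p) can be applied. Care is also needed in (i)$\Rightarrow$(ii) to handle the subcase in which that consistency premise fails, which is exactly what the combinatorial argument using $\big|\model{B(E)}\big|\leq 2$ resolves.
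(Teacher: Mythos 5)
Your proof is correct and takes essentially the same route as the paper's: the same cycle (i)$\Rightarrow$(ii)$\Rightarrow$(iii)$\Rightarrow$(i), the same choice of constraints with $\model{B(E)}=\{w,w'\}$ and $\model{B(E')}=\{w'\}$ in (ii)$\Rightarrow$(iii), and the same contradiction via property ({\em p}) applied to a common maximal world in (iii)$\Rightarrow$(i). The only cosmetic difference is that in (i)$\Rightarrow$(ii) you split on the consistency of $\Bigwedge_{i\in N}B\big(\nabla(E_i,E)\big)$ rather than on that of $B(E)\wedge B(E')$, but the two case analyses reduce to the same pair of observations (one case handled by \sinti, the other by \sintP).
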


We continue stating the syntactical postulate aiming to catch  the third principle:
{  \em the fusion process depends only on how  the restrictions in the individual epistemic states are related.
}

In Social Choice Theory this idea is captured by  the postulate known as {\em Independence of Irrelevant Alternatives}. In our framework this postulate
will be stated in the following manner:

\begin{description}
\item[(ESF-I)]\em For all $N$ in $\mathcal{F}^\ast(\mathcal{S})$, for all  $N$-profiles $\Phi$ and $\Phi'$, for every epistemic state
$E$ we have $B\big(\nabla(\Phi,E)\big)\equiv B\big(\nabla(\Phi',E)\big)$, whenever for each epistemic state $E'$ such that
      \mbox{$B(E')\vdash B(E)$}, we have
    $B\big(\nabla(E_j,E')\big)\equiv B\big(\nabla(E'_j,E')\big)$ for all $j\in N$.
\end{description}

This property, called {\em Independence condition}, essentially  says the following: Given an integrity constraint, if each agent in the fusion process has  two possible choices of epistemic states, and  if revising these epistemic states by  integrity constraints having beliefs stronger than the given  integrity constraint,  the beliefs of the epistemic states resulting coincide, then
the result of the fusion of the society of agents under the given integrity constraint is the same, at the level of beliefs, for any  choice of the epistemic state made by each agent.

The next result gives a simplification of the Postulate \sintI\ for the ES basic fusion operators and also a semantic characterization.

\begin{proposition}\label{prop9}
Let $\nabla$ be an ES basic fusion operator. Then the following statements are equivalent:
\begin{enumerate}
  \item[(i)] $\nabla$ satisfies \sintI
  \item[(ii)] For all $N$ in $\mathcal{F}^\ast(\mathcal{S})$, for all  $N$-profiles $\Phi$ and $\Phi'$, for all
    $E$ in $\mathcal{E}$ such that \mbox{$\Big|\model{B(E)}\Big|\leq 2$}, if\linebreak
   $B\big(\nabla(E_j,E)\big)\equiv B\big(\nabla(E_j',E)\big)$, for all $j$ in $N$, then
      $B\big(\nabla(\Phi,E)\big)\equiv B\big(\nabla(\Phi',E)\big)$
  \item[(iii)] The assignment $\Phi\mapsto\succeq_\Phi$, representing  $\nabla$, satisfies the following property:\smallskip
  \begin{description}
    \item[({\em ind})] For all $N$ in $\mathcal{F}^\ast(\mathcal{S})$, for all  $N$-profiles $\Phi$ and $\Phi'$, for all interpretations $w$ and $w'$, if
        $\succeq_{E_i}\upharpoonright_{\{w,w'\}}=\succeq_{E_i'}\upharpoonright_{\{w,w'\}}$, for all $i$ in $N$, then $\succeq_{\Phi}\upharpoonright_{\{w,w'\}}=\succeq_{\Phi'}\upharpoonright_{\{w,w'\}}$
  \end{description}
\end{enumerate}
\end{proposition}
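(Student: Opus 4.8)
The plan is to route everything through the representation \eqref{B-Rep} of Theorem~\ref{teo1}, which gives, for any profile $\Psi$ and any epistemic state $E$, the identity $\model{B(\nabla(\Psi,E))}=\max(\model{B(E)},\succeq_\Psi)$. The one observation that drives the whole argument is that when $\model{B(E)}=\{w,w'\}$ with $w\neq w'$, the set $\max(\{w,w'\},\succeq_\Psi)$ depends only on $\succeq_\Psi\upharpoonright_{\{w,w'\}}$ and in fact determines it: the three possible restricted preorders $w\succ w'$, $w'\succ w$, $w\simeq w'$ yield the three distinct sets $\{w\}$, $\{w'\}$, $\{w,w'\}$. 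Hence, for any profiles $\Psi,\Psi'$ of a common society and any such $E$,
\[
B(\nabla(\Psi,E))\equiv B(\nabla(\Psi',E))\quad\Longleftrightarrow\quad\succeq_\Psi\upharpoonright_{\{w,w'\}}=\succeq_{\Psi'}\upharpoonright_{\{w,w'\}}.
\]
I would then prove the cycle (i)$\Rightarrow$(ii)$\Rightarrow$(iii)$\Rightarrow$(i), using this displayed equivalence as the translation device between beliefs and the assignment.

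For (i)$\Rightarrow$(ii): given $E$ with $|\model{B(E)}|\le 2$ and the hypothesis $B(\nabla(E_j,E))\equiv B(\nabla(E_j',E))$ for all $j\in N$, I would verify that the stronger-looking hypothesis of \sintI\ actually holds, so that \sintI\ applies. Indeed any $E'$ with $B(E')\vdash B(E)$ has $\model{B(E')}\subseteq\model{B(E)}$; if $\model{B(E')}$ is a singleton then $B(\nabla(E_j,E'))\equiv B(\nabla(E_j',E'))$ trivially (both beliefs have that single model), and if $\model{B(E')}=\model{B(E)}$ then $B(E')\equiv B(E)$, so \sintii\ reduces the required equivalence to the assumed one. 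For (ii)$\Rightarrow$(iii): given $w,w'$ and $N$-profiles $\Phi,\Phi'$ with $\succeq_{E_i}\upharpoonright_{\{w,w'\}}=\succeq_{E_i'}\upharpoonright_{\{w,w'\}}$ for all $i\in N$, the case $w=w'$ is trivial, and for $w\neq w'$ I would choose an $E$ with $\model{B(E)}=\{w,w'\}$ (which exists, since the image of $B$ realizes every nonempty set of models). The displayed equivalence turns the hypothesis into $B(\nabla(E_i,E))\equiv B(\nabla(E_i',E))$ for all $i$; (ii) gives $B(\nabla(\Phi,E))\equiv B(\nabla(\Phi',E))$, and reading the displayed equivalence backwards yields $\succeq_\Phi\upharpoonright_{\{w,w'\}}=\succeq_{\Phi'}\upharpoonright_{\{w,w'\}}$, which is ({\em ind}).

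For (iii)$\Rightarrow$(i) I expect the real content, and the main obstacle, to lie: one must reconstruct the full postulate \sintI\ — with an arbitrary $E$ (possibly many models) and the universally quantified $E'$ — from the purely pairwise condition ({\em ind}). Assuming the hypothesis of \sintI, I would fix any pair $w,w'\in\model{B(E)}$, instantiate that hypothesis at an $E'$ with $\model{B(E')}=\{w,w'\}$ (legitimate because then $B(E')\vdash B(E)$), and use the displayed equivalence to obtain $\succeq_{E_j}\upharpoonright_{\{w,w'\}}=\succeq_{E_j'}\upharpoonright_{\{w,w'\}}$ for all $j\in N$; ({\em ind}) then gives $\succeq_\Phi\upharpoonright_{\{w,w'\}}=\succeq_{\Phi'}\upharpoonright_{\{w,w'\}}$. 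Since a total preorder is completely determined by its restrictions to two-element subsets, ranging over all pairs inside $\model{B(E)}$ shows that $\succeq_\Phi$ and $\succeq_{\Phi'}$ coincide on $\model{B(E)}$, whence $\max(\model{B(E)},\succeq_\Phi)=\max(\model{B(E)},\succeq_{\Phi'})$ and, by \eqref{B-Rep}, $B(\nabla(\Phi,E))\equiv B(\nabla(\Phi',E))$. The crux is exactly that both the formation of $\max$ over a set and the equality of preorders factor through two-element restrictions; this is what lets the two-model restriction appearing in (ii)/(iii) recover the unrestricted \sintI, and it is the only place where genuine care about the quantifier structure of \sintI\ is needed.
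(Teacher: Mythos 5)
Your proof is correct and follows essentially the same route as the paper: both arguments reduce everything to two-model integrity constraints via \eqref{B-Rep} together with the observation that $\max(\{w,w'\},\succeq_\Psi)$ determines and is determined by $\succeq_\Psi\upharpoonright_{\{w,w'\}}$, and your treatments of (i)$\Rightarrow$(ii) and (ii)$\Rightarrow$(iii) coincide with the paper's. The only (harmless) divergence is in (iii)$\Rightarrow$(i), where the paper argues by contradiction on a single offending pair $w,w'$ and invokes \sintiii\ and \sintiv\ to compare $B\big(\nabla(\Phi,E)\big)$ with $B\big(\nabla(\Phi,E')\big)$, whereas you argue directly by letting $\{w,w'\}$ range over all pairs in $\model{B(E)}$ so that $\succeq_\Phi$ and $\succeq_{\Phi'}$ agree on all of $\model{B(E)}$ and the conclusion follows from \eqref{B-Rep} alone; both versions are valid.
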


It is worth noting that in presence of \sintI\ and the basic postulates, Postulate \sintU\ entails \sintP, \ie\ unanimity is stronger than Pareto condition. That is the following proposition:

    \begin{proposition}\label{prop10}
     If an ES basic fusion operator $\nabla$ satisfies \sintU\ and \sintI, then it also satisfies \sintP.
      \end{proposition}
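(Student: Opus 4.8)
The plan is to argue entirely at the semantic level, exploiting the characterizations of \sintU, \sintI\ and \sintP\ in terms of the basic assignment $\Phi\mapsto\succeq_\Phi$ associated to $\nabla$ by Theorem~\ref{teo1}. Since $\nabla$ is an ES basic fusion operator satisfying \sintU\ and \sintI, Proposition~\ref{prop7} supplies property (\emph{u}) for this assignment and Proposition~\ref{prop9} supplies property (\emph{ind}); and by Proposition~\ref{prop8} it suffices to establish property (\emph{p}), namely that for every $N$-profile $\Phi$ and all interpretations $w$, $w'$, if $w\succ_{E_i}w'$ for all $i\in N$ then $w\succ_\Phi w'$.

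First I would fix such a profile $\Phi=(E_{i_1},\dots,E_{i_n})$ together with a pair $w$, $w'$ satisfying $w\succ_{E_i}w'$ for every $i\in N$ (we may assume $w\neq w'$, since otherwise the hypothesis is vacuous). The key move is to introduce the auxiliary $N$-profile $\Phi'$ that assigns the single epistemic state $E_{i_1}$ to every agent of $N$, i.e.\ $\Phi'(i)=E_{i_1}$ for all $i\in N$. Because all components of $\Phi'$ coincide, property (\emph{u}) yields $\succeq_{\Phi'}=\succeq_{E_{i_1}}$, and since $w\succ_{E_{i_1}}w'$ by hypothesis we obtain $w\succ_{\Phi'}w'$.

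Next I would compare $\Phi$ and $\Phi'$ through property (\emph{ind}). For each $i\in N$ both the $\Phi$-component $\succeq_{E_i}$ and the $\Phi'$-component $\succeq_{E_{i_1}}$ place $w$ strictly above $w'$; hence their restrictions to $\{w,w'\}$ are one and the same total preorder on this two-element set, so $\succeq_{E_i}\upharpoonright_{\{w,w'\}}=\succeq_{E_{i_1}}\upharpoonright_{\{w,w'\}}$ for all $i\in N$. Property (\emph{ind}) then forces $\succeq_\Phi\upharpoonright_{\{w,w'\}}=\succeq_{\Phi'}\upharpoonright_{\{w,w'\}}$, and since the right-hand side records $w\succ_{\Phi'}w'$, we conclude $w\succ_\Phi w'$, which is exactly property (\emph{p}).

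I expect the only delicate point to be the verification that the two restrictions genuinely coincide in the hypothesis of (\emph{ind}): this rests on the elementary observation that on a two-element set $\{w,w'\}$ there is a unique total preorder making $w$ strictly more plausible than $w'$, so the strictnesses $w\succ_{E_i}w'$ and $w\succ_{E_{i_1}}w'$ pin down the same restricted order. Once this is granted, the argument is a direct chaining of (\emph{u}) and (\emph{ind}) to yield (\emph{p}), which by Proposition~\ref{prop8} is equivalent to \sintP. The edge case $n=1$ is immediate, since then $\Phi'=\Phi$ and (\emph{u}) already gives $\succeq_\Phi=\succeq_{E_{i_1}}$.
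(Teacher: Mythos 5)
Your proof is correct and follows essentially the same strategy as the paper's: both replace the given profile by a constant profile all of whose entries equal one agent's epistemic state, apply Unanimity to that constant profile, and use Independence to carry the conclusion back to the original profile. The only difference is presentational --- the paper argues syntactically and by contradiction, via condition (ii) of Proposition~\ref{prop8} and the postulates themselves, whereas you chain the semantic characterizations (\emph{u}), (\emph{ind}) and (\emph{p}) directly; both routes are valid.
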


      The converse is not true, that is, even in presence of \sintI, Postulate \sintP\ does not entail \sintU. Actually, it is not difficult to build operators satisfying \sintP, \sintI\ but for which \sintU\ does not hold (see Section \ref{examples}). Moreover, as we can see in Section \ref{examples}, \sintP\ does not entail \sintI, even if \sintU\ holds.

Next we state the postulate related with the fourth principle: {\em the group belief base, obtained as the result of the fusion process, does not depend on one unique agent.} Actually, we establish the negative form which says that there is an agent that imposes his will, a {\em dictatorial agent}. This is the postulate that the {\em good} operators should avoid.

    \begin{description}
        \item[(ESF-D)]\em For all $N$ in $\mathcal{F}^\ast(\mathcal{S})$  there exists an agent $d_N$ in $N$ such that,  for all $N$-profile $\Phi$ and  for all epistemic state $E$ in $\mathcal{E}$,
            $B\big(\nabla(\Phi,E)\big)\vdash B\big(\nabla(E_{d_N},E)\big)$.
    \end{description}

The operators satisfying the previous postulate are called {\em Dictatorial operators}. Suppose that the operator $\nabla$ is dictatorial,
then given $N$ in  $\mathcal{F}^\ast(\mathcal{S})$, an agent $d_N$ satisfying
the property in \sintD, is called a {\em dictador in $N$} or simply {\em $N$-dictator}, with respect to $\nabla$.

The following result is a semantic characterization of Dictatorial operators:

\begin{proposition}\label{prop11}
Let $\nabla$ be an ES basic fusion operator. Then the following statements are equivalent:
\begin{enumerate}
  \item[(i)] $\nabla$ is dictatorial.
  \item[(ii)] For all $N$ in $\mathcal{F}^\ast(\mathcal{S})$, there exists $d_N$ in $N$ such that, for every $N$-profile $\Phi$ and for all epistemic state $E$ in $\mathcal{E}$, if  $\Big|\model{B(E)}\Big|\leq 2$ then
      $B\big(\nabla(\Phi,E)\big)\vdash B\big(\nabla(E_{d_N},E)\big)$.
  \item[(iii)] The assignment $\Phi\mapsto\succeq_\Phi$, representing  $\nabla$, satisfies the following property:\smallskip
  \begin{description}
    \item[({\em d})] For all $N$ in $\mathcal{F}^\ast(\mathcal{S})$, there exists $d_N$ in $N$ such that for all interpretations $w$ and $w'$, if $w\succ_{E_{d_N}} w'$, then $w\succ_\Phi w'$.
  \end{description}
\end{enumerate}
\end{proposition}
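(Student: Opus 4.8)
The plan is to exploit the Weak Representation Theorem (Theorem~\ref{teo1}), which associates to $\nabla$ the basic assignment $\Phi\mapsto\succeq_\Phi$ satisfying \eqref{B-Rep}, and then to prove the cycle $(i)\Rightarrow(ii)\Rightarrow(iii)\Rightarrow(i)$. The implication $(i)\Rightarrow(ii)$ is immediate, since $(ii)$ is merely the restriction of \sintD\ to those constraints $E$ with $\big|\model{B(E)}\big|\leq 2$: the agent $d_N$ witnessing $(i)$ also witnesses $(ii)$.

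For $(ii)\Rightarrow(iii)$, fix $N$ and let $d_N$ be the agent provided by $(ii)$ (it depends only on $N$). To verify property $(d)$, take any $N$-profile $\Phi$ and interpretations $w,w'$ with $w\succ_{E_{d_N}}w'$ (so $w\neq w'$), and choose, using the epistemic space axiom, an \ee\ $E$ with $\model{B(E)}=\{w,w'\}$, so that $\big|\model{B(E)}\big|=2$. By \eqref{B-Rep} together with the convention that $c\in\max(C,\succeq)$ iff $x\succeq c$ for all $x\in C$, the strict preference $w\succ_{E_{d_N}}w'$ forces $\max(\{w,w'\},\succeq_{E_{d_N}})=\{w'\}$, hence $B\big(\nabla(E_{d_N},E)\big)\equiv\varphi_{w'}$. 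Applying $(ii)$ yields $B\big(\nabla(\Phi,E)\big)\vdash\varphi_{w'}$; and since $\max(\{w,w'\},\succeq_\Phi)$ is nonempty ($\mathcal{W_P}$ being finite), \eqref{B-Rep} forces $\model{B\big(\nabla(\Phi,E)\big)}=\{w'\}$, i.e. $\max(\{w,w'\},\succeq_\Phi)=\{w'\}$. This is exactly $w\succ_\Phi w'$, establishing property $(d)$.

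For $(iii)\Rightarrow(i)$, fix $N$ and let $d_N$ be the agent from property $(d)$. Given any $N$-profile $\Phi$ and any \ee\ $E$, set $M=\model{B(E)}$; by \eqref{B-Rep} the target entailment $B\big(\nabla(\Phi,E)\big)\vdash B\big(\nabla(E_{d_N},E)\big)$ amounts to the inclusion $\max(M,\succeq_\Phi)\subseteq\max(M,\succeq_{E_{d_N}})$. So let $w\in\max(M,\succeq_\Phi)$ and suppose, toward a contradiction, that $w\notin\max(M,\succeq_{E_{d_N}})$. Then some $w'\in M$ satisfies $w'\not\succeq_{E_{d_N}}w$, so by totality $w\succ_{E_{d_N}}w'$, and property $(d)$ gives $w\succ_\Phi w'$, whence $w'\not\succeq_\Phi w$. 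But $w\in\max(M,\succeq_\Phi)$ means $x\succeq_\Phi w$ for every $x\in M$, in particular $w'\succeq_\Phi w$, a contradiction. Thus the inclusion holds for every $\Phi$ and $E$, so $\nabla$ is dictatorial.

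The only delicate points are bookkeeping. One must keep straight the paper's convention for $\max$, so that a strict preference $w\succ_{E_{d_N}}w'$ collapses the belief onto $\{w'\}$; and one must use the nonemptiness of $\max$ over the finite set $\mathcal{W_P}$ to upgrade, in $(ii)\Rightarrow(iii)$, the entailment $B\big(\nabla(\Phi,E)\big)\vdash\varphi_{w'}$ to the equality $\model{B\big(\nabla(\Phi,E)\big)}=\{w'\}$. Everything else is a direct semantic translation through \eqref{B-Rep}.
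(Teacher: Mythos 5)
Your proof has exactly the same architecture as the paper's: $(i)\Rightarrow(ii)$ is immediate, $(ii)\Rightarrow(iii)$ tests property $(d)$ on a constraint $E$ with $\model{B(E)}=\{w,w'\}$, and $(iii)\Rightarrow(i)$ argues by contradiction from a model of $B\big(\nabla(\Phi,E)\big)$ that fails to be a model of $B\big(\nabla(E_{d_N},E)\big)$. The one thing to repair is a direction error in the middle step. You took the preliminaries' definition of a maximal element literally ($c\in\max(C,\succeq)$ iff $x\succeq c$ for all $x\in C$) and concluded that $w\succ_{E_{d_N}}w'$ forces $B\big(\nabla(E_{d_N},E)\big)\equiv\varphi_{w'}$. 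That clause of the preliminaries is a typo: everywhere else the paper uses the standard convention $\max(C,\succeq)=\{c\in C: c\succeq x\ \mbox{for all}\ x\in C\}$. This is forced by the construction of the assignment in the proof of Theorem~\ref{teo1}, where $w\succeq_\Phi w'$ is \emph{defined} by $w\models B\big(\nabla(\Phi,E_{w,w'})\big)$, so that \eqref{B-Rep} makes the models of the merged beliefs the $\succeq_\Phi$-\emph{largest} elements of $\model{B(E)}$; it is also what Property \assgii\ and the ``at least as plausible as'' reading require. Consequently $w\succ_{E_{d_N}}w'$ yields $B\big(\nabla(E_{d_N},E)\big)\equiv\varphi_{w}$, not $\varphi_{w'}$, and in $(iii)\Rightarrow(i)$ the witness showing $w\notin\max\big(\model{B(E)},\succeq_{E_{d_N}}\big)$ is a $w'$ with $w'\succ_{E_{d_N}}w$ rather than $w\succ_{E_{d_N}}w'$. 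Since your argument is entirely symmetric in $w$ and $w'$, interchanging them throughout repairs everything and recovers the paper's proof essentially verbatim; there is no missing idea, only a sign convention to straighten out.
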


The next result establishes that Dictatorial operators satisfy the Pareto Condition.

\begin{proposition}\label{prop12}
  Let $\nabla$ be an ES combination operator. If $\nabla$ satisfies \sintD, then $\nabla$ satisfies \sintP.
\end{proposition}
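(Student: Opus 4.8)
The plan is to exploit the fact that, under \sintD, the belief of the group result is entirely governed by a single agent, so that any inconsistency forced on that agent is automatically inherited by the whole society. Concretely, I would fix an arbitrary society $N$ in $\mathcal{F}^\ast(\mathcal{S})$, an $N$-profile $\Phi$, and epistemic states $E$, $E'$ satisfying the hypotheses of \sintP; that is, $\Bigwedge_{i\in N} B\big(\nabla(E_i,E)\big)\not\vdash\bot$ and $B\big(\nabla(E_i,E)\big)\wedge B(E')\vdash\bot$ for every $i$ in $N$. The goal is to conclude $B\big(\nabla(\Phi,E)\big)\wedge B(E')\vdash\bot$.

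First I would invoke \sintD\ for this $N$ to obtain an $N$-dictator $d_N\in N$ with $B\big(\nabla(\Phi,E)\big)\vdash B\big(\nabla(E_{d_N},E)\big)$. Since $d_N$ is in particular an element of $N$, the second hypothesis of \sintP\ may be instantiated at $i=d_N$, yielding $B\big(\nabla(E_{d_N},E)\big)\wedge B(E')\vdash\bot$.

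The final step simply chains these two facts by transitivity of $\vdash$: any valuation satisfying $B\big(\nabla(\Phi,E)\big)\wedge B(E')$ would, via the dictatorship entailment, also satisfy $B\big(\nabla(E_{d_N},E)\big)\wedge B(E')$, which is impossible. Hence $B\big(\nabla(\Phi,E)\big)\wedge B(E')\vdash\bot$, which is exactly what \sintP\ requires.

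There is no genuine obstacle here; the conclusion is an immediate consequence of the dictator clause together with transitivity of logical entailment. It is worth remarking that the mutual-consistency premise $\Bigwedge_{i\in N} B\big(\nabla(E_i,E)\big)\not\vdash\bot$ is never used in this argument, so in fact \sintD\ implies the (formally stronger) version of the Pareto condition obtained by dropping that premise.
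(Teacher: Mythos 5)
Your proof is correct and is essentially identical to the paper's own argument: obtain the dictator $d_N$ from \sintD, instantiate the hypothesis at $i=d_N$, and chain the entailments. Your closing remark that the premise $\Bigwedge_{i\in N} B\big(\nabla(E_i,E)\big)\not\vdash\bot$ is never needed is also accurate, as the paper's proof does not use it either.
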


However, there are dictatorial operators for which \sintU\ does not hold (see Section \ref{examples}).

\subsection{Coalitions and decisional power}
In many situations a group of agents or individuals who integrate a society form alliances or coalitions in order to achieve  common goals.  Thus, in a first step to define important types of alliances, we will say simply that  a {\em coalition} is determined by a group of individuals of a society. More precisely, we state:

 \begin{definition}[Coalitions]
    Given a finite society of agents $N$ in $\mathcal{F}^\ast(\mathcal{S})$, a coalition of agents in $N$, or simply an $N$-coalition, is a set $D$ of agents in $N$.
 \end{definition}

Some coalitions in logic-based fusion have an important decisional power. This kind of decisional power   can be illustrated as follows:
Let us suppose that the agents of a society have to decide about the acceptation of certain information under some given condition (integrity constraints). Also suppose that, under such restriction, the new information does not satisfy  some of the agents (the coalition) at all, and the rest of the agents accept this information. When the global decision is to reject  the new information, it is clear that the coalition has imposed its view. This kind of coalitions, with great decisional power, are called {\em locally decisive coalitions}.

\begin{definition}[Locally decisive coalitions]
 Let $N$ be in $\mathcal{F}^\ast(\mathcal{S})$, $\nabla$ be an ES combination operator, $E$, $E'$ be a pair of \ees\ and $D$ be  an $N$-coalition. We say that $D$ is a locally decisive $N$-coalition for $E$ against $E'$, with respect to $\nabla$, denoted by ${E}D^\nabla{E'}$, if for each $N$-profile $\Phi$ satisfying:
\begin{enumerate}
  \item[(i)] $B\big(\nabla(E_i,E)\big)\wedge B(E')\vdash\bot$, for all $i$ in $D$;
  \item[(ii)] $B\big(\nabla(E_j,E)\big)\equiv B(E')$, for all $j$ in $N\setminus D$; and
  \item[(iii)] $\Bigwedge_{i\in D} B\big(\nabla(E_i,E)\big)\not\vdash\bot$
\end{enumerate}
 we have that $B\big(\nabla(\Phi,E)\big)\wedge B(E')\vdash\bot$.
\end{definition}

It is also possible that the rejection of the new information will be achieved by a coalition independently from the opinion of the rest of the agents involved in the merging process. This kind of coalitions are called {\em decisive coalitions}.

\begin{definition}[Decisive coalition]\label{decisive}
Let $N$ be in $\mathcal{F}^\ast(\mathcal{S})$, $\nabla$ be an ES combination operator, $E$, $E'$ be a pair of \ees\ and $D$ be  an $N$-coalition. We say that $D$ is a decisive $N$-coalition for $E$ against $E'$, with respect to $\nabla$, denoted by ${E}D^{\nabla^\ast}{E'}$, if for every $N$-profile $\Phi$ satisfying:
\begin{enumerate}
  \item[(i)] $B\big(\nabla(E_i,E)\big)\wedge B(E')\vdash\bot$, for all $i$ in $D$; and
  \item[(ii)] $\Bigwedge_{i\in D} B\big(\nabla(E_i,E)\big)\not\vdash\bot$
\end{enumerate}
we have that $B\big(\nabla(\Phi,E)\big)\wedge B(E')\vdash\bot$.

We will say that $D$ is a decisive $N$-coalition, with respect to $\nabla$, if  ${E}D^{\nabla^\ast}{E'}$ for any pair $E$, $E'$ of \ees.
\end{definition}

\begin{observation}\label{obs1}\rm For any finite society $N$ in $\mathcal{F}^\ast(\mathcal{S})$,  any ES combination operator $\nabla$,  any $N$-coalition $D$ and any tuple of \ees\ $E$, $E'$, $\overline{E}$, $\overline{E'}$ in $\mathcal{E}$, it is easy to see that the following conditions hold:

\begin{enumerate}
 \item[a.] If ${E}D^{\nabla^\ast}{E'}$, then ${E}D^\nabla{E'}$

 \item[b.] If $\nabla$ satisfies \sinti\ and $B(E)\equiv B(E')$, or  $B(E)$ is inconsistent with $B(E')$, then $ED^{\nabla^\ast}{E'}$ and therefore $ED^\nabla{E'}$

 \item[c.] If $\nabla$ satisfies \sintii, $B(E)\equiv B(\overline{E})$ and
      $B(E')\equiv B(\overline{E'})$, then ${E}D^{\nabla^\ast}{E'}$ if, and only if, ${\overline{E}}D^{\nabla^\ast}{\overline{E'}}$

    \item[d.] If $\nabla$ satisfies \sinti\ and $D$ es decisive, then $D$ is nonempty\footnote{If $D=\emptyset$ the two premisses in Definition \ref{decisive} are always true. Thus, if $D$ were  decisive, taking $E=E'$, we get $B\big(\nabla(\Phi,E)\big)\wedge B(E)\vdash\bot$,
     a contradiction.}.

    \item[e.] If $\nabla$  satisfies \sintP, then $N$ is a decisive $N$-coalition by itself.

\end{enumerate}
\end{observation}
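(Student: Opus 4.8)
The plan is to dispatch the five items a--e by unwinding the definitions of decisive ($\nabla^\ast$) and locally decisive ($\nabla$) coalitions and invoking in each case the single postulate named in the statement; no new machinery is needed. I would take them in the order a, b, c, e, d, since a is reused inside b and the only substantive point (consistency of belief bases) belongs to d. For \textbf{a} I compare the two hypothesis lists: premise (i) is common to both definitions, the locally decisive premise (iii) is exactly the decisive premise (ii), and local decisiveness merely adds the extra requirement (ii) on the agents of $N\setminus D$. Hence any $N$-profile meeting the hypotheses of ${E}D^\nabla{E'}$ already meets those of ${E}D^{\nabla^\ast}{E'}$; assuming the latter, the conclusion $B(\nabla(\Phi,E))\wedge B(E')\vdash\bot$ is inherited, which is precisely ${E}D^\nabla{E'}$.

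For \textbf{b} I split into two cases. If $B(E)\wedge B(E')\vdash\bot$, then \sinti\ gives $B(\nabla(\Phi,E))\vdash B(E)$ for every $\Phi$, so $B(\nabla(\Phi,E))\wedge B(E')\vdash B(E)\wedge B(E')\vdash\bot$ unconditionally, and ${E}D^{\nabla^\ast}{E'}$ holds outright. If instead $B(E)\equiv B(E')$, then for any $i\in D$ the decisive premise (i) together with \sinti\ (which forces $B(\nabla(E_i,E))\vdash B(E)\equiv B(E')$) yields $B(\nabla(E_i,E))\vdash\bot$, contradicting premise (ii); thus no profile satisfies both hypotheses and ${E}D^{\nabla^\ast}{E'}$ holds vacuously. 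In either case item a then gives ${E}D^\nabla{E'}$.

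Items \textbf{c} and \textbf{e} are each one line. For c I apply \sintii\ with the trivially equivalent profile to itself: from $B(E)\equiv B(\overline E)$ I get $B(\nabla(E_i,E))\equiv B(\nabla(E_i,\overline E))$ for every $i$ and $B(\nabla(\Phi,E))\equiv B(\nabla(\Phi,\overline E))$, and, with $B(E')\equiv B(\overline{E'})$, each of the premises (i), (ii) and the conclusion defining ${E}D^{\nabla^\ast}{E'}$ becomes logically equivalent to its counterpart for $(\overline E,\overline{E'})$; so the universally quantified implication over profiles holds for one pair exactly when it holds for the other. For e I observe that setting $D=N$ in the decisiveness condition reproduces verbatim the hypotheses $\Bigwedge_{i\in N}B(\nabla(E_i,E))\not\vdash\bot$ and $B(\nabla(E_i,E))\wedge B(E')\vdash\bot$ for all $i\in N$, and the conclusion, of \sintP; hence \sintP\ is literally the assertion that ${E}N^{\nabla^\ast}{E'}$ for every pair $E,E'$, i.e. that $N$ is a decisive $N$-coalition.

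Item \textbf{d} is where the only non-mechanical observation enters, and this is the step I expect to be the crux. By the definition of an epistemic space the image of $B$ modulo equivalence is $\mathcal{L_P^*}/\!\equiv$, so $B(\nabla(\Phi,E))$ is always a \emph{consistent} formula; this fact makes the whole observation well posed. If $D=\emptyset$, the two decisive premises are vacuously true for every pair of \ees, so decisiveness would force $B(\nabla(\Phi,E))\wedge B(E')\vdash\bot$ for all $E,E'$; taking $E'=E$ and using \sinti\ ($B(\nabla(\Phi,E))\vdash B(E)$) gives $B(\nabla(\Phi,E))\vdash\bot$, contradicting consistency. Hence $D\neq\emptyset$. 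The ``obstacle'' here is thus not computational but conceptual: recognising that belief bases produced by $\nabla$ are never contradictory is exactly what rules out the degenerate empty coalition.
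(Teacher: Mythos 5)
Your proof is correct and takes the same route the paper intends: all five items follow by unwinding the definitions of decisive and locally decisive coalitions (the paper offers no proof beyond ``it is easy to see''), and your argument for item d is exactly the paper's own footnote --- premises vacuous for $D=\emptyset$, take $E'=E$, and contradict the consistency of $B\big(\nabla(\Phi,E)\big)$ via \sinti. The only caveat is that in item b your second case (``no profile satisfies both hypotheses'') tacitly assumes $D\neq\emptyset$; for $D=\emptyset$ and $B(E)\equiv B(E')$ the claim as literally stated fails by the very argument of item d, but that boundary defect sits in the Observation itself rather than in your treatment of it.
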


The following result is important:

\begin{proposition}\label{propnew-dic}
  Let $\nabla$ be a ES combination operator. Let $N$ be a finite society of agents.   If $D$ is a decisive $N$-coalition
  and its cardinality is one, then the only element of $D$ is a dictator in $N$.
\end{proposition}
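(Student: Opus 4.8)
The plan is to unwind the definition of dictator \sintD\ and reduce the required entailment to a statement about models, then to realize each ``model to be excluded'' by a suitably chosen integrity constraint and invoke decisiveness. Writing $D=\{d\}$, I would fix an arbitrary $N$-profile $\Phi$ and an arbitrary integrity constraint $E$, and set $E_d=\Phi(d)$. To prove that $d$ is a dictator in $N$ it suffices to establish $B\big(\nabla(\Phi,E)\big)\vdash B\big(\nabla(E_d,E)\big)$, which reads semantically as $\model{B(\nabla(\Phi,E))}\subseteq\model{B(\nabla(E_d,E))}$.

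The core is a pointwise argument. Take any interpretation $w$ with $w\notin\model{B(\nabla(E_d,E))}$. Using the epistemic space axiom (the image of $B$ modulo logical equivalence is all of $\mathcal{L_P^*}/\!\equiv$), I would pick an \ee\ $E'$ with $B(E')\equiv\varphi_w$, so that $\model{B(E')}=\{w\}$. Then $B\big(\nabla(E_d,E)\big)\wedge B(E')\vdash\bot$, since the common models are $\model{B(\nabla(E_d,E))}\cap\{w\}=\emptyset$; this is exactly premise (i) of decisiveness for the singleton coalition $\{d\}$. Premise (ii), namely $B\big(\nabla(E_d,E)\big)\not\vdash\bot$, holds automatically, as every value of $B$ is a consistent formula.

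With both premises satisfied, the decisiveness of $D$ (the hypothesis ${E}D^{\nabla^\ast}{E'}$, valid for every pair $E$, $E'$) yields $B\big(\nabla(\Phi,E)\big)\wedge B(E')\vdash\bot$, hence $w\notin\model{B(\nabla(\Phi,E))}$. Since $w$ was an arbitrary interpretation lying outside $\model{B(\nabla(E_d,E))}$, I obtain the inclusion $\model{B(\nabla(\Phi,E))}\subseteq\model{B(\nabla(E_d,E))}$, \ie\ $B\big(\nabla(\Phi,E)\big)\vdash B\big(\nabla(E_d,E)\big)$. As $\Phi$ and $E$ were arbitrary and $d$ is fixed as the unique element of $D$, this is precisely \sintD\ witnessed by $d_N=d$, so the only element of $D$ is a dictator in $N$.

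I do not foresee a serious obstacle. The only delicate points are the realizability of the single-model constraint $E'$, which is guaranteed by the epistemic space axiom (as remarked immediately after its definition), and the automatic validity of premise (ii), which rests on beliefs always being consistent. No rationality postulate beyond the definitions of decisive coalition and dictator is used, so the argument holds for an arbitrary ES combination operator $\nabla$.
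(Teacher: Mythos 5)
Your proposal is correct and follows essentially the same route as the paper's proof: both realize each interpretation $w$ outside $\model{B(\nabla(E_d,E))}$ by an epistemic state $E'$ with $\model{B(E')}=\{w\}$ and invoke the decisiveness of the singleton $\{d\}$ to exclude $w$ from $\model{B(\nabla(\Phi,E))}$; the paper merely phrases this as a proof by contradiction rather than your direct contrapositive inclusion. Your explicit check of premise (ii) via the consistency of every value of $B$ is a point the paper leaves implicit, but it is the same argument.
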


 Next, we introduce the first result about decisive coalition. It establishes some necessary and sufficient conditions that must be satisfied by a basic assignment, associated  to certain ES basic fusion operator, in order for it to admit a decisive coalition. Due to the similarity between the Pareto condition and the decisive coalition notion, the proof of this result is analogue to that of Proposition \ref{prop8}.

\begin{proposition}\label{prop13}
   Given $N$ in $\mathcal{F}^\ast(\mathcal{S})$, $\nabla$ an ES basic fusion operator and $\Phi\mapsto\succeq_\Phi$ the basic assignment  associated to $\nabla$, the following claims about an $N$-coalition $D$ are equivalent:
    \begin{enumerate}
        \item[(i)] $D$ is a decisive $N$-coalition, with respect to $\nabla$.
        \item[(ii)] For each $N$-profile $\Phi$ and any pair of epistemic states  $E$, $E'$ in $\mathcal{E}$, if $\Big|\model{B(E)}\Big|\leq 2$ and $B\big(\nabla(E_i,E)\big)\wedge B(E')\vdash\bot$, for all $i$ in $D$, then $B\big(\nabla(\Phi,E)\big)\wedge B(E')\vdash\bot$.
        \item[(iii)] For each $N$-profile $\Phi$, and all interpretations $w$, $w'$, if $w\succ_{_{E_i}} w'$, for all $i$ in $D$, then $w\succ_\Phi w'$.
    \end{enumerate}

\end{proposition}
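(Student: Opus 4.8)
The plan is to prove the cycle $(i)\Rightarrow(ii)\Rightarrow(iii)\Rightarrow(i)$, following closely the scheme used for Proposition~\ref{prop8}: being a decisive $N$-coalition is exactly the Pareto condition ``localized'' to the subset $D$, so the only genuinely new ingredient is the non-triviality premise (ii) of Definition~\ref{decisive} (that $\Bigwedge_{i\in D}B(\nabla(E_i,E))\not\vdash\bot$) together with the way the restriction $\big|\model{B(E)}\big|\leq 2$ compensates for its absence in clause (ii). Throughout I would pass between the syntactic statements and the semantic ones through the Weak Representation Theorem~\ref{teo1}, rewriting $\model{B(\nabla(\Phi,E))}=\max(\model{B(E)},\succeq_\Phi)$ by \eqref{B-Rep}, and using that every $B(E)$ is consistent, so $\model{B(E)}\neq\emptyset$.

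For $(i)\Rightarrow(ii)$ I would show that (ii) is almost a special case of decisiveness. Fix $\Phi$, $E$, $E'$ with $\big|\model{B(E)}\big|\leq 2$ and $B(\nabla(E_i,E))\wedge B(E')\vdash\bot$ for all $i\in D$. If in addition $\Bigwedge_{i\in D}B(\nabla(E_i,E))\not\vdash\bot$, then $\Phi$ meets both premises of Definition~\ref{decisive} and decisiveness yields the conclusion directly. The only point to check is the degenerate case $\Bigwedge_{i\in D}B(\nabla(E_i,E))\vdash\bot$: since each $\model{B(\nabla(E_i,E))}=\max(\model{B(E)},\succeq_{E_i})$ is a nonempty subset of $\model{B(E)}$ while their intersection over $D$ is empty, the bound $\big|\model{B(E)}\big|\leq2$ forces $\model{B(E)}=\{w,w'\}$ with two agents of $D$ selecting the disjoint singletons $\{w\}$ and $\{w'\}$; the hypothesis then excludes both $w$ and $w'$ from $\model{B(E')}$, so $\model{B(E)}\cap\model{B(E')}=\emptyset$ and the conclusion follows from \sinti\ alone, as $\model{B(\nabla(\Phi,E))}\subseteq\model{B(E)}$.

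The implications $(ii)\Rightarrow(iii)$ and $(iii)\Rightarrow(i)$ are the core of the argument and mirror the corresponding steps of Proposition~\ref{prop8}. For $(ii)\Rightarrow(iii)$, given $w\succ_{E_i}w'$ for all $i\in D$, I would instantiate (ii) with \ees\ $E,E'$ chosen so that $\model{B(E)}=\{w,w'\}$ and $\model{B(E')}=\{w'\}$; then $\max(\{w,w'\},\succeq_{E_i})=\{w\}$, so $B(\nabla(E_i,E))\equiv\varphi_w$ is inconsistent with $B(E')\equiv\varphi_{w'}$ for every $i\in D$, and (ii) yields $w'\notin\max(\{w,w'\},\succeq_\Phi)$, whence $w\succ_\Phi w'$. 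For $(iii)\Rightarrow(i)$, take any $\Phi,E,E'$ satisfying the two premises of Definition~\ref{decisive}; using premise (ii) of that definition pick a common maximal world $w_0\in\bigcap_{i\in D}\max(\model{B(E)},\succeq_{E_i})$, which by premise (i) lies outside $\model{B(E')}$. Suppose some $w'\in\max(\model{B(E)},\succeq_\Phi)$ belonged to $\model{B(E')}$; then premise (i) gives $w'\notin\max(\model{B(E)},\succeq_{E_i})$ for every $i\in D$, and since $w_0\succeq_{E_i}w'$ (as $w_0$ is maximal and $w'\in\model{B(E)}$), totality and transitivity upgrade this to $w_0\succ_{E_i}w'$; property (iii) then yields $w_0\succ_\Phi w'$, contradicting the $\succeq_\Phi$-maximality of $w'$. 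Hence $\max(\model{B(E)},\succeq_\Phi)\cap\model{B(E')}=\emptyset$, \ie\ $B(\nabla(\Phi,E))\wedge B(E')\vdash\bot$, so $D$ is decisive. I expect the main obstacle to be exactly this last upgrade from $w_0\succeq_{E_i}w'$ to the strict $w_0\succ_{E_i}w'$: it is precisely here that the common maximal element supplied by the non-triviality premise of Definition~\ref{decisive} is indispensable, and it is the one place where the localized coalition version differs in substance from the full Pareto statement of Proposition~\ref{prop8}.
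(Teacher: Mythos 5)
Your proof is correct and takes essentially the approach the paper intends: the paper gives no explicit proof of Proposition~\ref{prop13}, stating only that it is analogous to that of Proposition~\ref{prop8}, and your cycle $(i)\Rightarrow(ii)\Rightarrow(iii)\Rightarrow(i)$ is a faithful adaptation of that proof, with the coalition $D$ in place of $N$. Your extra care in $(i)\Rightarrow(ii)$ over the missing non-triviality premise (showing that when $\Bigwedge_{i\in D}B(\nabla(E_i,E))\vdash\bot$ the two-model bound forces $\model{B(E)}\cap\model{B(E')}=\emptyset$, so \sinti\ alone suffices) is exactly the right supplement, mirroring the case split the paper performs in Proposition~\ref{prop8}.
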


Let us note that the last result, in addition to giving a semantic characterization of operators that admit decisive coalitions (point (iii)),  establishes that in order to determine  the decisive nature of a coalition with respect to an ES basic operator, it is enough  considering epistemic states whose most entrenched beliefs have at most two models (point (ii)).

Under certain rational properties over a merging process, if a coalition is locally decisive between a pair of epistemic states of a certain kind, then we can change slightly the pair of epistemic states and the coalition will continue to be locally decisive for this new pair of epistemic states. More precisely, we have the following two results:

\begin{proposition}[First propagation lemma]\label{prop14}
   Given $N$ in $\mathcal{F}^\ast(\mathcal{S})$, an $N$-coalition $D$, an ES basic fusion operator $\nabla$  satisfying \sintSD, \sintP\ and \sintI, and  $w$, $w'$ a pair of different interpretations. If $E_{w,w'}$, $E_{w'}$ are epistemic states such that $\model{B(E_{w,w'})}=\{w,w'\}$, $\model{B(E_{w'})}=\{w'\}$ and ${E_{w,w'}}D^\nabla{E_{w'}}$, then, for any interpretation $w''$, with \linebreak$w''\not\models B(E_{w,w'})$ and any pair of epistemic states $E_{w,w''}$, $E_{w''}$, with $\model{B(E_{w,w''})}=\{w,w''\}$ and
   $\model{B(E_{w''})}=\{w''\}$, we have ${E_{w,w''}}D^{\nabla^\ast}{E_{w''}}$.
\end{proposition}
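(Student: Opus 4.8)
The plan is to pass to the semantic side via the Weak Representation Theorem (Theorem~\ref{teo1}) and then run a field-expansion (contagion) argument in the style of the classical proof of Arrow's theorem: Standard Domain will supply the witnessing profiles, Pareto and the hypothesis give the needed strict social preferences over a chosen triple, transitivity glues them together, and Independence upgrades a single constructed profile to the universally quantified conclusion. Since $\nabla$ is an ES basic fusion operator, Theorem~\ref{teo1} gives a basic assignment $\Phi\mapsto\succeq_\Phi$ representing it through \eqref{B-Rep}. First I would translate the two decisiveness notions into statements about this assignment. Because $\model{B(E_{w,w'})}=\{w,w'\}$ and $\model{B(E_{w'})}=\{w'\}$, equation \eqref{B-Rep} gives $\model{B(\nabla(E_i,E_{w,w'}))}=\max(\{w,w'\},\succeq_{E_i})$; hence premise~(i) of locally decisive ($B(\nabla(E_i,E_{w,w'}))\wedge\varphi_{w'}\vdash\bot$) says exactly $w\succ_{E_i}w'$, premise~(ii) ($B(\nabla(E_j,E_{w,w'}))\equiv\varphi_{w'}$) says $w'\succ_{E_j}w$, and (iii) is automatic. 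Reading the conclusion the same way, the hypothesis ${E_{w,w'}}D^\nabla{E_{w'}}$ becomes: \emph{for every $N$-profile, if $w\succ_{E_i}w'$ for all $i\in D$ and $w'\succ_{E_j}w$ for all $j\in N\setminus D$, then $w\succ_\Phi w'$}. The target ${E_{w,w''}}D^{\nabla^\ast}{E_{w''}}$ unfolds (now with no constraint on $N\setminus D$) to: \emph{for every $N$-profile $\Psi$, if $w\succ_{E_i}w''$ for all $i\in D$, then $w\succ_\Psi w''$}.

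Next I would fix an arbitrary $N$-profile $\Psi$ with $w\succ_{E_i}w''$ for every $i\in D$ and build a witnessing profile $\Phi$ that agrees with $\Psi$ on $\{w,w''\}$. Since $w''\not\models B(E_{w,w'})$, the three interpretations $w,w',w''$ are pairwise distinct, so total preorders over $\{w,w',w''\}$ are available. Invoking Standard Domain in the form of Observation~\ref{remark-shapes}, I would choose, for each agent, an $i$-profile realizing a prescribed restriction to $\{w,w',w''\}$: every $i\in D$ gets $\succeq_{E_i}\!\upharpoonright_{\{w,w',w''\}}$ equal to the order $w\succ w'\succ w''$, and every $j\in N\setminus D$ gets a preorder with $w'$ strictly on top and its $\Psi$-relation on $\{w,w''\}$ kept below it. Each such three-element preorder exists regardless of whether that relation is $w\succ w''$, $w''\succ w$, or $w\simeq w''$, and the agents are handled independently, so $\Phi$ can be realized while matching $\Psi$ on $\{w,w''\}$ for every agent.

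On $\Phi$ the conclusion then follows in three moves. The coalition $D$ has $w\succ w'$ and $N\setminus D$ has $w'\succ w$, so the translated local-decisiveness hypothesis yields $w\succ_\Phi w'$. Every agent (in $D$ and outside) has $w'\succ w''$, so the Pareto characterization, property~({\em p}) of Proposition~\ref{prop8} (available since $\nabla$ satisfies \sintP), yields $w'\succ_\Phi w''$. As $\succeq_\Phi$ is a total preorder, transitivity of $\succ_\Phi$ gives $w\succ_\Phi w''$. Finally, since $\Phi$ and $\Psi$ induce the same restriction to $\{w,w''\}$ for all agents, the Independence characterization, property~({\em ind}) of Proposition~\ref{prop9} (available since $\nabla$ satisfies \sintI), gives $\succeq_\Phi\!\upharpoonright_{\{w,w''\}}=\succeq_\Psi\!\upharpoonright_{\{w,w''\}}$, whence $w\succ_\Psi w''$. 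As $\Psi$ was arbitrary, this is exactly the semantic form of ${E_{w,w''}}D^{\nabla^\ast}{E_{w''}}$, and translating back through \eqref{B-Rep} finishes the argument.

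The main obstacle I expect is not a single hard deduction but the bookkeeping in the translation and in the choice of $\Phi$: one must check carefully that the syntactic premises of \emph{locally decisive} and \emph{decisive} really collapse to the stated strict preferences over $\{w,w'\}$ and $\{w,w''\}$ via \eqref{B-Rep}, and that the witnessing three-element preorders can be realized simultaneously for all agents while matching $\Psi$ on $\{w,w''\}$ — this is precisely where Standard Domain (Observation~\ref{remark-shapes}) and the freedom to place $w'$ on top for the non-coalition agents are essential. The role of Independence is the conceptually decisive point: it converts the conclusion obtained on the single structured profile $\Phi$ into the unconstrained-outside-$D$ statement, which is exactly what turns \emph{local} decisiveness for $w$ against $w'$ into full decisiveness for $w$ against $w''$.
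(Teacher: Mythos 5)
Your proof is correct and follows essentially the same route as the paper's: both construct, via Standard Domain, a witness profile in which the coalition orders the triple as $w\succ w'\succ w''$ and the remaining agents place $w'$ strictly on top while preserving the given restriction to $\{w,w''\}$, then chain local decisiveness on $\{w,w'\}$, Pareto on $\{w',w''\}$, and transitivity, and finally transfer the conclusion to the arbitrary profile by Independence. The only difference is presentational --- you work semantically throughout via Theorem~\ref{teo1} and Observation~\ref{remark-shapes}, whereas the paper carries out the same construction syntactically using Proposition~\ref{prop5} --- and your translations of the two decisiveness notions through \eqref{B-Rep} are accurate.
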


\begin{proposition}[Second propagation lemma]\label{prop15}
   Given $N$ in $\mathcal{F}^\ast(\mathcal{S})$, an $N$-coalition $D$, an ES basic fusion operator $\nabla$ satisfying \sintSD, \sintP\ and \sintI, and  $w$, $w'$ a pair of different interpretations. If $E_{w,w'}$, $E_{w'}$ are epistemic states such that $\model{B(E_{w,w'})}=\{w,w'\}$, $\model{B(E_{w'})}=\{w'\}$ and ${E_{w,w'}}D^\nabla{E_{w'}}$, then, for any interpretation $w''$ such that $w''\not\models B(E_{w,w'})$ and any pair of epistemic states $E_{w',w''}$ and $E_{w''}$ such that  $\model{B(E_{w,w''})}=\{w,w''\}$ and    $\model{B(E_{w''})}=\{w''\}$, we have ${E_{w',w''}}D^{\nabla^\ast}{E_{w'}}$.
\end{proposition}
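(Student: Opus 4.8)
The plan is to pass to the semantic side through the weak representation theorem and then run an Arrow-style field-expansion (``change the winner'') argument. Since $\nabla$ is an ES basic fusion operator, Theorem~\ref{teo1} supplies the associated basic assignment $\Phi\mapsto\succeq_\Phi$, and using \eqref{B-Rep} together with $\model{B(E_{w,w'})}=\{w,w'\}$, $\model{B(E_{w'})}=\{w'\}$ and $\model{B(E_{w',w''})}=\{w',w''\}$, both the hypothesis and the goal reduce to statements about $\succeq_\Phi$ on pairs from $\{w,w',w''\}$ (note $w''\not\models B(E_{w,w'})$ forces $w''\notin\{w,w'\}$, so these are three distinct interpretations). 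Concretely, ${E_{w,w'}}D^\nabla{E_{w'}}$ becomes: for every $N$-profile $\Phi$ with $w\succ_{E_i}w'$ for all $i\in D$ and $w'\succ_{E_j}w$ for all $j\in N\setminus D$, one has $w\succ_\Phi w'$; and the target ${E_{w',w''}}D^{\nabla^\ast}{E_{w'}}$ unfolds to: for every $N$-profile $\Phi'$ with $w''\succ_{E_i'}w'$ for all $i\in D$, one has $w''\succ_{\Phi'}w'$. The three hypotheses will be used in their semantic forms: \sintSD\ via the ``shapes'' Observation~\ref{remark-shapes} to realize prescribed preorders on $\{w,w',w''\}$ by individual profiles, \sintP\ via property (p) of Proposition~\ref{prop8} (unanimous strict preference yields group strict preference), and \sintI\ via property (ind) of Proposition~\ref{prop9} (profiles agreeing on a pair yield equal group restrictions to that pair).

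Fix a generic $N$-profile $\Phi'$ with $w''\succ_{E_i'}w'$ for all $i\in D$, for which the target must be verified. I will construct, using \sintSD, an auxiliary $N$-profile $\Phi$ such that: for $i\in D$, $\succeq_{E_i}$ restricted to $\{w,w',w''\}$ is the linear order $w''\succ w\succ w'$; and for $j\in N\setminus D$, $\succeq_{E_j}$ places $w$ strictly below both $w'$ and $w''$ while agreeing with $\Phi'$ on the pair $\{w'',w'\}$. This last requirement is met in each of the three possible cases for $\succeq_{E_j'}\upharpoonright_{\{w'',w'\}}$ (namely $w''\succ w'$, $w'\succ w''$, or $w''\simeq w'$) by appending $w$ at the bottom of the corresponding preorder, and Observation~\ref{remark-shapes} guarantees such an $E_j$ exists. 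In $\Phi$ I then obtain $w''\succ_\Phi w'$ in three steps: every agent has $w''\succ w$, so property (p) gives $w''\succ_\Phi w$; the coalition $D$ has $w\succ w'$ while $N\setminus D$ has $w'\succ w$, so the semantic form of the hypothesis ${E_{w,w'}}D^\nabla{E_{w'}}$ gives $w\succ_\Phi w'$; and transitivity of the strict part of the total preorder $\succeq_\Phi$ yields $w''\succ_\Phi w'$.

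Finally, $\Phi$ and $\Phi'$ agree agent by agent on the restriction to $\{w'',w'\}$ (for $i\in D$ both give $w''\succ w'$, and for $j\notin D$ this was arranged by construction), so property (ind) of Proposition~\ref{prop9} forces $\succeq_\Phi\upharpoonright_{\{w'',w'\}}=\succeq_{\Phi'}\upharpoonright_{\{w'',w'\}}$, whence $w''\succ_{\Phi'}w'$. As $\Phi'$ was arbitrary subject only to $w''\succ_{E_i'}w'$ on $D$, this is exactly the semantic content of ${E_{w',w''}}D^{\nabla^\ast}{E_{w'}}$. The delicate point, and the main obstacle, is the simultaneous control of the complement agents: they must carry the strictly opposite preference $w'\succ w$ needed to invoke local decisiveness, must rank $w$ below $w''$ for the Pareto step, and must still match $\Phi'$ on $\{w'',w'\}$ so that independence applies; the three-case construction of their preorders is precisely what reconciles these competing constraints.
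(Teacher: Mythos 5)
Your proposal is correct and follows essentially the same route as the paper's proof: you build the same auxiliary profile (coalition members ordered $w''\succ w\succ w'$, the rest with $w$ strictly at the bottom and matching the given profile on $\{w',w''\}$), derive $w\succ w'$ from local decisiveness and $w''\succ w$ from Pareto, conclude $w''\succ w'$ by transitivity, and transfer it back via Independence. The only cosmetic difference is that you justify the existence of the auxiliary profile via Observation~\ref{remark-shapes} and argue semantically throughout, whereas the paper invokes Proposition~\ref{prop5} and works with the syntactic equivalences from \eqref{B-Rep}.
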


The previous two propagation results  will allow us to show that if a coalition is locally decisive with respect to a given pair of epistemic states  then such coalition is decisive. More precisely we have the following theorem:

\begin{theorem}[Generalized propagation theorem]\label{teo5}
   Consider $N$ in $\mathcal{F}^\ast(\mathcal{S})$, let $\nabla$ be an ES basic fusion operator satisfying \sintSD, \sintP\ and \sintI, and $D$ be an $N$-coalition. If there exist different interpretations $w$ and $w'$, and \ees\ $E_{w,w'}$ and $E_{w'}$, such that $\model{B(E_{w,w'})}=\{w,w'\}$, $\model{B(E_{w'})}=\{w'\}$ and ${E_{w,w'}}D^\nabla{E_{w'}}$,  then $D$ is a decisive coalition.
\end{theorem}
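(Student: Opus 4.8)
The plan is to reduce the statement to a purely pairwise form and then run a contagion (``field expansion'') argument driven entirely by the two propagation lemmas. By Proposition~\ref{prop13}, to show that $D$ is a decisive $N$-coalition it suffices to establish its semantic characterization (iii): for every $N$-profile $\Phi$ and every pair of distinct interpretations $a$, $b$, if $a\succ_{E_i}b$ for all $i\in D$ then $a\succ_\Phi b$. Unwinding the Weak Representation Theorem~\ref{teo1} (the models of the resulting belief are the maximal elements of $\succeq_\Phi$) shows that this pairwise statement, for a fixed ordered pair $(a,b)$, is exactly $E_{a,b}D^{\nabla^\ast}E_b$, where $\model{B(E_{a,b})}=\{a,b\}$ and $\model{B(E_b)}=\{b\}$; and by Observation~\ref{obs1}(c) the truth of such a decisiveness claim depends only on the belief bases, not on the chosen representatives. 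Hence the theorem reduces to showing that $D$ is fully decisive over every ordered pair $(a,b)$ of distinct interpretations, which I abbreviate ``$D$ decides $(a,b)$''.

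Now I run the contagion. Both Proposition~\ref{prop14} and Proposition~\ref{prop15} take as hypothesis a \emph{local} decisiveness claim ${E_{x,y}}D^\nabla{E_y}$ and yield \emph{full} decisiveness of a neighbouring pair; read off, the first lemma gives ``$D$ decides $(x,z)$'' for every $z\notin\{x,y\}$ (the winner $x$ is held fixed, the loser is moved), while the second gives ``$D$ decides $(z,y)$'' for every $z\notin\{x,y\}$ (the loser $y$ is held fixed, the winner is moved). Crucially, by Observation~\ref{obs1}(a) full decisiveness implies local decisiveness, so any pair already decided may serve as the base pair of a fresh application of the lemmas. Starting from the hypothesis that $D$ locally decides $(w,w')$, one application of each lemma gives that $D$ decides $(w,z)$ and $(z,w')$ for all $z\notin\{w,w'\}$; re-entering these pairs as new bases one reaches, after a few steps, every ordered pair. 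For instance, picking $u_0\notin\{w,w'\}$ (available since there are at least four interpretations), the pair $(w,u_0)$ is decided, and applying the first lemma to it recovers $(w,w')$ itself fully; applying the first lemma to $(u_0,w')$ produces the pair $(u_0,w)$, from which the second lemma sweeps out all pairs $(\cdot,w)$; symmetric chains cover the remaining shapes.

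The main obstacle is the bookkeeping of this spread: one must check that the finitely many shapes of target pairs $(a,b)$ --- namely $a=w$, $b=w'$, $a=w'$, $b=w$, and the generic case $a,b\notin\{w,w'\}$ --- are each reachable by an explicit chain of applications of Propositions~\ref{prop14} and \ref{prop15}, with every intermediate third interpretation kept distinct from the two coordinates being joined; the standing hypothesis $|\mathcal P|\geq 2$, hence $|\mathcal{W_P}|\geq 4$, guarantees these choices are always possible. Once every ordered pair of distinct interpretations is decided, the singleton-belief and inconsistent cases are disposed of by Observation~\ref{obs1}(b), and assembling the pairwise statements yields property (iii) of Proposition~\ref{prop13}; that proposition then delivers that $D$ is a decisive $N$-coalition, completing the argument.
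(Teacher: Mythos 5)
Your proposal is correct and follows essentially the same route as the paper: reduce, via Proposition~\ref{prop13}, to showing $E_{a,b}D^{\nabla^\ast}E_b$ for every ordered pair of distinct interpretations $(a,b)$, then spread decisiveness from the initial pair using Propositions~\ref{prop14} and~\ref{prop15} together with Observation~\ref{obs1}(a) to re-enter fully decided pairs as new local bases. The paper merely organizes the contagion slightly differently (first deciding all six ordered pairs inside a triple $\{w,w',w''\}$, then handling generic pairs in one extra step), but the chains you sketch are the same ones and the remaining shapes you leave implicit are covered by routine applications of the same two lemmas.
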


\subsection{Impossibility results}
There exist some links  between Weak Standard Domain, Unanimity, Pareto, Independence conditions and the fact that a merging process admits a dictatorial agent. Such links are revealed through the Main Impossibility Theorem below. This theorem shows that if Weak Standard Domain, Pareto and  Independence conditions hold, an ES basic fusion operator is really a dictatorial operator.

\begin{theorem}[Main impossibility theorem]\label{teo6}
Let $\nabla$ be an ES basic fusion operator. If $\nabla$ satisfies \sintSD, \sintP\ and \sintI, then \sintD\ holds.
\end{theorem}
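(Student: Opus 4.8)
The plan is to follow the classical Arrow impossibility argument, recast in the language of \emph{decisive coalitions}, using the semantic assignment of Theorem~\ref{teo1} as the bridge between the syntactic postulates and the order-theoretic reasoning. It suffices to fix an arbitrary finite society $N$ in $\mathcal{F}^\ast(\mathcal{S})$ and exhibit an $N$-dictator, since \sintD\ merely quantifies over all such $N$. The whole proof is organized around the assignment $\Phi\mapsto\succeq_\Phi$ representing $\nabla$ and the semantic characterizations of Pareto, Independence and dictatorship (Propositions~\ref{prop8},~\ref{prop9},~\ref{prop11}).

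First I would establish that decisive coalitions exist at all. Since $\nabla$ satisfies \sintP, Observation~\ref{obs1}(e) tells us that $N$ itself is a decisive $N$-coalition, so the family of decisive $N$-coalitions is nonempty; because $N$ is finite, I can select a decisive coalition $D$ of minimal cardinality, and by Observation~\ref{obs1}(d) together with \sinti\ this $D$ is nonempty. The entire weight of the theorem then rests on showing $|D|=1$.

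I would prove $|D|=1$ by contradiction: assume $|D|\geq 2$ and split $D$ into two nonempty disjoint pieces $D_1$ and $D_2$. Using three distinct interpretations $w$, $w'$, $w''$ (available since $|\mathcal P|\geq 2$ forces at least four interpretations) and invoking \sintSD\ in the form of Observation~\ref{remark-shapes}, I can realize individual epistemic states whose associated preorders restrict on $\{w,w',w''\}$ to a prescribed Arrow pattern (for instance $D_1$ placing $w\succ w'\succ w''$ and $D_2$ placing $w''\succ w\succ w'$). Decisiveness of $D$, through Proposition~\ref{prop13}(iii), then forces $w\succ_\Phi w'$ socially; analyzing the social comparison of the remaining pairs, and using the \emph{(ind)} property of Proposition~\ref{prop9}(iii) to localize everything to two-element restrictions, shows that one of $D_1$, $D_2$ is locally decisive for a suitable pair $E_{a,b}$ against $E_b$. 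The propagation machinery—Propositions~\ref{prop14} and~\ref{prop15}, culminating in the Generalized propagation theorem (Theorem~\ref{teo5})—then upgrades this single instance of local decisiveness into full decisiveness of a coalition strictly smaller than $D$, contradicting minimality.

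Once $|D|=1$ is secured, Proposition~\ref{propnew-dic} immediately identifies the unique element $d_N$ of $D$ as an $N$-dictator, and since $N$ was arbitrary, \sintD\ holds. The main obstacle is the contraction step: choosing the Arrow-style profile on $\{w,w',w''\}$ correctly and then checking, via \sintI\ and \sintP, that the induced preference genuinely isolates one of the halves $D_1$, $D_2$ as locally decisive. The delicacy here is twofold—handling total preorders rather than strict linear orders, and matching the syntactic definition of (local) decisiveness with the semantic \emph{(p)}, \emph{(ind)}, \emph{(d)} characterizations—so that Theorem~\ref{teo5} can legitimately be applied to conclude full decisiveness of the smaller set.
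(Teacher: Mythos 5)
Your proposal is correct and follows essentially the same route as the paper: a minimal decisive coalition $D$ obtained from \sintP\ via Observation~\ref{obs1}, an Arrow-style profile on three interpretations built from \sintSD, a case analysis using \sintI\ to show a proper subset of $D$ is locally decisive, Theorem~\ref{teo5} to upgrade this to full decisiveness (contradicting minimality), and Proposition~\ref{propnew-dic} to conclude. The only cosmetic difference is that the paper splits $D$ as $\{i\}\cup(D\setminus\{i\})$ rather than into two arbitrary nonempty halves.
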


Note that the condition of Transitive explanations in Social Choice Theory is actually implicit in the representation of ES basic fusion operators.

 Postulates \sintSD, \sintP\ and \sintI\ are only sufficient conditions  in order to have an ES basic fusion operator be dictatorial. Actually, as we will see in Section \ref{examples}, there exist ES basic fusion operators that satisfy \linebreak\sintD\ but do not satisfy  \sintI, namely the {\em $\Sigma$-pseudoprojective ES fusion operators}. There are also dictatorial operators which do not satisfy \sintSD, namely the {\em linearized projective operators}
 (see Section \ref{examples}).

By the connection between the postulates established in Section \ref{operators-def} and the Pareto condition which has been revealed in Proposition  \ref{prop16} and the Main Impossibility Theorem  we have:

\begin{corollary}\label{cor1}
  Any ES basic fusion operator  that satisfies \sintSD, \sintvii, \sintviiiw\ (or \sintviii) and \sintI, also satisfies \sintD.
\end{corollary}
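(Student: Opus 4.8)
The plan is to obtain this as a direct chaining of two results already established in the excerpt: Proposition~\ref{prop16} and the Main Impossibility Theorem (Theorem~\ref{teo6}). Since the hypotheses of the corollary differ from those of Theorem~\ref{teo6} only in that the Pareto condition \sintP\ is replaced by the pair \sintvii\ and \sintviiiw, the entire task reduces to deriving \sintP\ from these two postulates and then invoking Theorem~\ref{teo6}.

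First I would dispose of the parenthetical ``(or \sintviii)''. The excerpt already notes that every operator satisfying \sintviii\ also satisfies \sintviiiw\ (indeed \sintviii\ strengthens the conclusion from a disjunction to a conjunction, so its consequent entails that of \sintviiiw). Hence, whether the operator is assumed to satisfy \sintviiiw\ or \sintviii, in both cases we may take \sintviiiw\ as granted, and the two sub-cases of the corollary collapse into one.

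Next I would apply Proposition~\ref{prop16}: since $\nabla$ is in particular an ES combination operator satisfying \sintvii\ and \sintviiiw, that proposition yields immediately that $\nabla$ satisfies the Pareto condition \sintP. At this point $\nabla$ is an ES basic fusion operator satisfying \sintSD, \sintP\ and \sintI, which are exactly the hypotheses of Theorem~\ref{teo6}. Applying the Main Impossibility Theorem then gives \sintD, completing the argument.

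There is no genuine obstacle here; the corollary is a bookkeeping consequence, and all the substantive work has been carried out in Proposition~\ref{prop16} and Theorem~\ref{teo6}. The only point requiring a line of justification is the reduction of the case \sintviii\ to \sintviiiw, which I would state explicitly so that the single argument covers both hypotheses of the corollary simultaneously.
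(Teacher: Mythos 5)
Your proposal is correct and follows exactly the route the paper itself indicates: reduce \sintviii\ to \sintviiiw, use Proposition~\ref{prop16} to obtain \sintP, and then invoke Theorem~\ref{teo6}. Nothing is missing.
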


Similarly, by virtue of the relationship between the Unanimity and Pareto conditions, from Proposition \ref{prop10} and the Main Impossibility Theorem we have the following corollary:

\begin{corollary}\label{cor2}
  Any ES basic fusion operator  that satisfies \sintSD, \sintU\ and \sintI, also satisfies \sintD.
\end{corollary}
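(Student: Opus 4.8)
The plan is to reduce Corollary \ref{cor2} directly to the Main Impossibility Theorem (Theorem \ref{teo6}) by first converting the hypotheses into the exact form that theorem requires. Theorem \ref{teo6} demands an ES basic fusion operator satisfying \sintSD, \sintP\ and \sintI, whereas the corollary hands us \sintSD, \sintU\ and \sintI. Hence the only gap to bridge is replacing the Unanimity hypothesis \sintU\ with the Pareto condition \sintP; once that is done the conclusion \sintD\ is immediate.

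First I would invoke Proposition \ref{prop10}, which states precisely that an ES basic fusion operator satisfying \sintU\ and \sintI\ also satisfies \sintP. Since our operator $\nabla$ is, by assumption, an ES basic fusion operator satisfying both \sintU\ and \sintI, this proposition yields at once that $\nabla$ satisfies \sintP. Here it is worth checking only that the hypotheses of Proposition \ref{prop10} are genuinely met, namely that $\nabla$ is of the right type and that \sintU\ and \sintI\ both hold, all of which are part of the corollary's assumptions.

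Having established \sintP, I would then observe that $\nabla$ now satisfies all three hypotheses of Theorem \ref{teo6}: \sintSD\ (assumed), \sintP\ (just derived) and \sintI\ (assumed). Applying Theorem \ref{teo6} gives \sintD, which is exactly the conclusion sought.

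Since all the genuine work is already packaged in Proposition \ref{prop10} and Theorem \ref{teo6}, there is no real obstacle in the corollary itself; it is a two-line chaining of previously established results. The substance lies entirely in the Main Impossibility Theorem, whose Arrow-style argument, propagating a locally decisive pair into a fully decisive coalition (Theorem \ref{teo5}) and then contracting decisive coalitions down to a singleton dictator (Proposition \ref{propnew-dic}), drives the impossibility; the corollary merely recognizes that Unanimity, together with Independence, is stronger than Pareto and so can feed directly into that machinery.
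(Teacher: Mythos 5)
Your proposal is correct and follows exactly the route the paper itself indicates: apply Proposition \ref{prop10} to convert \sintU\ together with \sintI\ into \sintP, and then invoke Theorem \ref{teo6} with \sintSD, \sintP\ and \sintI\ to conclude \sintD. Nothing is missing.
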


\section{Concrete examples of ES basic fusion operators}\label{examples}
{
In this section we assume that the epistemic states are total preorders over interpretations and we have\linebreak $\model{B(\succeq)}=\max(\succeq)$ for any epistemic state $\succeq$. Then, with this concrete representation of epistemic states,
we will define some ES basic fusion operators.

\subsection{Some classes of ES basic fusion operators}

First, we are going to define a class of ES basic fusion operators using aggregation functions. These operators are called {\em aggregation-based ES fusion operators}. In order to do this, let us recall the definition of aggregation functions.

\begin{definition}[Aggregation functions]
An (symmetric) aggregation function $F$ is a total function associating a
nonnegative real number to every finite tuple of nonnegative integers such that
for any $x_1, . . . , x_n, x, y$ in $\mathds{Z}^+$:

\begin{description}
    \item[Monotony] $F(x_1,\dots,x,\dots,x_n)\geq F(x_1,\dots,y,\dots,x_n)$, if $x\geq y$
    \item[Minimality]  $F(x_1,x_2,\dots,x_n)=0$ if and only if $x_1=x_2=\cdots=x_n=0$
    \item[Identity] $F(x)=x$
    \item[Symmetry] For any permutation $\sigma$, $F\big(x_1,x_2,\dots,x_n\big)=F\big(\sigma(x_{1},x_{2},\dots, x_{n})\big)$
\end{description}
\end{definition}

Two examples of aggregation functions are {\em sum} and {\em max}. They are classic examples in the study of logic-based fusion \cite{KP02}.

\begin{description}
  \item[sum] $\sum(x_1,x_2,\cdots,x_n)=\sum x_i$
  \item[max] $\max(x_1,x_2,\dots,x_n)=\max\{x_1,x_2,\dots,x_n\}$
\end{description}

An aggregation function $F$ induces a total preorder over $\mathcal{W_P}$, for any profile of epistemic states as follows:

  {\em For any $N=\{i_1,i_2,\dots,i_n\}$ in $\mathcal{F}^\ast(\mathcal{S})$,  any $N$-profile $\Phi=(\succeq_{i_1},\succeq_{i_2},\dots,\succeq_{i_n})$, and any couple of interpretations $w,w'$ in $\mathcal{W_P}$
\begin{equation}
w\succeq_\Phi^F w'\mbox{ iff }\; F(r_{i_1}(w),\dots,r_{i_n}(w))\geq F(r_{i_1}(w'),\dots,r_{i_n}(w'))
\label{eq:01}
\end{equation}
where $r_i(x)$ is the rank of an interpretation  $x$ in the total preorder $\succeq_i$, for each agent $i$ in $N$.
}

It is easy to see that, for a given profile of epistemic states $\Phi$, the relation $\succeq_\Phi^F$ is a total preorder over $\mathcal{W_P}$. Thus, for any aggregation function $F$ it is possible to define an assignment $\Phi\mapsto\succeq_\Phi^F$ which {\em preserves the structure of epistemic states}, that is, for all $i$ in $\mathcal{S}$, $\succeq_{\succeq_i}=\succeq_i$ for any total $i$-profile (epistemic state) $\succeq_i$. When an assignment preserves the structure of epistemic states we also say that the assignment is {\em structure preserving}. Note that these assignments trivially are basic assignments that satisfy  Property \assgi.

From the definition \eqref{eq:01}, in addition to the equations \eqref{eq:0} and \eqref{B-Rep}, it is possible to build an ES basic fusion operator $\nabla^F$ associated to an aggregation function $F$ as follows:
\begin{equation}
\nabla^F(\Phi,\succeq)=\succeq^{\mathrm{lex}(\succeq,\succeq_\Phi^F)}
\label{eq:02}
\end{equation}

Thanks to the equation above we define two aggregation-based ES basic fusion operators: $\nabla^\Sigma$  ({\em sum}) and $\nabla^{\max}$ ({\em max}), defined from the aggregation functions sum and max respectively.

Another class of ES combination operators are given in terms of projections or variants of this. The operators in this class are called {\em projective-based ES fusion operators}. In order to define this class, from now on, $d_N$ denotes $\max(N)$ for each finite society $N$ of agents in $\mathcal{S}$.

The first one of such operators is determined by projections over profiles of epistemic states. This operator is called {\em projective ES basic fusion operator}, $\nabla^\pi$ ({\em projective operator} by abuse of notation),  and is defined as follows:

\begin{description}
  \item[Projective ES basic fusion operator] $$\nabla^\pi(\Phi,\succeq)=\succeq^{\mathrm{lex}(\succeq,\succeq_{d_N})}$$
\end{description}

For this  operator, the output of a merging process depends completely on the beliefs of a single agent involved in such merging process and the integrity constraints.

  Two variants of the projective operator are obtained through a ``linearization'' of it.  These families of operators are called {\em linearized projective ES basic fusion operators} and {\em quasilinearized projective ES basic fusion operators} (called respectively {\em linearized projective operators} and {\em quasilinearized projective operators} by abuse). In order to define these operators, let us considerer $\geq$, a fixed linear order over $\Val$:

\begin{description}
  \item[Linearized projective ES basic fusion operator] $$\nabla^{\pi_\geq}(\Phi,\succeq)=\succeq^{\mathrm{lex}(\succeq,\succeq^{\pi_\geq}_\Phi)}$$
\end{description}
where $\succeq^{\pi_\geq}_\Phi=\succeq^{\mathrm{lex}(\succeq_{d_N},\geq)}$.\\

\begin{description}
  \item[Quasilinearized projective ES basic fusion operator] $$\nabla^{\mathrm{Q}\pi_\geq}(\Phi,\succeq)=\succeq^{\mathrm{lex}(\succeq,\succeq^{\mathrm{Q}\pi_\geq}_\Phi)}$$
\end{description}
where $\succeq^{\mathrm{Q}\pi_\geq}_\Phi=\succeq_i$ if $\Phi$ has a single input, namely $\succeq_i$, and $\succeq^{\mathrm{Q}\pi_\geq}_\Phi=\succeq^{\mathrm{lex}(\succeq_{d_N},\geq)}$ otherwise.\\

The following  projective-based ES fusion operator is defined using sum and projections. This operator will be  called {\em $\Sigma$-pseudoprojective ES fusion operator}, $\nabla^{\Sigma-\mathrm{P}\pi}$:

\begin{description}
  \item[$\Sigma$-pseudoprojective ES fusion operator] $$\nabla^{\Sigma-\mathrm{P}\pi}(\Phi,\succeq)=\succeq^{\mathrm{lex}(\succeq,\succeq^{\Sigma-\mathrm{P}\pi}_\Phi)}$$
\end{description}
where $\succeq^{\Sigma-\mathrm{P}\pi}_\Phi=\succeq^{\mathrm{lex}(\succeq_{d_N},\succeq^\Sigma_\Phi)}$.\\

In this kind of merging process an agent allows a very weak participation of all agents involved in the merging process.

%
%

Like sum and max operators, the projective, the linearized projective, the quasilinearized projective and \linebreak$\Sigma$-pseudoprojective  operators are ES basic fusion operators, being $\Phi\mapsto\succeq_{d_N}$, $\Phi\mapsto\succeq^{\pi_\geq}_\Phi$, $\Phi\mapsto\succeq^{\mathrm{Q}\pi_\geq}_\Phi$ and $\Phi\mapsto\succeq_\Phi^{\Sigma-\mathrm{P}\pi}$ 
their respective basic assignments.

\subsection{Behavior of our examples}

 In this section we make an exhaustive study of the rational behavior of the previously defined operators (sum, max, projective, linearized projective, quasilinearized projective and $\Sigma$-pseudoprojective operators). In particular, we show which postulates from the set of  postulates  previously established  are satisfied.

From the properties of sum and max it is easy to see that sum operator is really an ES fusion operator, unlike max operator which is simply an ES quasifusion operator because it satisfaces \sintviiiw, but \sintviii\ does not hold.

Also let us note that operators defined by aggregation functions satisfy \sintSD. This is because they are structure preserving and the freedom one has for building different preorders through its assignment.

Note that the sum operator satisfies \sintU, since \sintvii\ and \sintviii\ hold (see Proposition \ref{prop6}). Moreover, it is easy to verify that the max operator also satisfies the unanimity condition but does not satisfy \sintviii, showing that \sintvii\ and \sintviii\ are together a sufficient but  not a necessary condition for \sintU\ to hold.

Also, sum and max operators satisfy \sintP. This is because  \sintvii\ and \sintviiiw\ hold (see Proposition \ref{prop16}).

Furthermore, under the symmetry  property of aggregation functions, it is easy to see that aggregation-based ES basic fusion operators are not dictatorial operators. In particular, sum and max operators are not dictatorial operators. Thus, by the Main Impossibility Theorem, these operators do not satisfy \sintI, because \sintSD\ and \sintP\ hold. Moreover,  since these operators satisfy \sintU\ and \sintP\ but do not satisfy \sintI, we have that, modulo  the Unanimity condition,   the Pareto condition does not entail Independence condition; that is, one of the converses of Proposition \ref{prop10} does not hold.

The projective  operator satisfies \sintv, \sintvii\ and \sintviiiw, but \sintvi\ and \sintviii\ do not hold, as we show  in Proposition \ref{prop17} (see \ref{apendice}).  Otherwise, as we can see in Proposition \ref{prop21} (see \ref{apendice}), linearized projective operators satisfy \sintvii, \sintviii\ and \sintviiiw,  but do not satisfy neither \sintv\ nor \sintvi. Moreover, as we show in Proposition \ref{prop20} (see \ref{apendice}), quasilinearized projective operators satisfy \sintv\ and \sintviiiw\ but do not satisfy neither \sintvi, \sintvii\ nor \sintviii. The $\Sigma$-pseudoprojective operator satisfies \sintv\ and \sintvi, but \sintvii, \sintviii\ and \sintviiiw\ do not hold (see \ref{apendice}, Proposition \ref{prop18}).


In Propositions \ref{prop17}, \ref{prop21}, \ref{prop20} and \ref{prop18}, we also show that the projective-based ES fusion operators previously defined are dictatorial operators, that is, satisfy \sintD. Thus, by Proposition  \ref{prop12}, the operators in this class of ES basic fusion operators also satisfy the Pareto condition, that is, \sintP\ holds. Moreover, in these results we can see that the projective, quasilinearized projective and the $\Sigma$-pseudoprojective operators satisfy \sintSD, unlike linearized projective operators that do not satisfy it. There we also show that the projective, linearized and quasilinearized projective operators satisfy the Independence condition, \sintI, unlike the $\Sigma$-pseudoprojective operator which does not satisfy it.

Furthermore, Propositions \ref{prop17}, \ref{prop21} and \ref{prop18} also show respectively that, the projective, linearized projective and the $\Sigma$-pseudoprojective operators satisfy the Unanimity condition, \sintU, unlike  quasilinearized projective operators which do not satisfy it, as we show in Proposition \ref{prop20}. Thus, satisfaction of the Pareto condition does not imply satisfaction of the Unanimity condition, even if the Independence condition holds, that is, the other converse of  Proposition \ref{prop10} does not hold.

Table \ref{tab1} shows the properties satisfied by the previously defined operators in a synthesized way, offering a landscape of the rational properties that satisfy those operators. In this table, and for the operators pointed  in the first column, \cmark shows the satisfaction of
the property indicated at the top of its column, while \xmark\  denotes that such property does not hold.

\begin{table}[t]
  \centering
  \caption{Properties of the ES basic fusion operators previously defined}\label{tab1}\smallskip
\begin{tabular}{lcccccccccc}
  \hline
\tiny Operator & \tiny \sintv & \tiny \sintvi & \tiny \sintvii & \tiny \sintviii & \tiny \sintviiiw & \tiny \sintSD & \tiny \sintU & \tiny \sintP & \tiny \sintI & \tiny \sintD\\
  \hline
$\nabla^\Sigma$ & \cmark & \cmark & \cmark & \cmark & \cmark & \cmark & \cmark & \cmark & \xmark & \xmark \\
$\nabla^{\max}$ & \cmark & \cmark & \cmark & \xmark & \cmark & \cmark & \cmark & \cmark & \xmark & \xmark \\
$\nabla^\pi$ & \cmark & \xmark & \cmark & \xmark & \cmark & \cmark & \cmark & \cmark & \cmark & \cmark \\
$\nabla^{\pi_\geq}$ & \xmark & \xmark & \cmark & \cmark & \cmark & \xmark & \cmark & \cmark & \cmark & \cmark \\
$\nabla^{\mathrm{Q}\pi_\geq}$ & \cmark & \xmark & \xmark & \xmark & \cmark & \cmark & \xmark & \cmark & \cmark & \cmark \\
$\nabla^{\Sigma-\mathrm{P}\pi}$  & \cmark & \cmark & \xmark & \xmark & \xmark & \cmark  & \cmark & \cmark & \xmark & \cmark \\
\hline
\end{tabular}
\end{table}

As the Main impossibility theorem establishes, in order to see that an ES fusion basic operator is dictatorial, it is enough to see that it satisfies \sintSD, \sintP\ and \sintI. However, as shown in Table \ref{tab1}, the $\Sigma$-pseudoprojective operator is an example of a dictatorial operator that does not satisfy the Independence condition, \sintI. Furthermore, we can see that linearized projective operators are dictatorial operators which do not satisfy \sintSD. All this shows in a finer way that these three properties are not necessary conditions for an ES basic fusion operator to be dictatorial.

 }
\section{Final remarks and perspectives}\label{conclu}


We have presented an epistemic version of postulates of merging which allow us to give very precise representation theorems
(Theorems \ref{teo1}, \ref{teo2}, \ref{teo3} and Proposition \ref{prop3}). We have also introduced new postulates with a more social flavor.
These postulates, except Standard Domain, have been characterized semantically (Propositions \ref{prop7}, \ref{prop8}, \ref{prop9}, \ref{prop11}).
We have shown some tight relationships between the merging postulates and the social postulates; in particular, the ES quasifusion operators
satisfy the Pareto Condition  (Proposition \ref{prop16}) and the the ES fusion operators
satisfy the Unanimity Condition (Proposition \ref{prop6}).
Actually, unlike the other social postulates the Standard Domain Postulate has a formulation which is very near to a semantical one as the Observation \ref{remark-shapes} establishes.

The social postulates together with the basic fusion postulates are enough to prove a general impossibility theorem: the ES basic fusion operator satisfying the Standard Domain, Pareto and Independence Conditions are indeed dictatorial operators (Theorem \ref{teo6}). Moreover, the ES fusion operators
satisfying  the Standard Domain Postulate and the Independence Condition are dictatorial operators (Corollary \ref{cor1}).

One very interesting feature of our approach is that it gives interesting instantiations of the Inpossibility Theorem  (Theorem \ref{teo6}) for different representations of epistemic states. Thus,
this applies to Ordinal Conditional Functions, rational relations, and of course total preorders. However, the representation of epistemic states as formulas does not work because it is impossible to have Standard Domain in presence of a good representation of beliefs, namely the Maximality Condition. This fact highlights the necessity of using complex epistemic states.

The concrete examples of Section \ref{examples} show that the Standard Domain and Independence  are not necessary conditions in order to have a dictatorial operator (see Table \ref{tab1}).

Having established the impossibility Theorem, it is natural to ask for general results of manipulability. We have some work in progress in this direction.

At the moment, we have not  an example of an ES fusion or quasifusion operator which is at the same time a dictatorial operator. We conjecture that such kind of operators exist.

We think that Theorem \ref{new-theorem-formulas} can be generalized to an arbitrary number of models greater than 4. However the combinatorial analysis
seems very complicated.

Finally, an interesting question is to find a characterization of dictatorial operators.

\section*{Acknowledgements}
Thanks to the CDCHTA-ULA for its financial support to the Project N$^\circ$ H-1538-16-05-C. This work is a result of this project.

Thanks to Professor Olga Porras for the English proofreading.

\bibliographystyle{plain}
\bibliography{biblio}

\appendix

\section{Proofs}\label{apendice}

{\bf Proof of Proposition \ref{prop3}:}
  Let $\Phi\mapsto\succeq_\Phi$ be an assignment that satisfies  Properties \assgiii\ and \assgiv. If this assignment satisfies \assgii\,  the maximality condition follows straightforwardly. In order to prove the converse, we suppose that $\Phi\mapsto\succeq_\Phi$ satisfies the maximality condition, and using induction on the length of finite societies, we will show that for any $N$ in $\mathcal{F}^\ast(\mathcal{S})$, $N$ satisfies the following:
\begin{equation}\label{eq:03}
  \mbox{For any }N\mbox{-profile }\Phi, \mbox{ if }\textstyle{\Bigwedge_{i\in N}B(E_i)\not\vdash\bot}, \mbox{ then }\textstyle{\model{\Bigwedge_{i\in N} B(E_{i})}=\max(\succeq_\Phi)}
\end{equation}
If $N$ has a single agent, the above property is exactly the maximality condition. Now, let $n$ be a  positive integer, with $n>1$, and suppose that for all $D$ in $\mathcal{F}^\ast(\mathcal{S})$, with less than $n$ agents, the condition \eqref{eq:03} holds. Assume $N$ in $\mathcal{F}^\ast(\mathcal{S})$ consisting of $n$ agents. Let us consider $\Phi$ an $N$-profile satisfying $\Bigwedge_{i\in N} B(E_{i})\not\vdash\bot$. Let $w$ and $w'$ be interpretations. Assume that $w\models \Bigwedge_{i\in N} B(E_{i})$. Let us take $D=N\setminus\{j\}$ a finite society of agents in $\mathcal{S}$, where $j$ is an agent in $N$. Thus, the $D$-profile $\Phi\upharpoonright_D$ satisfies  $w\models \Bigwedge_{i\in D} B(E_{i})$. Then, by condition \eqref{eq:03}, $w\succeq_{\Phi\upharpoonright_D} w'$. Since $w\models B(E_j)$, by the maximality condition, we have $w\succeq_{E_j} w'$. Hence, from  Property \assgiii, it follows that $w\succeq_\Phi w'$, that is, $w\in\max(\succeq_\Phi)$. Thus, $\model{\Bigwedge_{i\in N} B(E_{i})}\subseteq \max(\succeq_\Phi)$.

Now we prove that $\max(\succeq_\Phi)\subseteq \model{\Bigwedge_{i\in N} B(E_{i})}$. Let $w$ be in $\max(\succeq_\Phi)$ and, towards a contradiction, suppose $w\not\models\Bigwedge_{i\in N} B(E_{i})$. Thus, there exists $j$ in $N$ such that $w\not \models B(E_j)$. Let us consider the finite society $D=N\setminus\{j\}$. Take an interpretation $w'$ such that $w'\models\Bigwedge_{i\in N}B(E_{i})$. Clearly, we have
$w'\models \Bigwedge_{i\in D} B(E_{i})$. From condition \eqref{eq:03}, we obtain
$w'\succeq_{\Phi\upharpoonright_D} w$. Since $w'\models B(E_j)$ and $w\not\models B(E_j)$, from the maximality condition we have $w'\succ_{E_j} w$. Therefore,
 by Property \assgiv, we have $w'\succ_\Phi w$, a contradiction.\qed

{\bf Proof of Theorem \ref{teo1}:}

\noindent({\em Only if part})\quad Assume that $\nabla$ is an ES basic fusion operator. We define the assignment $\Phi\mapsto\succeq_\Phi$ associated to this operator as follow: for all $N$ in $\mathcal{F}^\ast(\mathcal{S})$ and every $N$-profile $\Phi$, $\succeq_\Phi$ is given by:
    \begin{equation}\label{eq:8}
    w\succeq_\Phi w'\mbox{\it iff } w\models B\big(\nabla(\Phi,E_{w,w'})\big);
    \end{equation}
    where  $E_{w,w'}$ is an epistemic state in  $\mathcal{E}$  satisfying $\model{B(E_{w,w'})}=\{w,w'\}$.

Since $B(\mathcal{E})$ is exactly $\mathcal{L_P}^\ast\!/\!\!\equiv$, modulo equivalence, there exists such an epistemic state $E_{w,w'}$ in $\mathcal{E}$. Moreover, because of \sintii, the definition of $\succeq_\Phi$ does not depend on the choice of $E_{w,w'}$. Thus, let us show that $\succeq_\Phi$ is a total preorder over $\mathcal{W_P}$.

\begin{description}
        \item[Total] Let $w$, $w'$ be interpretations in $\mathcal{W_P}$ and consider $E_{w,w'}$ an epistemic state that satisfies $\model{B(E_{w,w'})}=\{w,w'\}$. From \sinti\ and the consistency of $B\big(\nabla(\Phi,E_{w,w'})\big)$, we have either $w\models B\big(\nabla(\Phi,E_{w,w'})\big)$ or $w'\models B\big(\nabla(\Phi,E_{w,w'})\big)$. By definition, this means that $w\succeq_\Phi w'$ or $w'\succeq_\Phi w$, respectively.

        \item[Transitivity] Assume that $w$, $w'$ and $w''$ are interpretations in $\mathcal{W_P}$ and suppose that $w\succeq_\Phi w'$ and $w'\succeq_\Phi w''$. We want to show that $w\succeq_\Phi w''$. Towards a contradiction, assume that $w\not\succeq_\Phi w''$. Let $E_{w,w'}$, $E_{w',w''}$, $E_{w,w''}$, $E$ be epistemic states  satisfying $\model{B(E_{w,w'})}=\{w,w'\}$, $\model{B(E_{w',w''})}=\{w',w''\}$, $\model{B(E_{w,w''})}=\{w,w''\}$ and $\model{B(E)}=\{w,w',w''\}$. Since $B\big(\nabla(\Phi,E_{w,w''})\big)$ is consistent and $w\not\succeq_\Phi w''$, from \sinti\ we have that $w''$ is the unique model of  $B\big(\nabla(\Phi,E_{w,w''})\big)$. Now we consider the following two cases:\\

            \begin{itemize}
            \item $B\big(\nabla(\Phi,E)\big)\wedge B(E_{w,w''})\not\vdash\bot$. In this case, since  $B(E_{w,w''})\equiv B(E)\wedge B(E_{w,w''})$,
                     by \sintiii\ and \sintiv, we have
                \begin{equation}\label{eq:nueva}
                B\big(\nabla(\Phi,E)\big)\wedge B(E_{w,w''})\equiv B\big(\nabla(\Phi,E_{w,w''})\big)
                \end{equation}
                From the fact that $w\not\models B\big(\nabla(\Phi,E_{w,w''})\big)$ and the equivalence \eqref{eq:nueva},   we have  $w\not\models B\big(\nabla(\Phi,E)\big)$. From the equivalence \eqref{eq:nueva}, we have also $w''\models B\big(\nabla(\Phi,E)\big)$. Then, by \sintiii\ and \sintiv\ we have
            \begin{equation}\label{eq:09}
            B\big(\nabla(\Phi,E)\big)\wedge B(E_{w',w''})\equiv B\big(\nabla(\Phi,E_{w',w''})\big)
            \end{equation}

            Moreover, $w'\succeq_\Phi w''$ means that $w'\models B\big(\nabla(\Phi,E_{w',w''})\big)$. So, from the equivalence \eqref{eq:09}, we have that $w'\models B\big(\nabla(\Phi,E)\big)$. Therefore, $\model{B\big(\nabla(\Phi,E)\big)}=\{w',w''\}$. Thus, by \sintiii\ and \sintiv, we have
            \begin{equation}\label{eq:10}
            B\big(\nabla(\Phi,E)\big)\wedge B(E_{w,w'})\equiv B\big(\nabla(\Phi,E_{w,w'})\big)
            \end{equation}
            But $w\succeq_\Phi w'$, then, by definition and the equivalence \eqref{eq:10}, we have  $w\models B\big(\nabla(\Phi,E)\big)$, a contradiction.\\

            \item $B\big(\nabla(\Phi,E)\big)\wedge B(E_{w,w''})\vdash\bot$. In this case, by \sinti, $w'$ is the sole model of $B\big(\nabla(\Phi,E)\big)$. Therefore, $B\big(\nabla(\Phi,E)\big)\wedge B(E_{w,w'})\not\vdash\bot$ and by \sintiii\ and \sintiv, we have
            \begin{equation}\label{eq:11}
            B\big(\nabla(\Phi,E)\big)\wedge B(E_{w,w'})\equiv B\big(\nabla(\Phi,E_{w,w'})\big)
            \end{equation}

            Since $w\not\models B\big(\nabla(\Phi,E)\big)$, by the equivalence \eqref{eq:11}, we have
            $w\not\models B\big(\nabla(\Phi,E_{w,w'})\big)$, that is,  $w\not\succeq_\Phi w'$, a contradiction.
            \end{itemize}
        \end{description}

        Now we show that $\Phi\mapsto\succeq_\Phi$ satisfies \eqref{B-Rep}. In order to do this, we consider $N$ in $\mathcal{F}^\ast(\mathcal{S})$, an $N$-profile $\Phi$ and an epistemic state $E$ in $\mathcal{E}$ and we have to verify that $\model{B\big(\nabla(\Phi,E)\big)}=\max\big(\model{B(E)},{\succeq_\Phi}\big)$.

        \begin{itemize}
        \item First we prove that $\model{B\big(\nabla(\Phi,E)\big)}\subseteq \max\big(\model{B(E)},{\succeq_\Phi}\big)$. Let $w$ be an interpretation in $\mathcal{W_P}$ such that \linebreak$w\models B\big(\nabla(\Phi,E)\big)$. By  \sinti, $w\models B(E)$. Towards a contradiction, suppose that $w$ is not in $\max\big(\model{B(E)},\succeq_\Phi\big)$. Thus, there exists $w'$, a model of $B(E)$ such that $w'\succ_\Phi w$.  Let $E_{w,w'}$ be an epistemic state which satisfies $\model{B(E_{w,w'})}=\{w,w'\}$. By the definition given by the equivalence \eqref{eq:8}, we have  $w\not\models B\big(\nabla(\Phi,E_{w,w'})\big)$. Then, since $B\big(\nabla(\Phi,E)\big)\wedge B(E_{w,w'})\not\vdash\bot$ and $B(E_{w,w'})\equiv B(E)\wedge B(E_{w,w'})$, by \sintiii\ and  \sintiv, we have $B\big(\nabla(\Phi,E)\big)\wedge B(E_{w,w'})\equiv B\big(\nabla(\Phi,E_{w,w'})\big)$. Therefore $w\not\models B\big(\nabla(\Phi,E)\big)$, a contradiction.\\

        \item Now we prove that $\model{B\big(\nabla(\Phi,E)\big)}\supseteq \max\big(\model{B(E)]},{\succeq_\Phi}\big)$. Consider $w$, $w'$ a pair of interpretations such that $w$ is in $\max\big(\model{B(E)},\succeq_\Phi\big)$ and $w'\models B\big(\nabla(\Phi,E)\big)$. By \sinti, $w'\models B(E)$, and then $w\succeq_\Phi w'$.
            Let $E_{w,w'}$ be an epistemic state satisfying $\model{B(E_{w,w'})}=\{w,w'\}$.
            By definition,  $w\models B\big(\nabla(\Phi,E_{w,w'})\big)$.  Since $B\big(\nabla(\Phi,E)\big)$ is consistent with $B(E_{w,w'})$, by \sintiii\ and \sintiv, we have $B\big(\nabla(\Phi,E)\big)\wedge B(E_{w,w'})\equiv B\big(\nabla(\Phi,E_{w,w'})\big)$.
            From this and the fact that $w\models B\big(\nabla(\Phi,E_{w,w'})\big)$, we have  $w\models B\big(\nabla(\Phi,E)\big)$.

        \end{itemize}

Finally, from \sintii\ it follows straightforwardly  that $\Phi\mapsto\succeq_\Phi$ is really a basic assignment. Moreover, it is easy to see that this is the only basic assignment  that satisfies \eqref{B-Rep}. Indeed, any  assignment satisfying
\eqref{B-Rep} has to satisfy also the equivalence \eqref{eq:8}, so it is unique.

\bigskip
\noindent({\em If part})\quad Assume that $\Phi\mapsto\succeq_\Phi$ satisfies \eqref{B-Rep}. The satisfaction of \sinti\ and \sintii\ is a straightforward consequence of \eqref{B-Rep} and by the fact that $\Phi\mapsto\succeq_\Phi$ is a basic assignment. Thus, it remains to prove \sintiii\ and \sintiv.

\begin{description}
    \item[\sintiii] Suppose we have a finite society of agents $N$ in $\mathcal{F}^\ast(\mathcal{S})$, $\Phi$ an $N$--profile, and $E$, $E'$, $E''$ three  epistemic states such that $B(E)\equiv B(E')\wedge B(E'')$. We must show that
        $$B\big(\nabla(\Phi,E')\big)\wedge B(E'')\vdash B\big(\nabla(\Phi,E)\big)$$

        If $B\big(\nabla(\Phi,E')\big)\wedge B(E'')\vdash\bot$ the above entailment is clear.
         Now suppose that $B\big(\nabla(\Phi,E')\big)\wedge B(E'')\not\vdash\bot$. Let $w$ be an interpretation that satisfies $w\models B\big(\nabla(\Phi,E')\big)\wedge B(E'')$. Thus, from \eqref{B-Rep} we have that $w\models B(E')\wedge B(E'')$, and therefore $w\models B(E)$. Let $w'$ be any interpretation such that $w'\models B(E)$; then $w'\models B(E')$. Since $w\models B\big(\nabla(\Phi,E')\big)$, by \eqref{B-Rep} we have  $w\succeq_\Phi w'$; and again  by \eqref{B-Rep}, we have $w\models B\big(\nabla(\Phi,E)\big)$.\\

    \item[\sintiv] Let us consider $N$ in $\mathcal{F}^\ast(\mathcal{S})$, an $N$-profile $\Phi$ and epistemic states $E$, $E'$ and $E''$ such that $B(E)\equiv B(E')\wedge B(E'')$ and $B\big(\nabla(\Phi,E')\big)\wedge B(E'')\not\vdash\bot$. We want to prove that $B\big(\nabla(\Phi,E)\big)\vdash B\big(\nabla(\Phi,E')\big)\wedge B(E'')$. Let $w$ be an interpretation such that $w\models B\big(\nabla(\Phi,E)\big)$. Towards a contradiction, suppose that $w\not\models B\big(\nabla(\Phi,E')\big)\wedge B(E'')$. Let $w'$ be a model of $B\big(\nabla(\Phi,E')\big)\wedge B(E'')$. Thus,  by \eqref{B-Rep}, $w'\models B(E')\wedge B(E'')$, and then $w'\models B(E)$. Since $w\models B(E')$, $w\not\models B\big(\nabla(\Phi,E')\big)$ and $w'\models B\big(\nabla(\Phi,E')\big)$, by \eqref{B-Rep}, we have   $w'\succ_\Phi w$. Again by \eqref{B-Rep}, this means that $w\not\models B\big(\nabla(\Phi,E)\big)$, a contradiction.\qed
    \end{description}

{\bf Proof of Proposition \ref{prop4}:}
Let $\nabla$ be an ES basic fusion operator and let us consider $\Phi\mapsto\succeq_\Phi$ the basic assignment given by theorem \ref{teo1} associated to $\nabla$.

\begin{enumerate}
\item[(i)] ({\em Only if part}) Let us suppose that $\nabla$ satisfies \sintv. We want to show that $\Phi\mapsto\succeq_\Phi$ satisfies \assgi. Let $j$ and $k$ be a pair of agents in $\mathcal{S}$, and consider profiles $E_j$, $E_k$, such that $E_j\neq E_k$. We want to see that  $\succeq_{E_j}\neq\succeq_{E_k}$.
     By \sintv, there exists an epistemic state $E$ such that $B\big(\nabla(E_j,E)\big)\not\equiv B\big(\nabla(E_k,E)\big)$. From this, as a straightforward consequence of \eqref{B-Rep}, we have that $\succeq_{E_j}\neq\succeq_{E_k}$.\\

    ({\em If part}) If $\Phi\mapsto\succeq_\Phi$ satisfies \assgi, there exists a pair of interpretations $w$, $w'$ such that $w\succeq_{E_j}w'$ and $w'\succ_{E_k}w$. Let $E$ be an epistemic state such that $\model{B(E)}=\{w,w'\}$, then  $\max\big(\model{B(E)},{\succeq_{E_j}}\big)\neq \max\big(\model{B(E)},{\succeq_{E_k}}\big)$. Thus, by \eqref{B-Rep} we have that $B\big(\nabla(E_j,E)\big)\not\equiv B\big(\nabla(E_k,E)\big)$.\\

\item[(ii)] ({\em Only if part}) Assume that $\nabla$ satisfies \sintvi. Let us consider $N$ in $\mathcal{F}^\ast(\mathcal{S})$ and   $\Phi$ an $N$-profile such that \linebreak$\Bigwedge_{i\in N} B(E_i)\not\vdash\bot$.   We want   to show   that $\max(\succeq_\Phi)=\model{\Bigwedge_{i\in N} B(E_i)}$.

    \begin{itemize}
      \item First we prove $\max(\succeq_\Phi)\subseteq\model{\Bigwedge_{i\in N} B(E_i)}$. In order to do this, let $w$ be an interpretation in $\max(\succeq_\Phi)$, and suppose, towards a contradiction, that $w\not\models\Bigwedge_{i\in N} B(E_i)$. Let $w'$ be a model of $\Bigwedge_{i\in N} B(E_i)$. Note that $w\succeq_\Phi w'$.
          Let $E$ be an epistemic state such that $\model{B(E)}=\{w,w'\}$. Then, because of \sintvi, we have  that $B\big(\nabla(\Phi,E)\big)\equiv\Bigwedge_{i\in N} B(E_i)\wedge B(E)$. Thus, $w\not\models B\big(\nabla(\Phi,E)\big)$ and $w'\models B\big(\nabla(\Phi,E)\big)$. Then, by \eqref{B-Rep}, we have $w'\succ_\Phi w$, a contradiction.

      \item Now we prove $\max(\succeq_\Phi)\supseteq\model{\Bigwedge_{i\in N} B(E_i)}$. Let $w$ be a model of  $\Bigwedge_{i\in N} B(E_i)$. We want to see that $w$ is in $\max(\succeq_\Phi)$. Let  $w'$ be an interpretation and $E$ be an epistemic state such that $\model{B(E)}=\{w,w'\}$. Since $w$ is a model of $\Bigwedge_{i\in N} B(E_i)\wedge B(E)$,  by \sintvi, we have  $w\models B\big(\nabla(\Phi,E)\big)$. From this and  \eqref{B-Rep}, we have  $w\succeq_\Phi w'$.

    \end{itemize}
({\em If part}) Assume that $\Phi\mapsto\succeq_\Phi$ satisfies Property \assgii. In order to show that $\nabla$ satisfies \sintvi,  consider $N$ in $\mathcal{F}^\ast(\mathcal{S})$,  an $N$--profile $\Phi$ and an epistemic state $E$ such that $\Bigwedge_{i\in N} B(E_i)\wedge B(E)\not\vdash\bot$.
In particular, $\Bigwedge_{i\in N} B(E_i)\not\vdash\bot$.
Thus, by Property \assgii, $\max(\succeq_\Phi)=\model{\Bigwedge_{i\in N} B(E_i)}$. Therefore, $\max\big(\model{B(E)},\succeq_\Phi\big)=\model{\Bigwedge_{i\in N} B(E_i)\wedge B(E)}$. Then, by \eqref{B-Rep},  $B\big(\nabla(\Phi,E)\big)\equiv\Bigwedge_{i\in N} B(E_i)\wedge B(E)$.\\

\item[(iii)] ({\em Only if part}) We assume \sintvii. In order to prove Property \assgiii, consider $N$ in $\mathcal{F}^\ast(\mathcal{S})$, $\Set{N_1, N_2}$ a partition of $N$, $\Phi$ an $N$-profile and $w$, $w'$ interpretations such that $w\succeq_{\Phi\upharpoonright_{N_1}}w'$ and $w\succeq_{\Phi\upharpoonright_{N_2}}w'$. We want to show that  $w\succeq_\Phi w'$.
     Let $E$ be an epistemic state such that $\model{B(E)}=\{w,w'\}$. By \eqref{B-Rep}, we have that $w\models B\big(\nabla(\Phi\upharpoonright_{N_1},E)\big)\wedge B\big(\nabla(\Phi\upharpoonright_{N_2},E)\big)$. Then, by \sintvii,
      we have $w\models B\big(\nabla(\Phi,E)\big)$. Thus, by \eqref{B-Rep},    $w\succeq_\Phi w'$.\\

    ({\em If part}) Suppose that the assignment $\Phi\mapsto\succeq_\Phi$ satisfies Property \assgiii. In order to prove   that $\nabla$ satisfies \sintvii, consider $N$ in $\mathcal{F}^\ast(\mathcal{S})$, $\Set{N_1, N_2}$ a partition of $N$, $\Phi$ an $N$-profile and $E$ an epistemic state. If $B\big(\nabla(\Phi\upharpoonright_{N_1},E)\big)$ is inconsistent  with $B\big(\nabla(\Phi\upharpoonright_{N_2},E)\big)$ the result follows straightforwardly. Now suppose that  $B\big(\nabla(\Phi\upharpoonright_{N_1},E)\big)\wedge B\big(\nabla(\Phi\upharpoonright_{N_2},E)\big)$ is consistent, and let $w$ be a model of it.  Let $w'$ be any model of $B(E)$. By \eqref{B-Rep}, we have
    $w\succeq_{\Phi\upharpoonright_{N_1}} w'$ and $w\succeq_{\Phi\upharpoonright_{N_2}}w'$. Thus, from Property  \assgiii, it follows that $w\succeq_\Phi w'$; and this is true  for all $w'$ such that $w'\models B(E)$, that is, $w\in \max(\model{B(E)},\succeq_\Phi)$. Then, by  \eqref{B-Rep}, we have that $w\models B\big(\nabla(\Phi,E)\big)$.\\

\item[(iv)] ({\em Only if part}) Assume that $\nabla$ satisfies \sintviii. In order to see that the assignment  $\Phi\mapsto\succeq_\Phi$ satisfies Property \assgiv,  consider $N$ in $\mathcal{F}^\ast(\mathcal{S})$, $\Set{N_1, N_2}$ a partition of $N$, $\Phi$ an $N$-profile, $w$, $w'$ a pair of interpretations and $E$ an epistemic state such that $\model{B(E)}=\{w,w'\}$, $w\succeq_{\Phi\upharpoonright_{N_1}}w'$ and $w\succ_{\Phi\upharpoonright_{N_2}}w'$. Thus, by \eqref{B-Rep}, $w$ is the sole model of $B\big(\nabla(\Phi\upharpoonright_{N_1},E)\big)\wedge B\big(\nabla(\Phi\upharpoonright_{N_2},E)\big)$. By \sintviii,  $w$ is also the sole model of $B\big(\nabla(\Phi,E)\big)$. From this, again by \eqref{B-Rep}, we have $w{\succ_\Phi} w'$.\\

    ({\em If part}) Assume that the assignment $\Phi\mapsto\succeq_\Phi$ satisfies Property \assgiv. Suppose, towards a contradiction, that $\nabla$ does not satisfy \sintviii. Thus, there exist $N$ in $\mathcal{F}^\ast(\mathcal{S})$, $\Set{N_1, N_2}$ a partition of $N$, $\Phi$ an $N$-profile and $w$ a model of $B\big(\nabla(\Phi,E)\big)$ such that   $w\not\models B\big(\nabla(\Phi\upharpoonright_{N_1},E)\big)\wedge B\big(\nabla(\Phi\upharpoonright_{N_2},E)\big)$ and $B\big(\nabla(\Phi\upharpoonright_{N_1},E)\big)\wedge B\big(\nabla(\Phi\upharpoonright_{N_2},E)\big)$ is consistent. Note that $w\models B(E)$. Without loss of generality, suppose that $w\not\models B\big(\nabla(\Phi\upharpoonright_{N_2},E)\big)$, and assume $w'$ is a model of $B\big(\nabla(\Phi\upharpoonright_{N_1},E)\big)\wedge B\big(\nabla(\Phi\upharpoonright_{N_2},E)\big)$. From this, by \sintviii, we have  $w'\models B\big(\nabla(\Phi,E)\big)$ and then, by \eqref{B-Rep}, $w\succeq_\Phi w'$.
 Since $w'\models B\big(\nabla(\Phi\upharpoonright_{N_1},E)\big)\wedge B\big(\nabla(\Phi\upharpoonright_{N_2},E)\big)$ and $w\not\models B\big(\nabla(\Phi\upharpoonright_{N_2},E)\big)$, by \eqref{B-Rep}, we have  $w'{\succeq_{\Phi\upharpoonright_{N_1}}} w$ and $w'{\succ_{\Phi\upharpoonright_{N_2}}} w$. Then,
 from  Property \assgiv, it follows that $w'\succ_\Phi w$, a contradiction.\\

\item[(v)] ({\em Only if part}) Assume that $\nabla$ satisfies \sintviiiw. In order to see that the assignment $\Phi\mapsto\succeq_\Phi$ satisfies Property  \assgivp,  consider $N$ in $\mathcal{F}^\ast(\mathcal{S})$, $\Set{N_1, N_2}$ a partition of $N$, $\Phi$ an $N$-profile, and $w$, $w'$ interpretations such that, $w{\succ_{\Phi\upharpoonright_{N_1}}}w'$ and $w{\succ_{\Phi\upharpoonright_{N_2}}}w'$.
Let $E$ be an epistemic state  such that $\model{B(E)}=\{w,w'\}$. By \eqref{B-Rep}, we have $B\big(\nabla(\Phi\upharpoonright_{N_1},E)\big)\vee B\big(\nabla(\Phi\upharpoonright_{N_2},E)\big)$ has a unique model, namely $w$. Thus, by \sintviiiw, $w$ is also the unique model of $B\big(\nabla(\Phi,E)\big)$. Therefore, by \eqref{B-Rep}, we have  $w\succ_\Phi w'$.\\

({\em If part}) Assume Property \assgivp. In order to see that \sintviiiw\ holds,   consider $N$ in $\mathcal{F}^\ast(\mathcal{S})$, $\Set{N_1, N_2}$ a partition of $N$, $\Phi$ an $N$--profile and $E$ an epistemic state such that $B\big(\nabla(\Phi\upharpoonright_{N_1},E)\big)\wedge B\big(\nabla(\Phi\upharpoonright_{N_2},E)\big)\not\vdash\bot$. Let $w$ be a model of $B\big(\nabla(\Phi,E)\big)$. We want to show that $w$ is a model of $B\big(\nabla(\Phi\upharpoonright_{N_1},E)\big)\vee B\big(\nabla(\Phi\upharpoonright_{N_2},E)\big)$.
Suppose that $w\not\models B\big(\nabla(\Phi\upharpoonright_{N_1},E)\big)$.  Let $w'$ be a model of $B\big(\nabla(\Phi\upharpoonright_{N_1},E)\big)\wedge B\big(\nabla(\Phi\upharpoonright_{N_2},E)\big)$. Then, by \eqref{B-Rep}, we have that $w{\succeq_\Phi}w'$, $w'{\succ_{\Phi\upharpoonright_{N_1}}}w$, $w'{\succeq_{\Phi\upharpoonright_{N_2}}}w$. If $w'{\succ_{\Phi\upharpoonright_{N_2}}}w$, since $w'{\succ_{\Phi\upharpoonright_{N_1}}}w$, from Property \assgivp,  we have that $w'{\succ_\Phi}w$, a contradiction.
Thus, necessarily $w{\simeq_{\Phi\upharpoonright_{N_2}}}w'$, and,  by \eqref{B-Rep}, we have $w\models B\big(\nabla(\Phi\upharpoonright_{N_2},E)\big)$. Therefore $w\models
B\big(\nabla(\Phi\upharpoonright_{N_1},E)\big)\vee B\big(\nabla(\Phi\upharpoonright_{N_2},E)\big)$, as desired.\qed
\end{enumerate}

{\bf Proof of Theorem \ref{new-theorem-formulas}:}
First of all, note that there are four interpretations and therefore fifteen elements in $\Form^\ast/\!\equiv$.
Since  $\nabla$ is a basic operator,  the assignment  restraint to profiles of size one has at most fifteen
elements in its image. That is, the function $\varphi\mapsto \succeq_\varphi$ has at most fifteen images.
We suppose that the Maximality Condition holds, that is, $max(\succeq_\varphi)=\model{\varphi}$.
Table \ref{tab2} will be useful for our combinatorial analysis.

\begin{table}
\centering
\caption{Types of preorders associated to formulas according to the number of their models}\label{tab2}
\smallskip
\begin{tabular}{ccc}
\hline
\# of models & \# of formulas $\varphi$ & possible types of $\succeq_\varphi$\\ \hline \hline
4 &  1 & $\bullet\; \bullet\; \bullet\; \bullet$\\\hline
3 & 4 & \ordre{\bullet\; \bullet\; \bullet\\ \bullet}\\\hline
2 & 6 & \ordre{\bullet\; \bullet\\ \bullet\; \bullet}\ \ \ \ \ordre{\bullet\; \bullet\\ \bullet\\ \bullet}\\ \hline
1 & 4 & \ordre{\bullet\\ \bullet\\ \bullet\\ \bullet}\ \ \  \ \ordre{\bullet\\ \bullet\\ \bullet\;  \bullet} \ \ \
\ordre{\bullet\\ \bullet\; \bullet\\ \bullet}\ \ \  \ \ordre{\bullet\\ \bullet\; \bullet\; \bullet}\\ \hline
\end{tabular}
\end{table}

By Observation \ref{remark-shapes}, all the preorders $\succeq_\varphi$ have to cover all the shapes of the following types:\\
\begin{center}
\ordred{S_1}{\bullet\; \bullet\; \bullet}\hspace{8mm}
\ordred{S_2}{\bullet\; \bullet\\ \bullet}\hspace{8mm}
\ordred{S_3}{\bullet\\ \bullet\; \bullet}\hspace{8mm}
\ordred{S_4}{\bullet\\ \bullet\\ \bullet}
\end{center}
where the points represent 3 arbitrary models taken among the four models we are considering.
It is clear that the  first two shapes, \ie\ the shapes of the form $S_1$ and $S_2$, can be covered by the images $\succeq_\varphi$ of the formulas of 4 and 3 models
(actually, the image of formulas of 3 models are enough to cover all the shapes of the type $S_1$ and $S_2$).
It is also clear that the images of formulas of 4 and 3 models can not cover any shape of types $S_3$ and $S_4$.
Thus, the problem is now reduced to knowing if it is possible to cover all the shapes of types $S_3$ and $S_4$ with the images of
formulas having  2 models or  1 model. We claim that this is impossible.

First we consider all the types of total preorders which can be the image by the assignment of formulas having  2 models  ($D_1$ and $D_2$) or  1 model ($T_1,\dots , T_4$) in the following way:\\
\begin{center}
\ordred{D_1}{\bullet\; \bullet\\ \bullet\; \bullet}\hspace{8mm}
\ordred{D_2}{\bullet\; \bullet\\ \bullet\\ \bullet}\hspace{8mm}
\ordred{T_1}{\bullet\\ \bullet\\ \bullet\\ \bullet}\hspace{8mm}
\ordred{T_2}{\bullet\\ \bullet\\ \bullet\;  \bullet}\hspace{8mm}
\ordred{T_3}{\bullet\\ \bullet\; \bullet\\ \bullet}\hspace{8mm}
\ordred{T_4}{\bullet\\ \bullet\; \bullet\; \bullet}
\end{center}

In order to see the previously mentioned impossibility, note that there are 24 possible patterns for the shapes of type $S_4$   (the number of ways to select 3 elements of 4, that is $\binom43$, multiplied by the number of all possible orders with the 3 selected elements, that is $3!$). A similar analysis shows that the number of possible patterns for the shapes of type $S_3$ is 12 (the number of ways to select 2 elements of 4, that is $\binom42$, multiplied by the number of ways to select 1 among the 2 remainder elements, that is 2).

It is easy to see that the following claims hold:
\begin{itemize}
\item A total pre-order of the type $T_1$ covers 4 patterns of type $S_4$ and 0 pattern of type $S_3$.
\item A total pre-order of the type $T_2$ covers 2 patterns of type $S_4$ and 2 patterns of type $S_3$.
\item A total pre-order of the type $T_3$ covers 2 patterns of type $S_4$ and 1 pattern of type $S_3$.
\item A total pre-order of the type $T_4$ covers 0 patterns of type $S_4$ and 3 patterns of type $S_3$.
\item A total pre-order of the type $D_2$ covers 2 patterns of type $S_4$ and 0 pattern of type $S_3$.
\item  A total pre-order of the type $D_1$ covers 0 patterns of type $S_4$ and 2 patterns of type $S_3$.
\end{itemize}

Now we analyse where could the four formulas having one model be mapped under the assignment representing $\nabla$.
This information is represented by a vector $(n_1,n_2,n_3,n_4)$ of integers greater than or equal to zero, where $n_i$ is the number of formulas
with one model having image a pre-order of type $T_i$. Note that $\sum_{i=1}^4n_i=4$.
We use the same kind of vectorial representation to see where could the six formulas having two models be mapped, that is, a vector
$(m_1,m_2)$ of integers greater than or equal to zero, where $m_i$ is the number the formulas
with two models having image a pre-order of type $D_i$. Note that $m_1+m_2=6$.

In Table \ref{tab3} we show the distributions that can cover the 24 patterns for the shapes of type  $S_4$  and  in the second column appears the maximal number of
  $S_3$ patterns covered by each distribution:

\begin{table}
\centering
\caption{Table of distributions covering the  24 shapes of type $S_4$ and maximal number of shapes $S_3$ covered by them}\label{tab3}
\smallskip
\begin{tabular}{cc}
\hline
Distributions & Maximal number  of patterns \\
 & of type $S_3$ covered \\
\hline\hline
 $\Set{(4,0,0,0), (0,6)}$ & 0\\
 $\Set{(4,0,0,0), (1,5)}$ & 2\\
 $\Set{(4,0,0,0), (2,4)}$ & 4\\
 $\Set{(3,1,0,0), (0,6)}$ & 2\\
 $\Set{(3,1,0,0), (1,5)}$ & 4\\
 $\Set{(3,0,1,0), (0,6)}$ & 1\\
 $\Set{(3,0,1,0), (1,5)}$ & 3\\
 $\Set{(3,0,0,1), (0,6)}$ & 3\\
 \hline
\end{tabular}
\end{table}

Therefore,  each distribution covering the 24 patterns of type $S_4 $, covers strictly less than   12 patterns of  type $S_3$.
This observation finishes the proof. \qed

{\bf Proof of Proposition \ref{prop5}:}
Assume that $\nabla$ is an ES basic fusion operator and let $\Phi\mapsto\succeq_\Phi$ be the basic assignment associated to $\nabla$ by Theorem \ref{teo1}. Let $i$ be an agent and $E_i$ be  an $i$-profile and $w$, $w'$, $w''$ be three different interpretations in $\mathcal{W_P}$.

\begin{enumerate}
  \item[(i)] If \mbox{$w'\succeq_{E_i} w''$}, by \sintSD, given $j$ in $\mathcal{S}$ there exists a $j$-profile $E_j$
  satisfying that \mbox{$B\big(\nabla(E_j,E_{w,w'})\big)\equiv\varphi_{w}$} and
  $B\big(\nabla(E_j,E_{w',w''})\big)\equiv B\big(\nabla(E_i,E_{w',w''})\big)$
  (the last equivalence uses (iii) of \sintSD\ if \mbox{$w'\simeq_{E_i} w''$} and uses (iv)
  of \sintSD\ if \mbox{$w'\succ_{E_i} w''$}). From this and \eqref{B-Rep}, we get  \mbox{$w\succ_{E_j}w'$}.
  Moreover, since \mbox{$w'\succeq_{E_i}w''$} and \linebreak $B\big(\nabla(E_j,E_{w',w''})\big)\equiv B\big(\nabla(E_i,E_{w',w''})\big)$, by \eqref{B-Rep},
  we have that \mbox{$w'\succeq_{E_j}w''$}. Thus, by transitivity,  \mbox{$w\succ_{E_j}w''$}.
  Therefore, by \eqref{B-Rep}, $B\big(\nabla(E_j,E_{w,w''})\big)\equiv\varphi_{w}$.
  If  \mbox{$w''\succeq_{E_i} w'$}, the proof is analogous, interchanging the roles of $w'$ and $w''$.

 \item[(ii)] If \mbox{$w\succeq_{E_i} w''$}, by \sintSD, given $k$ in $\mathcal{S}$ there exists a $k$-profile $E_k$ satisfying that \mbox{$B\big(\nabla(E_k,E_{w',w''})\big)\equiv\varphi_{w''}$} and $B\big(\nabla(E_k,E_{w,w''})\big)\equiv B\big(\nabla(E_i,E_{w,w''})\big)$
     (the last equivalence uses (iii) of \sintSD\ if $w\simeq_{E_i}w''$  and uses (iv)  of \sintSD\ if $w\succ_{E_i} w''$). Thus, since $w\succeq_{E_i}w''$, $B\big(\nabla(E_k,E_{w,w''})\big)\equiv B\big(\nabla(E_i,E_{w,w''})\big)$ and $B\big(\nabla(E_k,E_{w',w''})\big)\equiv\varphi_{w''}$, from \eqref{B-Rep}, we have that \mbox{$w\succeq_{E_k}w''$} and \mbox{$w''\succ_{E_k}w'$}. Then \mbox{$w\succ_{E_k} w'$}, that is, by \eqref{B-Rep}, $B\big(\nabla(E_k,E_{w,w'})\big)\equiv\varphi_{w}$. If $w''\succeq_{E} w$, the proof is analogous, interchanging the roles of $w$ and $w''$.\qed
\end{enumerate}

{\bf Proof of Proposition \ref{prop6}:}
Suppose that $\nabla$ satisfies \sintii,  \sintvii\ and \sintviii. In order to  show \sintU, we proceed by induction on the length of finite societies: we will show that for every $N$ in $\mathcal{F}^\ast(\mathcal{S})$, $N$ satisfies the following property:
\begin{description}
\item[({\bf U})] For all $N$-profile $\Phi$ and every $E$ in $\mathcal{E}$, if $E_i=E_j$, for all $i$, $j$ in $N$, then $B\big(\nabla(\Phi,E)\big)\equiv B\big(\nabla(E_i,E)\big)$, for all $i$ in $N$.
\end{description}

If $N$ has only one agent, {\bf (U)} holds trivially. Now, let $n$ be a positive integer, with $n>1$, and suppose that for all $D$ in $\mathcal{F}^\ast({\mathcal{S}})$, formed by a number of agents less than $n$, it satisfies {\bf (U)}.  Consider $N$ in $\mathcal{F}^\ast(\mathcal{S})$ with $n$ agents and let us see that $N$ also satisfies {\bf (U)}.
 Consider  an $N$-profile $\Phi$ such that $E_i=E_j$, for every couple $i$, $j$ in $N$, and let $E$ be an epistemic state. Consider $D=N\setminus\{k\}$, for some $k$ in $N$. Thus, if $i$ is any agent in $D$,  from {\bf (U)} we have that $B\big(\nabla(\Phi\upharpoonright_D,E)\big)\equiv
B\big(\nabla(E_i,E)\big)$. Moreover, since $E_k=E_i$, by \sintii\ we have that  $B\big(\nabla(E_k,E)\big)\equiv B\big(\nabla(E_i,E)\big)$. Thus,
$B\big(\nabla(\Phi\upharpoonright_D,E)\big)\wedge B\big(\nabla(E_k,E)\big)\equiv B\big(\nabla(E_i,E)\big)$, for all $i$ in $N$. From the last equivalence, the result follows using \sintvii\ and \sintviii. \qed

{\bf Proof of Proposition \ref{prop7}:}
Let us suppose $N$ in $\mathcal{F}^\ast(\mathcal{S})$ and an $N$-profile $\Phi$ such that $E_i=E_j$ for any couple $i$, $j$ in $N$.
\begin{description}
    \item[(i)$\Rightarrow$ (ii)] Trivial.

    \item[(ii)$\Rightarrow$ (iii)] Let $w$, $w'$ be a pair of interpretations and consider $E$ an epistemic state such that $\model{B(E)}=\{w,w'\}$. Since $E_i=E_j$, for any pair of agents $i$, $j$ in $N$,  from (ii) we have that $B\big(\nabla(\Phi,E)\big)\equiv B\big(\nabla(E_i,E)\big)$, for all $i$ in $N$. Thus, by \eqref{B-Rep}, we have $w\succeq_\Phi w'$ {\it iff} $w\succeq_{E_i} w'$, for all $i$ in $N$.

    \item[(iii)$\Rightarrow$ (i)] Let $E$ be an epistemic state. Since $E_i=E_j$, for any pair of agents $i$, $j$ in $N$, by Property ({\it u}) we have $\succeq_\Phi=\succeq_{E_i}$ for all $i$ in $N$. From this, by \eqref{B-Rep},  $B\big(\nabla(\Phi,E)\big)\equiv B\big(\nabla(E_i,E)\big)$, for all $i$ in $N$.\qed
    \end{description}

{\bf Proof of Proposition \ref{prop16}:}
Let $\nabla$ be an ES combination operator, and suppose that it satisfies \sintvii\ and \sintviiiw. In order  to show that \sintP\ holds, we will proceed by induction on the length of finite societies. Indeed, we will show that, for all $N$ in $\mathcal{F}^\ast(\mathcal{S})$, $N$ satisfies the following:
\begin{itemize}
  \item[\bf (P)] For any $N$-profile $\Phi$, any pair of epistemic states $E$, $E'$ in $\mathcal{E}$, if $B\big(\nabla(E_i,E)\big)\wedge B(E')\vdash\bot$ for all $i$ in $N$, and $\Bigwedge_{i\in N} B\big(\nabla(E_i,E)\big)\not\vdash\bot$, then $B\big(\nabla(\Phi,E)\big)\wedge B(E')\vdash\bot$.
\end{itemize}

If $N$ has a sole agent, the result is trivial. Now, assume $n>1$ a positive integer and suppose that, for all
$D$ in $\mathcal{F}^\ast(\mathcal{S})$ with less than $n$ agents, $D$ satisfies {\bf(P)}.
Consider $N$ in $\mathcal{F}^\ast(\mathcal{S})$ with $n$ agents and let us show that $N$ also satisfies {\bf(P)}.
Let $\Phi$ be an $N$-profile and $E$, $E'$ a pair of epistemic states such that $B\big(\nabla(E_i,E)\big)\wedge B(E')\vdash\bot$ for all $i$ in $N$
and $\Bigwedge_{i\in N} B\big(\nabla(E_i,E)\big)\not\vdash\bot$.
Let $j$ be an agent in $N$ and let us consider $D=N\setminus\{j\}$.
Thus, $B\big(\nabla(E_i,E)\big)\wedge B(E')\vdash\bot$ for all $i$ in $D$ and $\Bigwedge_{i\in D} B\big(\nabla(E_i,E)\big)\not\vdash\bot$.
From this  and the induction hypothesis (\ie\
{\bf(P)} for the profile $\Phi\upharpoonright_{_D}$), we have that $B\big(\nabla(\Phi\upharpoonright_{_D},E)\big)\wedge B(E')\vdash\bot$.
Moreover, by \sintvii, it is easy to see that $\Bigwedge_{i\in D} B\big(\nabla(E_i,E)\big)\vdash B\big(\nabla(\Phi\upharpoonright_{_D},E)\big)$.
Then, $\Bigwedge_{i\in N} B\big(\nabla(E_i,E)\big)\vdash B\big(\nabla(\Phi\upharpoonright_{_D},E)\big)\wedge B\big(\nabla(E_j,E)\big)$,
and since $\Bigwedge_{i\in N} B\big(\nabla(E_i,E)\big)\not\vdash\bot$
then  \linebreak $B\big(\nabla(\Phi\upharpoonright_{_D},E)\big)\wedge B\big(\nabla(E_j,E)\big)\not\vdash\bot$.
Therefore, by \sintviiiw, we have that $B\big(\nabla(\Phi,E)\big)\vdash B\big(\nabla(\Phi\upharpoonright_{_D},E)\big)\vee B\big(\nabla(E_j,E)\big)$.
From this and the fact that $\big[B\big(\nabla(\Phi\upharpoonright_{_D},E)\big)\vee B\big(\nabla(E_j,E)\big)\big]\wedge B(E')\vdash\bot$,
we get $B\big(\nabla(\Phi,E)\big)\wedge B(E')\vdash\bot$, as desired.\qed

{\bf Proof of Proposition \ref{prop8}:}
Let $N$ be in $\mathcal{F}^\ast(\mathcal{S})$, $\Phi$ be  an $N$-profile and $E$, $E'$ be a pair of epistemic states.

\begin{description}
    \item[(i)$\Rightarrow$ (ii)] Suppose that $B(E)$ has at  most two models and assume that $B\big(\nabla(E_i,E)\big)\wedge B (E')\vdash\bot$, for all $i$ in $N$.
        We want to show that $B\big(\nabla(\Phi,E)\big)\wedge B (E')\vdash\bot$.
        If $B(E)\wedge B(E')\vdash\bot$, the result follows straightforwardly from \sinti. Now suppose that $B(E)\wedge B(E')\not\vdash\bot$ and  let $w'$ be a model of $B(E)\wedge B(E')$. If $w'$ were the sole model of $B(E)$, then, by \sinti, $w'$ were also a model of $B\big(\nabla(E_i,E)\big)$, for each $i$ in $N$.  Thus, if we consider $j$ in $N$, we have that $B\big(\nabla(E_j,E)\big)\wedge B(E')\not\vdash\bot$, a contradiction.
        Thus, $B(E)$ has two models. Let
         $w$ be the other model of $B(E)$, in particular $w\neq w'$. Since $B\big(\nabla(E_i,E)\big)\wedge B(E')\vdash\bot$,  we have, by \sinti, that   $w$ is the sole model of $B\big(\nabla(E_i,E)\big)$, for all $i$ in $N$. Therefore, $\Bigwedge_{i\in N} B\big(\nabla(E_i,E)\big)\not\vdash\bot$. Then, by  \sintP, $B\big(\nabla(\Phi,E)\big)\wedge B(E')\vdash\bot$.

    \item[(ii)$\Rightarrow$ (iii)] Let $w$, $w'$ be a pair of interpretations which satisfies $w\succ_{E_i} w'$, for all
        $i$ in $N$, and assume that $E$, $E'$ are epistemic states which satisfy $\model{B(E)}=\{w,w'\}$ and $\model{B(E')}=\{w'\}$. Thus, by \eqref{B-Rep} we have that for all $i$ in $N$,  $B\big(\nabla(E_i,E)\big)\wedge B(E')\vdash\bot$. From this and the assumption that (ii)
        holds, we get $B\big(\nabla(\Phi,E)\big)\wedge
        B(E')\vdash\bot$, and then, necessarily, $\model{B\big(\nabla(\Phi,E)\big)}=\{w\}$. By \eqref{B-Rep}, this last equality entails $w\succ_\Phi w'$.

    \item[(iii)$\Rightarrow$ (i)] Let us suppose that $B\big(\nabla(E_i,E)\big)\wedge B (E')\vdash\bot$, for all $i$ in $N$,  and $\Bigwedge_{i\in N} B\big(\nabla(E_i,E)\big)$ is consistent. Towards a contradiction, suppose that $B\big(\nabla(\Phi,E)\big)\wedge B(E')\not\vdash\bot$. Let $w$, $w'$ be  interpretations such that  $w\models\Bigwedge_{i\in N} B\big(\nabla(E_i,E)\big)$ and $w'\models B\big(\nabla(\Phi,E)\big)\wedge B(E')$. By \sinti, we have that both $w$ and $w'$ are models of $B(E)$. Moreover, note that if $i$ is an agent in $N$,  $w'\not\models B\big(\nabla(E_i,E)\big)$, because $w'\models B(E')$ and $B\big(\nabla(E_i,E)\big)\wedge B (E')\vdash\bot$.
        From this and \eqref{B-Rep}, it follows that $w\succ_{E_i}w'$, for all $i$ in $N$. Hence, by Property {\bf({\em p})} we have  $w\succ_\Phi w'$. From this and \eqref{B-Rep} again, it follows that $w'\not\models
        B\big(\nabla(\Phi,E)\big)$, a contradiction.\qed
    \end{description}

{\bf Proof of Proposition \ref{prop9}:}
Let $N$ be a finite society of agents in $\mathcal{S}$ and $\Phi$, $\Phi'$ be a pair of $N$-profiles.
    \begin{description}
    \item[(i)$\Rightarrow$ (ii)] Let $E$ be an epistemic state such that  $\big|B(E)\big|\leq 2$ and assume that $B\big(\nabla(E_i,E)\big)\equiv B\big(\nabla(E_i',E)\big)$, for each agent $i$ in $N$. We want to show that $B\big(\nabla(\Phi,E)\big)\equiv B\big(\nabla(\Phi',E)\big)$.
        By \sintI, it is enough to prove  that, for every epistemic state $E'$ in $\mathcal{E}$, if $B(E')\vdash B(E)$, then $B\big(\nabla(E_i,E')\big)\equiv B\big(\nabla(E_i',E')\big)$, for each agent $i$ in $N$. Thus, suppose that  $E'$ is an epistemic state such that $B(E')\vdash B(E)$ and let $i$ be any agent in $N$. If  $B(E')$ has exactly one model, the result follows directly from \sinti.
        Otherwise, $B(E')\equiv B(E)$. By \sintii\ we have that $B\big(\nabla(E_i,E)\big)\equiv B\big(\nabla(E_i,E')\big)$ and $B\big(\nabla(E_i',E)\big)\equiv B\big(\nabla(E_i',E')\big)$. From these equivalences and the fact that
        $B\big(\nabla(E_i,E)\big)\equiv B\big(\nabla(E_i',E)\big)$, we have
         $B\big(\nabla(E_i,E')\big)\equiv B\big(\nabla(E_i',E')\big)$.

    \item[(ii)$\Rightarrow$ (iii)] Assume that ({ii}) holds. Let  $w$, $w'$ be a pair of models and suppose that
        for each  $i$ in $N$, $\succeq_{E_i}\upharpoonright_{\{w,w'\}}=\succeq_{E_i'}\upharpoonright_{\{w,w'\}}$. Let $E$ be an epistemic state such that $\model{B(E)}=\{w,w'\}$. From \eqref{B-Rep}, we have
        $B\big(\nabla(E_i,E)\big)\equiv B\big(\nabla(E_i',E)\big)$, for all $i$ in $N$. Hence, by the assumption that (ii) holds, $B\big(\nabla(\Phi,E)\big)\equiv B\big(\nabla(\Phi',E)\big)$. Finally, by \eqref{B-Rep},
        $\succeq_{\Phi}\upharpoonright_{\{w,w'\}}=\succeq_{\Phi'}\upharpoonright_{\{w,w'\}}.$

    \item[(iii)$\Rightarrow$ (i)] Let $E$ be an epistemic state. Suppose that for every epistemic state $E'$, with \mbox{$B(E')\vdash B(E)$},
    we have \linebreak $B\big(\nabla(E_i,E')\big)\equiv B\big(\nabla(E_i',E')\big)$, for each  agent $i$ in $N$.
    We have to show that $B\big(\nabla(\Phi,E)\big)\equiv B\big(\nabla(\Phi',E)\big)$.
    In order to do this, it is enough to prove that $B\big(\nabla(\Phi,E)\big)\vdash B\big(\nabla(\Phi',E)\big)$
    (the proof of the converse is similar). Let $w$ be a model of $B\big(\nabla(\Phi,E)\big)$ and, towards a contradiction,
    suppose that $w\not\models B\big(\nabla(\Phi',E)\big)$. Let $w'$ be a model of $B\big(\nabla(\Phi',E)\big)$ and  $E'$ be an epistemic state with
        $\model{B(E')}=\{w,w'\}$. Thus, $B(E')\vdash B(E)$. Then, for any agent $i$  in $N$
        we have $B\big(\nabla(E_i,E')\big)\equiv B\big(\nabla(E_i',E')\big)$. Then, by \eqref{B-Rep},
        $\succeq_{E_i}\upharpoonright_{\{w,w'\}}=\succeq_{E_i'}\upharpoonright_{\{w,w'\}}$,
        for all $i$ in $N$. Therefore, from Property {\bf ({\bf \it ind})}
        it follows that $\succeq_{_\Phi}\upharpoonright_{_{\{w,w'\}}}=\succeq_{_{\Phi'}}\upharpoonright_{_{\{w,w'\}}}$.
        From this and \eqref{B-Rep}, we have $B\big(\nabla(\Phi,E')\big)\equiv B\big(\nabla(\Phi',E')\big)$.
        Moreover, since $B(E')\equiv B(E)\wedge B(E')$, $B\big(\nabla(\Phi,E)\big)\wedge B(E')\not\vdash\bot$     and
        $B\big(\nabla(\Phi',E)\big)\wedge B(E')\not\vdash\bot$, by \sintiii\ and \sintiv, we have
        $B\big(\nabla(\Phi,E')\big)\equiv B\big(\nabla(\Phi,E)\big)\wedge B(E')$ and
        $B\big(\nabla(\Phi',E')\big)\equiv B\big(\nabla(\Phi',E)\big)\wedge B(E')$.
        Note that, since $w$ is a model of $B\big(\nabla(\Phi,E)\big)\wedge B(E')$, then using the last two equivalences,
        we get that $w$ is also a model of $B\big(\nabla(\Phi',E)\big)\wedge B(E')$, a contradiction.\qed
\end{description}

{\bf Proof of Proposition \ref{prop10}:}
Let $\nabla$ be an ES basic fusion operator which satisfies \sintU\ and \sintI. We want to show that it satisfies \sintP.
By Proposition \ref{prop8}, it is enough to prove that Property (ii) of this proposition holds.
In order to do this,  take $N$ in $\mathcal{F}^\ast(\mathcal{S})$, $\Phi$ an $N$-profile ,
and let $E$, $E'$ be a pair of epistemic states, such that $B(E)$ has at most two models.
Suppose that, for all $i$ in $N$, $B\big(\nabla(E_i,E)\big)\wedge B(E')\vdash\bot$.
Towards a contradiction, suppose that $B\big(\nabla(\Phi,E)\big)\wedge B(E')\not\vdash\bot$.
Let $w'$ be a model of $B\big(\nabla(\Phi,E)\big)\wedge B(E')$. Note that for every $i\in N$,
since \linebreak$B\big(\nabla(E_i,E)\big)\wedge B(E')\vdash\bot$, by \sinti,
there exists $w$, a model of $B(E)$, which is, necessarily, the  unique model of $B\big(\nabla(E_i,E)\big)$.
Let us fix an agent $j$ in $N$ and consider the $N$-profile $\Phi'$, the entries of which  are all equal to $E_j$,
\ie\ for all $i\in N$, $E_i'=E_j$.
Thus, $B\big(\nabla(E_i,E)\big)\equiv B\big(\nabla(E_i',E)\big)$ for all $i$ in $N$.
Thus, by \sintI\ and Proposition \ref{prop9}, we have  $B\big(\nabla(\Phi,E)\big)\equiv B\big(\nabla(\Phi',E)\big).$ Moreover, by  \sintU,
 $B\big(\nabla(\Phi',E)\big)\equiv B\big(\nabla(E_j,E)\big)$.
 Hence, $B\big(\nabla(\Phi,E)\big)\wedge B(E')\equiv B\big(\nabla(E_j,E)\big)\wedge B(E')$,
 and then $B\big(\nabla(E_j,E)\big)\wedge B(E')\not\vdash\bot$, a contradiction.\qed

{\bf Proof of Proposition \ref{prop11}:}
Let $N$ be an element of $\mathcal{F}^\ast(\mathcal{S})$.
\begin{description}
    \item[(i)$\Rightarrow$ (ii)] Straightforward.

    \item[(ii)$\Rightarrow$ (iii)] Assume that $d_N$ is the agent in $N$ which satisfies ({\it ii}). Consider $\Phi$ an $N$-profile and $w$, $w'$ a pair of interpretations such that $w\succ_{E_{d_N}} w'$. Let $E$ be an epistemic state such that $\model{B(E)}=\{w,w'\}$. Note that, by \eqref{B-Rep}, $w$ is the sole model of $B\big(\nabla(E_{d_N},E)\big)$. Thus, since $B\big(\nabla(\Phi,E)\big)\vdash B\big(\nabla(E_{d_N},E)\big)$, we have that $w$ is also the sole model of $B\big(\nabla(\Phi,E)\big)$. Therefore, from \eqref{B-Rep}, it follows that $w\succ_\Phi w'$.

    \item[(iii)$\Rightarrow$ (i)] Suppose that $\Phi\mapsto\succeq_\Phi$ satisfies {\bf({\em d})}, and let $d_N$  be the agent in $N$ satisfying Property {\bf({\em d})}. Let  $\Phi$ be an \linebreak$N$-profile,  $E$ be an epistemic state. Suppose, towards a contradiction, that $B\big(\nabla(\Phi,E)\big)\not\vdash B\big(\nabla(E_{d_N},E)\big)$. Thus, let us choose $w'$, a model of $B\big(\nabla(\Phi,E)\big)$, such that $w'\not\models B\big(\nabla(E_{d_N},E)\big)$. Then, by \eqref{B-Rep}, there exists $w$, a model of $B(E)$, such that $w\succ_{E_{d_N}} w'$. Then, by Property {\bf({\em d})}, $w\succ_\Phi w'$. By
        \eqref{B-Rep} again, we have $w'\not\models B\big(\nabla(\Phi,E)\big)$, a contradiction.\qed
    \end{description}

{\bf Proof of Proposition \ref{prop12}:}
Let $\nabla$ be an ES combination operator that satisfies \sintD. Let $N$ be a finite society of agents in $\mathcal{S}$,  $\Phi$ be an $N$-profile and $E$,  $E'$ be a pair of epistemic states in $\mathcal{S}$ such that $B\big(\nabla(E_i,E)\big)\wedge B(E')\vdash\bot$ for all $i$ in $N$, and $\Bigwedge_{i\in N} B\big(\nabla(E_{i},E)\big)\not\vdash\bot$. By \sintD,  there exists  $d_N$ in $N$ with $B\big(\nabla(\Phi,E)\big)\vdash B\big(\nabla(E_{d_N},E)\big)$. Thus, $B\big(\nabla(\Phi,E)\big)\wedge B(E')\vdash B\big(\nabla(E_{d_N},E)\big)\wedge B(E')$. Therefore $B\big(\nabla(\Phi,E)\big)\wedge B(E')\vdash\bot$,
as desired. \qed

{\bf Proof of Proposition \ref{propnew-dic}:}
Suppose that $D=\{d\}$ for an agent $d$ in $N$. Let us see that $d$ is an $N$-dictator, with respect to $\nabla$.
In order to see this, consider  an $N$-profile $\Phi$ and an epistemic state $E$. Towards a contradiction,
 suppose that $B\big(\nabla(\Phi,E)\big)\not\vdash B\big(\nabla(E_d,E)\big)$. Let $w$ be a model of
$B\big(\nabla(\Phi,E)\big)$ such that $w\not\models B\big(\nabla(E_d,E)\big)$, and consider $E_{w}$ an epistemic state
with $\model{B(E_w)}=\{w\}$. Thus, $B\big(\nabla(E_d,E)\big)\wedge B(E_w)\vdash\bot$, and since $\{d\}$ is decisive,
we have that $B\big(\nabla(\Phi,E)\big)\wedge B(E_w)\vdash\bot$. Therefore, $w\not\models B\big(\nabla(\Phi,E)\big)$, a contradiction.\qed

{\bf Proof of Proposition \ref{prop14}:}
Assume that ${E_{w,w'}}D^\nabla{E_{w'}}$, for a pair of epistemic states $E_{w,w'}$ and $E_{w'}$ with \linebreak$\model{B(E_{w,w'})}=\{w,w'\}$ and $\model{B(E_{w'})}=\{w'\}$. We want to show that, if $w''$ is an  interpretation  different from $w$  and $w'$, we have ${E_{w,w''}}D^{\nabla^\ast}{E_{w''}}$, for any epistemic states $E_{w,w''}$ and $E_{w''}$  such that $\model{B(E_{w,w''})}=\{w,w''\}$ and $\model{B(E_{w''})}=\{w''\}$.  In order to do this,  suppose that $\Phi$ is  an $N$-profile such that $B\big(\nabla(E_i,E_{w,w''})\big)\wedge B(E_{w''})\vdash\bot$, for all $i$ in $D$, and consider $E_{w',w''}$ an epistemic state such that $\model{B(E_{w',w''})}=\{w',w''\}$. Thus, since $\nabla$ satisfies \sintSD, by Proposition \ref{prop5}, there exists  an $N$-profile  $\Phi'$ which satisfies the following:

\begin{enumerate}
  \item[(i)] For all $i$ in $D$:
    \begin{itemize}
        \item $B\big(\nabla(E_i',E_{w,w'})\big)\equiv\varphi_w$, and
        \item $B\big(\nabla(E_i',E_{w',w''})\big)\equiv\varphi_{w'}$
    \end{itemize}

  \item[(ii)] For all $j$ in $N\setminus D$:
    \begin{itemize}
        \item $B\big(\nabla(E_j',E_{w,w'})\big)\equiv B\big(\nabla(E_j',E_{w',w''})\big)\equiv\varphi_{w'}$, and
        \item $B\big(\nabla(E_j',E_{w,w''})\big)\equiv B\big(\nabla(E_j,E_{w,w''})\big)$
    \end{itemize}
\end{enumerate}

Since $B\big(\nabla(E_i',E_{w,w'})\big)\equiv\varphi_w$ for all $i$ in $D$, and $B\big(\nabla(E_j',E_{w,w'})\big)\equiv\varphi_{w'}$ for all $j$ in
$N\setminus D$, then:
\begin{itemize}
  \item $B\big(\nabla(E_i',E_{w,w'})\big)\wedge B(E_{w'})\vdash\bot$, for all $i$ in $D$
  \item $B\big(\nabla(E_j',E_{w,w'})\big)\equiv B(E_{w'})$, for all $j$ in $N\setminus D$
  \item  $\Bigwedge_{i\in D} B\big(\nabla(E_i',E_{w,w'})\big)\not\vdash\bot$
\end{itemize}
Then, because of ${E_{w,w'}}D^\nabla{E_{w'}}$, we have that $B\big(\nabla(\Phi',E_{w,w'})\big)\wedge B(E_{w'})\vdash\bot$. Thus, by \sinti, we have \begin{equation}\label{eq:12}
B\big(\nabla(\Phi',E_{w,w'})\big)\equiv\varphi_{w}
\end{equation}
Moreover, for all $i$  in $N$, $B\big(\nabla(E_i',E_{w',w''})\big)\equiv\varphi_{w'}$, thus $B\big(\nabla(E_i',E_{w',w''})\big)\wedge B(E_{w''})\vdash\bot$. Then, by \sintP, we have  $B\big(\nabla(\Phi',E_{w',w''})\big)\wedge B(E_{w''})\vdash\bot$. From this and \sinti\ we get
\begin{equation}\label{eq:13}
  B\big(\nabla(\Phi',E_{w',w''})\big)\equiv\varphi_{w'}
\end{equation}
Thus, by the equivalences \eqref{eq:12} and \eqref{eq:13}, from \eqref{B-Rep} and the transitivity  of $\succeq_{\Phi'}$ it follows that
\begin{equation}\label{eq:14}
B\big(\nabla(\Phi',E_{w,w''})\big)\equiv \varphi_{w}
\end{equation}

Now, given $i$ in $D$, since $B\big(\nabla(E_i,E_{w,w''})\big)\wedge B(E_{w''})\vdash\bot$, by \sinti\ we have that $B\big(\nabla(E_i,E_{w,w''})\big)\equiv\varphi_w$. Moreover, $B\big(\nabla(E_i',E_{w,w'})\big)\equiv\varphi_w$ and
$B\big(\nabla(E_i',E_{w',w''})\big)\equiv\varphi_{w'}$ together imply $B\big(\nabla(E_i',E_{w,w''})\big)\equiv\varphi_w$. Therefore, for all $i$ in $D$,  $B\big(\nabla(E_i,E_{w,w''})\big)\equiv B\big(\nabla(E_i',E_{w,w''})\big)$. From this  and the fact that
$B\big(\nabla(E_j',E_{w,w''})\big)\equiv B\big(\nabla(E_j,E_{w,w''})\big)$ for all $j$ in $N\setminus D$, it follows that
$B\big(\nabla(E_i,E_{w,w''})\big)\equiv B\big(\nabla(E_i',E_{w,w''})\big)$ for all $i$ in $N$. Hence, by \sintI\ and Proposition \ref{prop9}, we have that $B\big(\nabla(\Phi,E_{w,w''})\big)\equiv B\big(\nabla(\Phi',E_{w,w''})\big)$. From this and Equivalence \eqref{eq:14}, it follows $B\big(\nabla(\Phi,E_{w,w''})\big)\wedge B(E_{w''})\vdash\bot$.\qed

{\bf Proof of Proposition \ref{prop15}:}
Let $w''$ be an interpretation, with $w''\not\models B(E_{w,w'})$, and $E_{w',w''}$ an epistemic state such that $\model{B(E_{w',w''})}=\{w',w''\}$. Suppose that ${E_{w,w'}}D^{\nabla}{E_{w'}}$ and let us see that ${E_{w',w''}}D^{\nabla^\ast}{E_{w'}}$. In order to do that, take  an \linebreak$N$-profile $\Phi$ such that $B\big(\nabla(E_i,E_{w',w''})\big)\wedge B(E_{w'})\vdash\bot$, for all $i$ in $D$. Let $E_{w,w''}$ be an epistemic state such that $\model{B(E_{w,w''})}=\{w,w''\}$. By \sintSD\ and Proposition \ref{prop5}, there exists an $N$-profile $\Phi'$  which satisfies the following:
\begin{enumerate}
  \item[(i)] $B\big(\nabla(E_i',E_{w,w'})\big)\equiv\varphi_w$ and $B\big(\nabla(E_i',E_{w,w''})\big)\equiv\varphi_{w''}$, for all $i$ in $D$.
  \item[(ii)] $B\big(\nabla(E_j',E_{w,w'})\big)\equiv\varphi_{w'}$, $B\big(\nabla(E_j',E_{w,w''})\big)\equiv\varphi_{w''}$ and
      $B\big(\nabla(E_j',E_{w',w''})\big)\equiv B\big(\nabla(E_j,E_{w',w''})\big)$,
      for all $j$ in $N\setminus D$.
\end{enumerate}
From these statements it follows that:
\begin{itemize}
  \item $B\big(\nabla(E_i',E_{w,w'})\big)\wedge B(E_{w'})\vdash\bot$, for all $i$ in $D$
  \item $B\big(\nabla(E_j',E_{w,w'})\big)\equiv B(E_{w'})$, for all $j$ in $N\setminus D$
  \item  $\Bigwedge_{i\in D} B\big(\nabla(E_i',E_{w,w'})\big)\not\vdash\bot$
\end{itemize}
Thus, since $_{E_{w,w'}}D_{E_{w'}}^{\nabla}$, we have  $B\big(\nabla(\Phi',E_{w,w'})\big)\wedge B(E_{w'})\vdash\bot$. Hence, by \sinti, it follows
\begin{equation}\label{eq:15}
B\big(\nabla(\Phi',E_{w,w'})\big)\equiv\varphi_{w}
\end{equation}
Note that $B\big(\nabla(E_i',E_{w,w''})\big)\wedge
B(E_{w})\vdash\bot$, for all $i$ in $N$. Hence, by \sintP, we have $B\big(\nabla(\Phi',E_{w,w''})\big)\wedge B(E_{w})\vdash\bot$. Thus, by \sinti, we have:
\begin{equation}\label{eq:16}
  B\big(\nabla(\Phi',E_{w,w''})\big)\equiv\varphi_{w''}
\end{equation}
From the equivalences  \eqref{eq:15} and \eqref{eq:16}, \eqref{B-Rep} and transitivity of $\succeq_{\Phi'}$, it follows that $B\big(\nabla(\Phi',E_{w',w''})\big)\equiv\varphi_{w''}$. Now, if $i$ is an agent in $D$, since $B\big(\nabla(E_i,E_{w',w''})\big)\wedge B(E_{w'})\vdash\bot$, \sinti\ implies that $B\big(\nabla(E_i,E_{w',w''})\big)\equiv\varphi_{w''}$.
   Moreover, $B\big(\nabla(E_i',E_{w,w'})\big)\equiv\varphi_w$ and $B\big(\nabla(E_i',E_{w,w''})\big)\equiv\varphi_{w''}$ together entail $B\big(\nabla(E_i',E_{w',w''})\big)\equiv\varphi_{w''}$.
   Thus, for all $i$ in $D$,
$B\big(\nabla(E_i,E_{w',w''})\big)\equiv B\big(\nabla(E_i',E_{w',w''})\big)$. Moreover,  $B\big(\nabla(E_i,E_{w',w''})\big)\equiv B\big(\nabla(E_i',E_{w',w''})\big)$, for all $i$ in $N$. Therefore, by \sintI, we have that $B\big(\nabla(\Phi,E_{w',w''})\big)\equiv B\big(\nabla(\Phi',E_{w',w''})\big)$. Then, by the equivalence \eqref{eq:16}, it follows that $B\big(\nabla(\Phi,E_{w',w''})\big)\wedge
B(E_{w'})\vdash\bot$.\qed

{\bf Proof of Theorem \ref{teo5}:}
Consider $w$, $w'$ a pair of interpretations and assume that $E_{w,w'}$ and $E_{w'}$ are epistemic states which satisfy
$\model{B(E_{w,w'})}=\{w,w'\}$,
$\model{B(E_{w'})}=\{w'\}$ and  ${E_{w,w'}}D^\nabla{E_{w'}}$. We want to show that $D$ is decisive, with respect to $\nabla$.
In order to do this, by Proposition \ref{prop13}, it is enough to show that for each pair of epistemic states, $E$, $E'$, such that $B(E)$
has at  most two models and $B(E')$ has  exactly one model, we have  that $ED^{\nabla^\ast}E'$. First, let $w''$ be an
interpretation different from $w$ and $w'$. We prove that
 for any pair of different interpretations  $w_1$, $w_2$ in
$\{w,w',w''\}$ we have that $E_{w_1,w_2}D^{\nabla^\ast}E_{w_2}$. For that purpose, we note the following:
\begin{itemize}
  \item Since $E_{w,w'}D^{\nabla}E_{w'}$, by Proposition \ref{prop14}, it follows  $E_{w,w''}D^{\nabla^\ast}E_{w''}$.

  \item Since $E_{w,w'}D^\nabla E_{w'}$, by Proposition \ref{prop15}, we have  $E_{w',w''}D^{\nabla^\ast}E_{w'}$.

  \item Since $E_{w,w''}D^{\nabla^\ast}E_{w''}$, by Observation \ref{obs1}, it follows $E_{w,w''}D^\nabla E_{w''}$.
  Thus, by Proposition \ref{prop14}, we have  $E_{w,w'}D^{\nabla^\ast}E_{w'}$.

  \item Since $E_{w',w''}D^{\nabla^\ast}E_{w'}$, by Observation \ref{obs1}, it follows $E_{w',w''}D^\nabla E_{w'}$. Thus, by Proposition \ref{prop14}, we have $E_{w,w''}D^{\nabla^\ast}E_{w}$.

  \item Since $E_{w,w''}D^\nabla E_{w''}$, by Proposition \ref{prop15}, it follows $E_{w',w''}D^{\nabla^\ast}E_{w''}$.

  \item Since $E_{w',w''}D^{\nabla^\ast}E_{w''}$, by Observation \ref{obs1}, we have $E_{w',w''}D^{\nabla} E_{w''}$. Then, by Proposition \ref{prop14}, we have  $E_{w,w'}D^{\nabla^\ast}E_{w}$.

\end{itemize}

Now, let $w_1$, $w_2$ be a pair of different interpretations. If either $w_1$ or $w_2$, is in $\{w,w'\}$, from the above statements we have $E_{w_1,w_2}D^{\nabla^\ast}E_{w_2}$. Hence, we suppose that both $w_1$ and $w_2$ are not elements of $\{w,w'\}$. Since $E_{w,w'}D^\nabla E_{w'}$, by Proposition \ref{prop15}, we have  $E_{w_1,w'}D^{\nabla^\ast}E_{w'}$, and thus $E_{w_1,w'}D^\nabla E_{w'}$. From this, by  Proposition \ref{prop14}, $E_{w_1,w_2}D^{\nabla^\ast}E_{w_2}$.\qed

{\bf Proof of Theorem \ref{teo6}:}
Let $N$ be a finite society of agents in $\mathcal{S}$. Define the following set:
$$\mathfrak{X}=\{X\subseteq N: X \mbox{ is decisive with respect to} \nabla\}.$$
 Since we have \sintP, by Observation \ref{obs1}, $N$ is decisive. Thus, $\mathfrak{X}$ is non empty. Let  $D$ be an element of  $\mathfrak{X}$ having minimal cardinality. Since  $D$ is decisive then $D\neq\emptyset$, as we noted in Observation \ref{obs1}. Thus, by the same observation, it is enough to see that $D$ has a single model in order to prove that $\nabla$ is dictatorial. Towards a contradiction, suppose that $D$ has at least two agents. Let $i$ be an agent in $D$.
Take three different interpretations $w$, $w'$, $w''$. Let $E_{w,w'}$, $E_{w,w''}$, $E_{w',w''}$, $E_{w}$ and $E_{w''}$ be epistemic states
such that  $\model{B(E_{w,w'})}=\{w,w'\}$, $\model{B(E_{w,w''})}=\{w,w''\}$, $\model{B(E_{w',w''})}=\{w',w''\}$,
$\model{B(E_{w})}=\{w\}$ and $\model{B(E_{w''})}=\{w''\}$.
By \sintSD, there exists an $N$-profile $\Phi$ which satisfies the following:
 \begin{itemize}
   \item $B\big(\nabla(E_i,E_{w,w'})\big)\equiv \varphi_w$ and $B\big(\nabla(E_i,E_{w',w''})\big)\equiv \varphi_{w'}$.
   \item $B\big(\nabla(E_j,E_{w,w''})\big)\equiv \varphi_{w''}$ and $B\big(\nabla(E_j,E_{w',w''})\big)\equiv \varphi_{w'}$, for all $j$ in $D\setminus\{i\}$.
   \item $B\big(\nabla(E_k,E_{w,w'})\big)\equiv \varphi_{w}$ and $B\big(\nabla(E_k,E_{w,w''})\big)\equiv \varphi_{w''}$, for all $k$ in $N\setminus D$.
 \end{itemize}
Note that, if $j$ is an agent in $D$, then $B\big(\nabla(E_j,E_{w',w''})\big)\wedge B(E_{w''})\vdash\bot$. Thus, since  $D$ is decisive, we have in particular $E_{w',w''}D^{\nabla^\ast} E_{w''}$. Therefore
\begin{equation}\label{eq17}
  B\big(\nabla(\Phi,E_{w',w''})\big)\wedge B(E_{w''})\vdash\bot
\end{equation}
We claim that $w\models B\big(\nabla(\Phi,E_{w,w'})\big)$.
Towards a contradiction, suppose that $B\big(\nabla(\Phi,E_{w,w'})\big)\wedge B(E_{w})\vdash\bot$. By Proposition \ref{prop5}, there exists   an $N$-profile $\Phi'$ such that
\begin{itemize}
  \item $B\big(\nabla(E_j',E_{w,w'})\big)\wedge B(E_{w})\vdash\bot$, for all $j$ in $D\setminus\{i\}$
  \item $B\big(\nabla(E_k',E_{w,w'})\big)\equiv B(E_{w})$, for all $k$ in $[N\setminus D]\cup\{i\}$
\end{itemize}
By \sinti, we have that $\Bigwedge_{j\in D\setminus\{i\}} B\big(\nabla(E_j',E_{w,w'})\big)\not\vdash\bot$, actually $B\big(\nabla(E_j',E_{w,w'})\big)\equiv \phi_{w'}$, for all $j\in D\setminus\{i\}$. Moreover, for all $j$ in \mbox{$D\setminus\{i\}$}, we have  $B\big(\nabla(E_j,E_{w,w''})\big)\equiv \varphi_{w''}$ and
\mbox{$B\big(\nabla(E_j,E_{w',w''})\big)\equiv \varphi_{w'}$}, and so \mbox{$B\big(\nabla(E_j,E_{w,w'})\big)\equiv \varphi_{w'}$}.
Thus, $B\big(\nabla(E_j,E_{w,w'})\big)\equiv B\big(\nabla(E_j',E_{w,w'})\big)$, for all $j$ in $D\setminus\{i\}$. Hence, $B\big(\nabla(E_j,E_{w,w'})\big)\equiv B\big(\nabla(E_j',E_{w,w'})\big)$
for all $j$ in $N$.  From this, by \sintI\ and Proposition  \ref{prop9}, it follows that $B\big(\nabla(\Phi,E_{w,w'})\big)\equiv B\big(\nabla(\Phi',E_{w,w'})\big)$. Thus, $B\big(\nabla(\Phi',E_{w,w'})\big)\wedge B(E_{w})\vdash\bot$. Hence,  $E_{w,w'}D\setminus\{i\}^\nabla E_{w}$. Then, by Theorem \ref{teo5}, we have that $D\setminus\{i\}$ is decisive with respect to $\nabla$, a contradiction with respect to the assumption of minimality of $D$.

By the  statement \eqref{eq17}   and \sinti,
it follows that $B\big(\nabla(\Phi,E_{w',w''})\big)\equiv\varphi_{w'}$.
From this and the fact that \linebreak$w\models B\big(\nabla(\Phi,E_{w,w'})\big)$, by \eqref{B-Rep} and the transitivity of $\succeq_\Phi$,
we have that $B\big(\nabla(\Phi,E_{w,w''})\big)\equiv\varphi_{w}$. Therefore, we have $B\big(\nabla(\Phi,E_{w,w''})\big)\wedge B(E_{w''})\vdash\bot$.
By \sintSD, there exists  an $N$-profile $\Phi''$ satisfying
 $B\big(\nabla(E_i'',E_{w,w''})\big)\wedge B(E_{w''})\vdash\bot$ and $B\big(\nabla(E_j'',E_{w,w''})\big)\equiv B(E_{w''})$, for all $j$ in
$N\setminus\{i\}$. Hence, $B\big(\nabla(E_j,E_{w,w''})\big)\equiv B\big(\nabla(E_j'',E_{w,w''})\big)$, for all $j$ in $N$. Then, by \sintI,
$B\big(\nabla(\Phi,E_{w,w''})\big)\equiv B\big(\nabla(\Phi'',E_{w,w''})\big)$.
Thus, $B\big(\nabla(\Phi'',E_{w,w''})\big)\wedge B(E_{w''})\vdash\bot$, and therefore, $_{E_{w,w''}}\{i\}_{E_{w''}}^{\nabla}$.
Then, by Theorem \ref{teo5},  $\{i\}$ is  decisive, contradicting again the minimality of $D$.

Now we know that $D$ is decisive and has cardinality one. We conclude by Proposition \ref{propnew-dic}.\qed

\begin{proposition}\label{prop17}
 $\nabla^{\pi}$ is an ES basic fusion operator that satisfies \sintv, \sintvii, \sintviiiw, \sintSD, \sintU, \sintP, \sintI, \sintD,
 but does not satisfy \sintvi\ and \sintviii.
\end{proposition}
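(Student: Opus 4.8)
The plan is to route everything through the semantic picture supplied by Theorem~\ref{teo1}, since $\nabla^\pi$ is built so that its behaviour at the level of beliefs is governed by a single total preorder. First I would identify the basic assignment representing $\nabla^\pi$: using equation~\eqref{eq:0} one computes
$$\model{B(\nabla^\pi(\Phi,\succeq))}=\max\big(\max(\succeq),\succeq_{d_N}\big)=\max\big(\model{B(\succeq)},\succeq_{d_N}\big),$$
so the assignment is simply $\Phi\mapsto\succeq_{d_N}$, where $d_N=\max(N)$. This map is structure preserving (on a singleton $\Set i$ one has $d_{\Set i}=i$, hence $\succeq_{E_i}=E_i$), and it is a basic assignment because $\Phi\equiv\Psi$ forces the two societies to have equal top agents carrying equal epistemic states. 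Thus, by the \emph{if} part of Theorem~\ref{teo1}, $\nabla^\pi$ is an ES basic fusion operator, represented by $\Phi\mapsto\succeq_{d_N}$ through \eqref{B-Rep}.

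Next I would dispatch \sintv, \sintvi, \sintvii, \sintviii\ and \sintviiiw\ via Proposition~\ref{prop4}, translating each into a property of $\Phi\mapsto\succeq_{d_N}$. The single combinatorial fact doing the work is that for any partition $\Set{N_1,N_2}$ of $N$ one has $d_N=\max(d_{N_1},d_{N_2})$, so the overall top agent lies in one block, say $N_2$, whence $\succeq_\Phi=\succeq_{\Phi\upharpoonright_{N_2}}$. This immediately yields Properties \assgiii\ and \assgivp\ (the $N_2$-hypothesis alone transfers to the whole group), so \sintvii\ and \sintviiiw\ hold; and \assgi\ holds since on singletons the assignment is the identity on epistemic states, giving \sintv. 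For the two failures I would give small counterexamples. For \sintvi\ (Property~\assgii), take $N=\Set{1,2}$ with $\model{B(E_1)}=\Set{w_1}$ and $\model{B(E_2)}=\Set{w_1,w_2}$: the conjunction is consistent with model $\Set{w_1}$, but $\max(\succeq_\Phi)=\max(\succeq_2)=\Set{w_1,w_2}$. For \sintviii\ (Property~\assgiv), take $N=\Set{1,2}$, $N_1=\Set 2$, $N_2=\Set 1$, with $w\simeq_2 w'$ and $w\succ_1 w'$; then $w\succeq_{\Phi\upharpoonright_{N_1}}w'$ and $w\succ_{\Phi\upharpoonright_{N_2}}w'$, yet $\succeq_\Phi=\succeq_2$ gives only $w\simeq_\Phi w'$.

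The social postulates I would settle using their semantic characterizations. Dictatorship \sintD\ is immediate from Proposition~\ref{prop11}(iii) with $d_N=\max(N)$, since $\succeq_\Phi=\succeq_{E_{d_N}}$ makes the required implication a tautology; this forces \sintP\ by Proposition~\ref{prop12} (or directly via Proposition~\ref{prop8}(iii), as $d_N\in N$). Unanimity \sintU\ follows from Proposition~\ref{prop7}(iii): if all $E_i$ coincide then $\succeq_\Phi=\succeq_{E_{d_N}}=\succeq_{E_i}$. Independence \sintI\ follows from Proposition~\ref{prop9}(iii): the hypothesis applied to the index $d_N$ gives $\succeq_{E_{d_N}}\upharpoonright_{\Set{w,w'}}=\succeq_{E_{d_N}'}\upharpoonright_{\Set{w,w'}}$, which is exactly $\succeq_\Phi\upharpoonright_{\Set{w,w'}}=\succeq_{\Phi'}\upharpoonright_{\Set{w,w'}}$, the index $d_N$ being unchanged when the society is fixed. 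Finally \sintSD\ follows from Observation~\ref{remark-shapes}: for any agent $i$, any triple $w,w',w''$ and any total preorder $\succeq$ over them, some total preorder over all of $\mathcal{W_P}$ restricts to $\succeq$, and taking that preorder as $E_i$ gives $\succeq=\succeq_{E_i}\upharpoonright_{\Set{w,w',w''}}$ because $\succeq_{E_i}=E_i$.

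The argument is thus largely bookkeeping once the assignment is pinned down; the step I would be most careful about is the partition identity $d_N=\max(d_{N_1},d_{N_2})$ and the asymmetry it creates between the two blocks. It is precisely this asymmetry that makes \assgiii\ and \assgivp\ hold while \assgiv\ fails, and keeping the quantifiers straight there (one block suffices for the weak versions, but the strict version can single out the block \emph{without} the top agent) is where an error would most easily slip in.
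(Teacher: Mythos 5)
Your proposal is correct and takes essentially the same route as the paper: both identify the representing basic assignment as $\Phi\mapsto\succeq_{d_N}$, translate the fusion postulates into Properties \assgi--\assgivp\ via Proposition~\ref{prop4} (with counterexamples of the same flavour for \assgii\ and \assgiv), and settle the social postulates through the semantic characterizations. The only differences are cosmetic: you prove \sintD\ directly from Proposition~\ref{prop11}(iii) and deduce \sintP\ via Proposition~\ref{prop12}, whereas the paper obtains \sintP\ from Proposition~\ref{prop16} and \sintD\ from Corollary~\ref{cor2}; and you spell out \sintSD, \sintU, \sintI\ explicitly where the paper appeals to the ``freedom for building preorders'' --- your version is, if anything, the more complete one.
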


\begin{proof}
  From the definition of $\Phi\mapsto\succeq_{d_N}$,  we have straightforwardly that it is a basic assignment which satisfies structure preserving and therefore,  Property \assgi\ holds. Now consider $N$ a finite society of agents in  $\mathcal{S}$, and let $\Set{N_1, N_2}$ be a partition of $N$.  Thus, since  $\max(N)=\max\{\max(N_1),\max(N_2)\}$, if $w\succeq_{d_{N_1}}w'$ and $w\succeq_{d_{N_2}}w'$, then  $w\succeq_{d_N}w'$, that is, Property \assgiii\ holds. Similarly we prove that the projective assignment satisfies Property \assgivp.

To show that  Property \assgii\ does not hold, we consider a profile $\Phi=(\succeq_1,\succeq_2,\succeq_3)$ such that $\model{B(\succeq_1)}=\model{B(\succeq_2)}=\{00\}$ and $\model{B(\succeq_3)}=\{00,01\}$. Thus, $\model{\Bigwedge_{i\in N} B(\succeq_i)}=\{00\}$, and therefore $\model{\Bigwedge_{i\in N}B(\succeq_i)}\neq\model{B(\succeq_3)}$. To see that $\Phi\mapsto\succeq_{d_N}$ does not satisfy Property \assgiv,
 take the previous profile and
 consider $N_1=\{1,2\}$ and $N_2=\{3\}$. Since $00\succ_2 01$ and $00\simeq_3 01$, then $00\succ_\Phi\upharpoonright_{N_1}01$ and $00\simeq_\Phi\upharpoonright_{N_2}01$, but $00\simeq_\Phi 01$.

Thus, from  Proposition \ref{prop4}, it follows that $\nabla^\pi$ satisfies \sintv, \sintvii\ and \sintviiiw, but \sintvi\ and \sintviii\ do not hold. Moreover, from Proposition \ref{prop16}, it follows that $\nabla^\pi$ satisfies \sintP. Moreover, due to the freedom for building preorders over interpretations and the properties of projection and lexicographical combination, it is easy to see that $\nabla^\pi$ also satisfies \sintSD, \sintU, \sintI.
From this and Corollary \ref{cor2}, we have that it also satisfies \sintD.
\qed
\end{proof}

\begin{proposition}\label{prop21}
Let $\geq$ be a linear order over $\Val$. Then $\nabla^{\pi_\geq}$ is an ES fusion basic operator that satisfies \sintvii,\sintviii, \sintviiiw, \sintU, \sintP, \sintI\ and \sintD, but \sintv, \sintvi, and \sintSD\ do not hold.
\end{proposition}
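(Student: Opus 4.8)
The plan is to argue entirely through the basic assignment $\Phi\mapsto\succeq^{\pi_\geq}_\Phi=\succeq^{\mathrm{lex}(\succeq_{d_N},\geq)}$ that Theorem~\ref{teo1} associates to $\nabla^{\pi_\geq}$. This reduces \sintv--\sintviii\ to Properties \assgi--\assgivp\ via Proposition~\ref{prop4}, while \sintU, \sintP, \sintI\ and \sintD\ follow from the semantic characterizations of Propositions~\ref{prop6}, \ref{prop16}, \ref{prop9} and \ref{prop11}. Two structural facts carry the whole argument. First, since $\geq$ is a \emph{linear} order, $\succeq^{\pi_\geq}_\Phi$ is a linear order over $\mathcal{W_P}$ for every profile. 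Second, for any partition $\Set{N_1,N_2}$ of $N$ one has $d_N=\max\{d_{N_1},d_{N_2}\}$, so $\succeq^{\pi_\geq}_\Phi=\succeq^{\pi_\geq}_{\Phi\upharpoonright_{N_i}}$ for whichever block $N_i$ contains the maximal agent; in particular $\succeq^{\pi_\geq}_\Phi=\succeq^{\pi_\geq}_{E_{d_N}}$.

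For the positive merging postulates I would proceed as follows. Property \assgiii\ is immediate from the second fact: if $w\succeq^{\pi_\geq}_{\Phi\upharpoonright_{N_1}}w'$ and $w\succeq^{\pi_\geq}_{\Phi\upharpoonright_{N_2}}w'$, then the inequality for the block carrying $d_N$ holds, and that preorder \emph{is} $\succeq^{\pi_\geq}_\Phi$, so $w\succeq^{\pi_\geq}_\Phi w'$. For Property \assgiv, assume $w\succeq^{\pi_\geq}_{\Phi\upharpoonright_{N_1}}w'$ and $w\succ^{\pi_\geq}_{\Phi\upharpoonright_{N_2}}w'$, so that $w\neq w'$. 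If $d_N\in N_2$ the strict inequality transfers directly; if $d_N\in N_1$ then $\succeq^{\pi_\geq}_\Phi=\succeq^{\pi_\geq}_{\Phi\upharpoonright_{N_1}}$ is linear, whence $w\succeq^{\pi_\geq}_{\Phi\upharpoonright_{N_1}}w'$ together with $w\neq w'$ forces $w\succ^{\pi_\geq}_\Phi w'$. Thus \assgiii\ and \assgiv\ hold, and Proposition~\ref{prop4} yields \sintvii\ and \sintviii\ (hence \sintviiiw). Then \sintP\ follows from Proposition~\ref{prop16} and \sintU\ from Proposition~\ref{prop6}.

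For the social postulates I would read off \sintI\ and \sintD\ from their characterizations. Since $\succeq^{\pi_\geq}_\Phi$ depends only on $\succeq_{d_N}$, Property $(\emph{ind})$ of Proposition~\ref{prop9} holds: instantiating its hypothesis at $i=d_N$ gives $\succeq^{\pi_\geq}_{E_{d_N}}\upharpoonright_{\{w,w'\}}=\succeq^{\pi_\geq}_{E'_{d_N}}\upharpoonright_{\{w,w'\}}$, and these are exactly $\succeq^{\pi_\geq}_\Phi\upharpoonright_{\{w,w'\}}$ and $\succeq^{\pi_\geq}_{\Phi'}\upharpoonright_{\{w,w'\}}$; so \sintI\ holds. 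Likewise Property $(\emph{d})$ of Proposition~\ref{prop11} holds with $d_N=\max(N)$, because $\succeq^{\pi_\geq}_\Phi=\succeq^{\pi_\geq}_{E_{d_N}}$ makes the two strict relations identical, giving \sintD.

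Finally the three failures. For \sintv\ (Property \assgi) I would exhibit two distinct epistemic states with the same linearization: the all-indifferent preorder and the preorder equal to $\geq$ both linearize to $\geq$, so the assignment is not injective on single-agent profiles. For \sintvi\ (Property \assgii) observe that $\max(\succeq^{\pi_\geq}_{E_i})=\max(\max(\succeq_i),\geq)$ by \eqref{eq:0} is a single interpretation, whereas $\model{B(E_i)}=\max(\succeq_i)$ may have two models, so the Maximality Condition already fails. For \sintSD\ I would invoke Observation~\ref{remark-shapes}: realizing an arbitrary preorder on three interpretations as a restriction of some $\succeq^{\pi_\geq}_{E_i}$ is impossible, since that preorder is always a strict linear order and so cannot reproduce any tie. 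The hard part will be the verification of \assgiv\ (equivalently \sintviii): it is precisely here that the linearity of $\succeq^{\pi_\geq}_\Phi$ is indispensable and that one must split according to which block contains $\max(N)$ — exactly the feature that separates $\nabla^{\pi_\geq}$ from the plain projective operator $\nabla^\pi$, which satisfies only \sintviiiw.
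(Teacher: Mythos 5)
Your proposal is correct and follows the paper's overall architecture (reduce \sintvii--\sintviiiw\ to Properties \assgiii, \assgiv, \assgivp\ via Proposition~\ref{prop4}, and read off the social postulates from the semantic characterizations), but it differs in a few worthwhile ways. The paper proves \assgiii\ and \assgiv\ by a two-case analysis on whether $w\succ_{d_{N_1}}w'$ or $w\simeq_{d_{N_1}}w'$ and $w>w'$, repeated for $N_2$; you instead isolate the single structural fact that $\succeq^{\pi_\geq}_\Phi$ depends only on $\succeq_{d_N}$, hence coincides with $\succeq^{\pi_\geq}_{\Phi\upharpoonright_{N_i}}$ for the block containing the maximal agent, which makes \assgiii\ immediate and reduces \assgiv\ to ``a linear order has no ties.'' This is cleaner and also makes your verifications of ({\em ind}) and ({\em d}) more literally faithful to the statements of Propositions~\ref{prop9} and~\ref{prop11}, where the hypotheses concern $\succeq_{E_i}$ (the assignment on the singleton profile) rather than the raw epistemic state $\succeq_i$ as the paper's write-up reads them. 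You also route \sintU\ through Proposition~\ref{prop6} (using \sintii, \sintvii, \sintviii) where the paper verifies Property ({\em u}) directly, and \sintP\ through Proposition~\ref{prop16} where the paper uses Proposition~\ref{prop10}; both are legitimate since the needed premises are already established. Your counterexamples for the failures are essentially the paper's (non-injectivity of linearization for \assgi; a belief base with two models clashing with a linear order's unique maximum for \assgii; impossibility of realizing ties for \sintSD\ via Observation~\ref{remark-shapes}), with the minor simplification that your \assgii\ counterexample uses a singleton profile, i.e.\ a direct failure of the Maximality Condition.
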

\begin{proof}
  First we will show that $\nabla^{\pi_\geq}$ satisfies \sintvii, \sintviii\ and \sintviiiw. In order to do this, by Proposition \ref{prop4}, it is enough to prove that $\Phi\mapsto\succeq^{\pi_\geq}_\Phi$, the assignment associated to $\nabla^{\pi_\geq}$, satisfies the properties \assgiii, \assgiv\ and \assgivp\ respectively.

   To show that Property \assgiii\ holds, we note that, for all profile $\Phi$, $\succeq^{\pi_\geq}_\Phi$ is a linear order. Thus, consider $N$ in $\mathcal{F}^\ast(\mathcal{S})$, $\Set{N_1, N_2}$ a partition of $N$, let  $\Phi$ be an $N$-profile and suppose $w$, $w'$ is a pair of interpretations in $\Val$ such  that $w\succeq^{\pi_\geq}_{\Phi\upharpoonright_{N_1}} w'$ and $w\succeq^{\pi_\geq}_{\Phi\upharpoonright_{N_2}} w'$. If $w=w'$ the result follows straightforwardly. Then, suppose  that $w\neq w'$ and let us note that  $w\succ^{\pi_\geq}_{\Phi\upharpoonright_{N_1}} w'$. From this we have two cases: $w\succ_{d_{N_1}}w'$, or $w\simeq_{d_{N_1}}w'$ and $w>w'$. Thus, suppose that $w\succ_{d_{N_1}}w'$ (the case in which $w\simeq_{d_{N_1}}w'$ and $w>w'$ is similar). Since $w\succ^{\pi_\geq}_{\Phi\upharpoonright_{N_2}} w'$, we also have either $w\succ_{d_{N_2}}w'$ or $w\simeq_{d_{N_2}}w'$ and $w>w'$.  On the one hand, if $w\succ_{d_{N_2}}w'$, by virtue of $\max(N)=\max\{d_{N_1},d_{N_2}\}$, we have $w\succ_{d_{N}}w'$. Therefore  $w\succ^{\pi_\geq}_\Phi w'$. On the other hand, if $w\simeq_{d_{N_2}}w'$ and $w>w'$, we have that $w\succeq_{d_{N}}w'$. From this and the fact that $w>w'$, it follows that $w\succ^{\pi_\geq}_\Phi w'$. Therefore, $\Phi\mapsto\succeq^{\pi_\geq}_\Phi$ satisfies Property \assgiii. Similarly we obtain that Properties \assgiv\ and \assgivp\ hold.

  In order to show that a quasilinearized projective operator satisfies \sintU, we will prove that $\Phi\mapsto\succeq^{\pi_\geq}_\Phi$ satisfies property {\bf({\em u})}
  given in Proposition \ref{prop7}. Thus, consider $N$ in $\mathcal{F}^\ast(\mathcal{S})$, $\Phi$ an $N$-profile and $\succeq'$ an \ee\ such that $\succeq_i=\succeq'$, for all $i$ in $N$. Thus, $\succeq^{\mathrm{lex}(\succeq_{d_N},\geq)}=\succeq^{\mathrm{lex}(\succeq',\geq)}$, showing that $\succeq^{\pi_\geq}_\Phi=\succeq^{\pi_\geq}_{\succeq'}$.

  To see that \sintI\ holds, it is enough to see that $\Phi\mapsto\succeq^{\pi_\geq}_\Phi$ satisfies property {\bf({\em ind})} given in Proposition \ref{prop9}. Thus, consider $N$ in $\mathcal{F}^\ast(\mathcal{S})$, $\Phi$ and $\Phi'$ a pair of $N$-profiles,  and $w$, $w'$  two interpretations such that $\succeq_i\upharpoonright_{\{w,w'\}}=\succeq_i\upharpoonright_{\{w,w'\}}'$, for all $i$ in $N$. From the fact that $\succeq_{d_N}\upharpoonright_{\{w,w'\}}=\succ_{d_N}'\upharpoonright_{\{w,w'\}}$, we have $\succeq^{\mathrm{lex}(\succeq_{d_N},\geq)}\upharpoonright_{\{w,w'\}}= \succeq^{\mathrm{lex}(\succeq_{d_N}',\geq)}\upharpoonright_{\{w,w'\}}$, that is, $\succeq^{\pi_\geq}_\Phi\upharpoonright_{\{w,w'\}}=\succeq^{\pi_\geq}_{\Phi'}\upharpoonright_{\{w,w'\}}$.

  Now, since $\nabla^{\pi_\geq}$ satisfies Unanimity and Independence conditions, by Proposition \ref{prop10} we have that \sintP\ holds.

  In order to show that $\nabla^{\pi_\geq}$ satisfies \sintD, we will prove that its assignment satisfies Property {\bf({\em d})} given in Proposition \ref{prop11}. Thus, assume $N$ is in $\mathcal{F}^\ast(\mathcal{S})$, $\Phi$ is an $N$-profile and suppose that $w$, $w'$ are interpretations in $\Val$ such that $w\succ_{d_N} w'$. Then $w\succ^{\mathrm{lex}(\succeq_{d_N},\geq)}_\Phi w'$, that is $w\succ^{\pi_\geq}_\Phi w'$, as desired.

  Now, given a profile $\Phi$, since $\succeq^{\pi_\geq}_\Phi$ is a linear order over $\Val$, then, for every total preorder $\succeq$ over $\Val$, we have that $\succeq^{\mathrm{lex}(\succeq,\succeq^{\pi_\geq}_\Phi)}$ is also a linear order over $\Val$, that is,  $\nabla^{\pi_\geq}(\Phi,\succeq)$ is a linear order over interpretations. From this it follows straightforwardly that $\nabla^{\pi_\geq}$ does not satisfy \sintSD.

  To show that $\nabla^{\pi_\geq}$ does not satisfy \sintv\ and \sintvi, by virtue of Proposition \ref{prop4}, it is enough to see that $\Phi\mapsto\succeq^{\pi_\geq}_\Phi$ does not satisfy the Properties \assgi\ and \assgii\ respectively.

  To show that Property \assgi\ does not hold, consider $w$, $w'$ a pair of interpretations in $\Val$, with $w>w'$, and $\succeq_1$, $\succeq_2$ a pair
   of \ees\ satisfying that $w\simeq_1 w'$, $w\succ_2 w'$ and $\succeq_1\upharpoonright_{\Val\setminus\{w,w'\}}=\succeq_2\upharpoonright_{\Val\setminus\{w,w'\}}$.
  Thus $\succeq_1\neq\succeq_2$, but $\succeq^{\mathrm{lex}(\succeq_1,\geq)}=\succeq^{\mathrm{lex}(\succeq_2,\geq)}$. That is, $\succeq^{\pi_\geq}_{\succeq_1}=\succeq^{\pi_\geq}_{\succeq_2}$.

  Finally, to see that Property \assgii\ does not hold, consider $w$, $w'$ a pair of interpretations in $\Val$ such that $w>w'$, $N=\{1,2\}$
  and $\Phi=(\succeq_1,\succeq_2)$ an $N$-profile  with $\max(\succeq_i)=\{w,w'\}$, for $i=1,2$.
  Thus, $\model{\Bigwedge_{i\in N} B(\succeq_i)}=\{w,w'\}$ and by definition we have $\max(\succeq^{\pi_\geq}_\Phi)=\{w\}$. This shows that $\model{\Bigwedge_{i\in N} B(\succeq_i)}\neq\max(\succ^{\pi_\geq}_\Phi)$.\qed

\end{proof}

\begin{proposition}\label{prop20}
Let $\geq$ be a linear order over $\Val$. Then $\nabla^{\mathrm{Q}\pi_\geq}$ is an ES fusion basic operator that satisfies \sintv, \sintviiiw, \sintSD, \sintP, \sintI\ and \sintD, but \sintvi, \sintvii, \sintviii, and \sintU\ do not hold.
\end{proposition}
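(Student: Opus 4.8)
The plan is to route everything through the semantic characterizations already established, translating each syntactic postulate into a property of the basic assignment $\Phi\mapsto\succeq^{\mathrm{Q}\pi_\geq}_\Phi$ (which the text already identifies as the basic assignment of $\nabla^{\mathrm{Q}\pi_\geq}$). The one fact I keep in mind throughout is the split behaviour of this assignment: on a singleton profile it equals the agent's own preorder $\succeq_i$ (structure preserving), whereas on any profile with at least two agents it equals the linear order $\succeq^{\mathrm{lex}(\succeq_{d_N},\geq)}$, with $d_N=\max(N)$.

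For the fusion postulates \sintv--\sintviii\ and \sintviiiw\ I would invoke Proposition \ref{prop4} and check Properties \assgi--\assgiv\ and \assgivp. Property \assgi\ is immediate from structure preservation on singletons, giving \sintv. For \sintviiiw\ I verify \assgivp: any partition $\Set{N_1,N_2}$ forces $|N|\geq 2$, so $\succeq_\Phi=\succeq^{\mathrm{lex}(\succeq_{d_N},\geq)}$ with $d_N=\max\{d_{N_1},d_{N_2}\}$; taking WLOG $d_N=d_{N_1}$, the strict premise on the cell $N_1$ propagates to $\succeq_\Phi$ (if $N_1$ is a singleton then $\succeq_{\Phi\upharpoonright_{N_1}}=\succeq_{d_N}$ and the first clause of lex applies; if not, then $\succeq_{\Phi\upharpoonright_{N_1}}=\succeq_\Phi$ already), so $w\succ_\Phi w'$. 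Conversely \assgii, \assgiii, \assgiv\ fail. For \assgii\ take $N=\{1,2\}$ with $\max(\succeq_1)=\max(\succeq_2)=\{w,w'\}$; then $\bigwedge_{i\in N}B(E_i)$ has two models while $\max(\succeq_\Phi)=\max(\max(\succeq_2),\geq)$ is a singleton, so \sintvi\ fails. For \assgiii\ and \assgiv\ a single configuration over $N=\{1,2\}$, $d_N=2$, serves: set $w\succ_1 w'$, $w\simeq_2 w'$ and $w'>w$ in $\geq$. With $N_1=\{1\},N_2=\{2\}$ the premise of \assgiii\ holds yet $w'\succ_\Phi w$; with $N_1=\{2\},N_2=\{1\}$ the premise of \assgiv\ holds yet again $w'\succ_\Phi w$. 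Hence \sintvii\ and \sintviii\ both fail.

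For the social postulates I would proceed as follows. For \sintSD\ I use Observation \ref{remark-shapes}: since an $i$-profile is a singleton and the assignment is structure preserving, realizing an arbitrary preorder on $\Set{w,w',w''}$ reduces to choosing an epistemic state (a total preorder on $\Val$) with the prescribed restriction, which always exists, so \sintSD\ holds. For \sintI\ I check Property (\emph{ind}) of Proposition \ref{prop9}: two $N$-profiles share the same $d_N$, and the restriction of $\succeq^{\mathrm{lex}(\succeq_{d_N},\geq)}$ to $\Set{w,w'}$ depends only on $\succeq_{d_N}\!\upharpoonright_{\Set{w,w'}}$ and the fixed order $\geq$, so agreement of the individual restrictions (in particular that of $d_N$) forces agreement of the aggregated restriction. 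For \sintD\ I verify Property (\emph{d}) of Proposition \ref{prop11} with $d_N=\max(N)$: if $w\succ_{d_N} w'$, the first clause of lex gives $w\succ_\Phi w'$ in both the singleton and the multi-agent case. Having \sintD, Proposition \ref{prop12} immediately yields \sintP.

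Finally, to see that \sintU\ fails I use the characterization (\emph{u}) of Proposition \ref{prop7}: take $N=\{1,2\}$ with $E_1=E_2=\succeq'$, where $\succeq'$ has at least one indifference $w\simeq' w'$ (possible since $|\Val|\geq 4$). Then $\succeq_{E_1}=\succeq'$, while $\succeq_\Phi=\succeq^{\mathrm{lex}(\succeq',\geq)}$ breaks the tie, so $\succeq_\Phi\neq\succeq_{E_1}$ and \sintU\ fails. The main obstacle I anticipate is the bookkeeping around the global dictator: one must consistently track which cell of a partition contains $d_N$ and separate the singleton case (structure preserving) from the multi-agent case (linearized), since it is exactly this mismatch that simultaneously drives the positive result \sintviiiw\ and the failures of \sintvii, \sintviii\ and \sintU.
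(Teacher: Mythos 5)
Your proposal is correct and follows essentially the same route as the paper: both verify or refute the semantic properties \assgi--\assgiv, \assgivp, ({\em u}), ({\em ind}), ({\em d}) of the assignment $\Phi\mapsto\succeq^{\mathrm{Q}\pi_\geq}_\Phi$ via Propositions \ref{prop4}, \ref{prop7}, \ref{prop9}, \ref{prop11} and derive \sintP\ from \sintD\ via Proposition \ref{prop12}, using the same two-agent counterexamples (an indifferent top agent whose tie is broken by $\geq$ against the other agent's strict preference, and a unanimous profile with a tie in the maximal set). The only difference is that you spell out the parts the paper defers with ``similar to Proposition \ref{prop21}'', including the singleton-versus-multi-agent case split for \assgivp, which the paper leaves implicit.
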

\begin{proof}
 The proof that $\nabla^{\mathrm{Q}\pi_\geq}$ satisfies \sintviiiw, \sintI\ and \sintD\, but  not \sintvi,
 is similar to that in Proposition \ref{prop20}. Now, in order to show that $\nabla^{\mathrm{Q}\pi_\geq}$ satisfies \sintv,
 by Proposition \ref{prop4}, it is enough to see that  $\Phi\mapsto\succeq^{\mathrm{Q}\pi_\geq}_\Phi$, the assignment associated
 to this operator, satisfies \assgi, but this  follows straightforwardly from the fact that $\Phi\mapsto\succeq^{\mathrm{Q}\pi_\geq}_\Phi$
 is structure preserving. Due to the freedom for building total preorders over $\Val$ and the definition of $\nabla^{\mathrm{Q}\pi_\geq}$
 it follows that it satisfies \sintSD. Moreover, since $\nabla^{\mathrm{Q}\pi_\geq}$ is a dictatorial operator, by Proposition \ref{prop12},
 we have that \sintP\ holds.

In order to prove that \sintvii\ and \sintviii\ do not hold, by Proposition \ref{prop4}, it is enough to
 see that $\Phi\mapsto\succeq^{\mathrm{Q}\pi_\geq}_\Phi$ does not satisfy Properties \assgiii\ and \assgiv\ respectively.
 Consider the finite society $N=\{1,2\}$, and its partition $N_1=\{1\}$, $N_2=\{2\}$. Let  $w$, $w'$ be a pair of interpretations in $\Val$  such that $w'>w$ and $\Phi=(\succeq_1,\succeq_2)$ be an $N$-profile such that  $w\succeq_1 w'$, $w\simeq_2 w'$. From the definition of the assignment we have $w\succ^{\mathrm{Q}\pi_\geq}_{\Phi\upharpoonright_{N_1}}w'$ and $w\succeq^{\mathrm{Q}\pi_\geq}_{\Phi\upharpoonright_{N_2}}w'$, but  $w'\succ^{\mathrm{Q}\pi_\geq}_\Phi w$.

Finally, to show that \sintU\ does not hold, consider $w$, $w'$ a pair of interpretations in $\Val$, such that $w>w'$.
Let $\succeq^\ast$ be a total preorder over $\Val$ such that $\max(\succeq^\ast)=\{w,w'\}$.
Define $\Phi=(\succeq_1,\succeq_2)$ by putting $\succeq_i=\succeq^\ast$,  for $i=1,2$.
Thus, by definition $w\succ^{\mathrm{Q}\pi_\geq}_\Phi w'$ and $\succeq^{\mathrm{Q}\pi_\geq}_{\succeq^\ast}=\succeq^\ast$ and, therefore, $\succeq^{\mathrm{Q}\pi_\geq}_\Phi\neq\succeq^{\pi_\geq}_{\succeq^\ast}$. The result follows from Proposition \ref{prop7}.\qed
\end{proof}

\begin{proposition}\label{prop18}
  $\nabla^{\Sigma-\mathrm{P}\pi}$  is an ES basic fusion operator that satisfies \sintv, \sintvi, \sintSD, \sintU, \sintP, and  \sintD, but does not satisfy \sintvii, \sintviii, \sintviiiw\ nor \sintI.
\end{proposition}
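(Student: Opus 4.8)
The plan is to route the entire proof through the semantic characterisations of Proposition~\ref{prop4} and of Propositions~\ref{prop7}, \ref{prop8}, \ref{prop9}, \ref{prop11}, reducing every syntactic postulate to a property of the associated basic assignment $\Phi\mapsto\succeq^{\Sigma-\mathrm{P}\pi}_\Phi=\succeq^{\mathrm{lex}(\succeq_{d_N},\succeq^\Sigma_\Phi)}$, where $d_N=\max(N)$. The one structural fact I would isolate first is that this assignment is \emph{structure preserving}: for a singleton profile $\succeq_i$ one has $\succeq_{d_N}=\succeq_i$ and $\succeq^\Sigma_{\succeq_i}=\succeq_i$, and since $\succeq^{\mathrm{lex}(\succeq_i,\succeq_i)}=\succeq_i$, the assignment returns $\succeq_i$ itself. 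This immediately yields Property~\assgi\ (distinct states give distinct preorders), hence \sintv\ by Proposition~\ref{prop4}(i); and, combined with Observation~\ref{remark-shapes} and the freedom to realise any total preorder over three worlds as the restriction of some epistemic state, it gives \sintSD.

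For the dictatorship-flavoured postulates I would verify Property~(\emph{d}) of Proposition~\ref{prop11} with witness $d_N$: if $w\succ_{\succeq_{d_N}} w'$ then the first coordinate of the lexicographic order already decides, so $w\succ^{\Sigma-\mathrm{P}\pi}_\Phi w'$; this gives \sintD, and then \sintP\ follows for free from Proposition~\ref{prop12}. For \sintU\ I would check Property~(\emph{u}) of Proposition~\ref{prop7}: when all inputs equal $\succeq'$ we get $\succeq_{d_N}=\succeq'$ and $\succeq^\Sigma_\Phi=\succeq'$ (the order induced by $n$ copies of the same rank coincides with $\succeq'$), so again $\succeq^{\Sigma-\mathrm{P}\pi}_\Phi=\succeq^{\mathrm{lex}(\succeq',\succeq')}=\succeq'$.

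The delicate positive case is \sintvi, i.e. Property~\assgii\ via Proposition~\ref{prop4}(ii). Using equation~\eqref{eq:0} I would compute $\max(\succeq^{\Sigma-\mathrm{P}\pi}_\Phi)=\max\bigl(\max(\succeq_{d_N}),\succeq^\Sigma_\Phi\bigr)=\max\bigl(\model{B(E_{d_N})},\succeq^\Sigma_\Phi\bigr)$, the $\succeq^\Sigma_\Phi$-maximal worlds among the dictator's belief models. When $\Bigwedge_{i\in N}B(E_i)\not\vdash\bot$, the operator $\nabla^\Sigma$ (a genuine ES fusion operator) satisfies \assgii, so the \emph{global} $\succeq^\Sigma_\Phi$-maxima are exactly $\model{\Bigwedge_{i\in N}B(E_i)}$; being common models, these lie in $\model{B(E_{d_N})}$ and $\succeq^\Sigma_\Phi$-dominate every world, so they already exhaust the maxima of $\succeq^\Sigma_\Phi$ inside $\model{B(E_{d_N})}$. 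This gives $\max(\succeq^{\Sigma-\mathrm{P}\pi}_\Phi)=\model{\Bigwedge_{i\in N}B(E_i)}$, which is \assgii.

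For the negative part I would exhibit explicit counterexamples over two propositional variables, writing $\Val=\{00,01,10,11\}$. To refute \sintvii, \sintviii\ and \sintviiiw\ simultaneously it suffices, by Proposition~\ref{prop4}(iii)--(v), to break \assgiii\ (which then also breaks \assgiv\ and \assgivp): take $N=\{1,2,3,4\}$ with partition $N_1=\{1,2\}$, $N_2=\{3,4\}$, and $w=00$, $w'=01$, arranging $w\simeq_4 w'$ (so the dictator is indifferent), $w\succ_2 w'$, $w\succ_3 w'$, and $w'\succ_1 w$ with a rank gap large enough that the total sum over $N$ favours $w'$; then $w\succ_{\Phi\upharpoonright_{N_1}}w'$ and $w\succ_{\Phi\upharpoonright_{N_2}}w'$ (each sub-society decided by its own top agent or local sum) yet $w'\succ_\Phi w$. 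To refute \sintI\ I would use Property~(\emph{ind}) of Proposition~\ref{prop9}: with $N=\{1,2,3\}$, $w\simeq_3 w'$ for the dictator, $w\succ_1 w'$ and $w'\succ_2 w$, I can keep every individual restriction $\succeq_{E_i}\!\upharpoonright_{\{w,w'\}}$ fixed across two profiles while tuning the \emph{magnitudes} of the rank gaps so that the sum net-favours $w$ in one profile and $w'$ in the other, flipping $\succeq_\Phi\!\upharpoonright_{\{w,w'\}}$. The main obstacle is the bookkeeping in these last constructions and in \sintvi: one must track the canonical ranks of worlds \emph{off} $\{w,w'\}$ to control the sum, while respecting that indifferent agents contribute equal ranks and that each sub-society's tie-break is governed by its own $\max$-agent.
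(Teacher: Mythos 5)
Your proposal is correct and, for almost every postulate, follows the same route as the paper: structure preservation giving \assgi\ (hence \sintv) and \sintSD, Property ({\em d}) with witness $d_N$ giving \sintD\ and then \sintP\ via Proposition~\ref{prop12}, Property ({\em u}) giving \sintU, the $\max(\model{B(E_{d_N})},\succeq^\Sigma_\Phi)$ computation giving \assgii\ (hence \sintvi), and a four-agent strict/strict/reversed counterexample killing \assgiii, \assgiv\ and \assgivp\ at once. The one genuinely different step is the refutation of \sintI: the paper transfers the known failure of ({\em ind}) for $\nabla^\Sigma$ by appending a flat agent $j>d_D$ on top of the witnessing $\nabla^\Sigma$-profiles, so that the dictator coordinate is trivial and the $\Sigma$ coordinate inherits the violation; you instead build a fresh three-agent counterexample by fixing the restrictions to $\{w,w'\}$ and tuning the rank gaps. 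Both work; the paper's transfer argument spares you the rank bookkeeping you flag as the main obstacle (it only needs the already-established non-independence of $\nabla^\Sigma$), while your direct construction is self-contained and makes the mechanism visible. Two small cautions: your parenthetical ``break \assgiii\ (which then also breaks \assgiv\ and \assgivp)'' is not a valid implication in general --- a failure of \assgiii\ with an indifference in one subgroup would not refute \assgivp\ --- but your actual witness has strict preferences in both subgroups and a reversed strict preference in the union, which does refute all three; and in both of your counterexamples you must (as you note) verify that the canonical ranks, which are bounded by $|\Val|-1=3$, actually admit the required gap, e.g.\ a single agent with gap $3$ against two agents with gap $1$ each.
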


\begin{proof}
In order too see that $\nabla^{\Sigma-\mathrm{P}\pi}$ satisfies \sintv\ and \sintvi, by Proposition \ref{prop4}, it is enough to see that
$\Phi\mapsto \succeq_\Phi^{\Sigma-\mathrm{P}\pi}$ satisfies Properties \assgi\ and \assgii.
Assume $N$ in $\mathcal{F}^\ast(\mathcal{S})$ and consider  an $N$-profile $\Phi$. Let  $w$ and $w'$ be  a pair of interpretations. Since $\succeq_\succeq^\Sigma=\succeq$ and $\succeq^{\mathrm{lex}(\succeq,\succeq)}=\succeq$ we have that $\succeq_\succeq^{\Sigma-\mathrm{P}\pi}=\succeq$. Thus $\Phi\mapsto\succeq_\Phi^{\Sigma-\mathrm{P}\pi}$ is structure preserving. Therefore, it satisfies Property \assgi.

In order to show that Property \assgii\ holds, we first suppose that $w$ is a model of $\Bigwedge_{i\in N} B(\succeq_i)$. Thus,
$w\models B(\succeq_{d_N})$ and by the maximality condition we have that $w\succeq_{d_N}w'$.
Since   $\Phi\mapsto\succeq_\Phi^\Sigma$ satisfies \assgii, $w\succeq_\Phi^\Sigma w'$.
Thus, $w\succeq_\Phi^{\Sigma-\mathrm{P}\pi}w'$, that is, $w$ is in $\max(\succeq_\Phi^{\Sigma-\mathrm{P}\pi})$.
Suppose now, towards a contradiction, that $w$ is in $\max(\succeq_\Phi^{\Sigma-\mathrm{P}\pi})$
and $w\not\models\Bigwedge_{i\in N} B(\succeq_i)$. Thus, if $w'\models\Bigwedge_{i\in N} B(\succeq_i)$,  by \assgii\ we have that
$w'\succ_\Phi^\Sigma w$. On the other hand,  since $w\succeq_\Phi^{\Sigma-\mathrm{P}\pi} w'$ it
satisfies  either $w\succ_{d_N}w'$ (in this case, by the maximality condition, $w'\not\models B(\succeq_{d_N})$) or $w\simeq_{d_N}w'$
and $w\succeq_\Phi^\Sigma w'$. In both cases we get a contradiction. Thus Property \assgii\ is proved.

\begin{figure}[t]
\begin{center}
\includegraphics{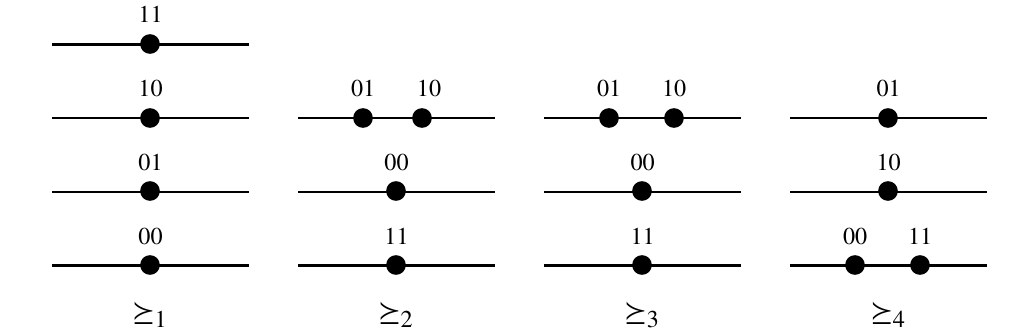}
\end{center}
\caption{The profile of epistemic states.}\label{fig2.6}
\end{figure}

In order to see that $\nabla^{\Sigma-\mathrm{P}\pi}$ does not satisfy \sintvii, \sintviii\ and \sintviiiw, by Proposition \ref{prop4}, it is enough to see that
$\Phi\mapsto \succeq_\Phi^{\Sigma-\mathrm{P}\pi}$ does not satisfy Properties \assgiii, \assgiv\ and \assgivp.
Consider the set of interpretations $\mathcal{W_P}=\{00,01,10,11\}$ of a finite propositional language $\mathcal{L_P}$ with two propositional variables.
Let $N=\Set{1,2,3,4}$. Let $\Phi$ be the     $N$-profile  represented in figure \ref{fig2.6}. Consider the partition of $N$ given by $N_1=\{1,2\} $, $N_2=\{3,4\}$. Let us note the following:
\begin{itemize}
  \item Since $00\succ_2 11$,  we have $00\succ^{\Sigma-\mathrm{P}\pi}_{\Phi\upharpoonright_{_{N_1}}} 11$
  \item Since $00\succ_3 11$ and $00\simeq_4 11$ then $00\succ^{\Sigma}_{\Phi\upharpoonright_{_{N_2}}} 11$. Hence $00\succ^{\Sigma-\mathrm{P}\pi}_{\Phi\upharpoonright_{_{N_2}}} 11$
\end{itemize}
However,  since $11\simeq_4 00$ and easy calculations lead to $11\succ_\Phi^\Sigma 00$, we have that  $11\succ_\Phi^{\Sigma-\mathrm{P}\pi}00$. Thus, \assgiii, \assgiv\ and \assgivp\ do not hold.

 Moreover, due to the freedom for building total preorders and because the basic assignment preserves the structure of epistemic states, we have that the $\Sigma$-pseudoprojective ES basic fusion operator satisfies \sintSD.

In order to show that \sintU\ holds, consider $N$ in $\mathcal{F}^\ast(\mathcal{S})$, an $N$-profile $\Phi$, $\succeq$
an epistemic state such that $\succeq_i=\succeq$ for each $i$ in $N$. Thus $\succeq_\Phi^\Sigma=\succeq'$, and
therefore $\succeq_\Phi^{\Sigma-\mathrm{P}\pi}=\succeq^{\mathrm{lex}(\succeq,\succeq)}=\succeq$. From this, the result follows
using   Proposition \ref{prop7}.

Straightforward from the definition, we can see that $\nabla^{\Sigma-\mathrm{P}\pi}$ has a dictator. Thus, \sintD\ holds. Then, by Proposition \ref{prop12},
\sintP\ holds.

We know that $\nabla^{\Sigma}$ does not satisfy \sintI. Thus, by Proposition \ref{prop9}, there exists a society $D$, two $D$-profiles $\Phi$ and $\Phi'$
and two interpretations $w$ and $w'$ such that $\succeq_i\upharpoonright_{\Set{w,w'}}=\succeq'_i\upharpoonright_{\Set{w,w'}}$ for each $i\in D$, but
$w\succ^\Sigma_\Phi w'$ and $w'\succeq^\Sigma_{\Phi'} w$.
 Choose an agent $j$ such that $j>d_D$.
 Consider now the society $N=D\cup\Set j$ and let $\succeq_j$ be the flat preorder (all interpretations are indifferent). Let $\Phi_1=\Phi\sqcup \succeq_j$ and $\Phi_2=\Phi'\sqcup \succeq_j$. It is clear that
$\succeq_i\upharpoonright_{\Set{w,w'}}=\succeq'_i\upharpoonright_{\Set{w,w'}}$ for each $i\in N$. We can also check that
$\succeq_{\Phi_1}^{\Sigma-\mathrm{P}\pi}=\succ^\Sigma_\Phi$ and $\succeq_{\Phi_2}^{\Sigma-\mathrm{P}\pi}=\succ^\Sigma_{\Phi'}$. Therefore,
$w\succ^{\Sigma-\mathrm{P}\pi}_{\Phi_1} w'$ and $w'\succeq^{\Sigma-\mathrm{P}\pi}_{\Phi_2} w$. That is, again by Proposition \ref{prop9}, \sintI\ does not hold.
\qed
\end{proof}

\end{document}